\def\eqref#1{equation~\ref{#1}}
\def\1{\bm{1}}
\def\vb{{\bm{b}}}
\def\mA{{\bm{A}}}
\def\mB{{\bm{B}}}
\def\mC{{\bm{C}}}
\def\mD{{\bm{D}}}
\def\mF{{\bm{F}}}
\def\mH{{\bm{H}}}
\def\mI{{\bm{I}}}
\def\mL{{\bm{L}}}
\def\mP{{\bm{P}}}
\def\mW{{\bm{W}}}
\def\mX{{\bm{X}}}
\def\mZ{{\bm{Z}}}
\DeclareMathAlphabet{\mathsfit}{\encodingdefault}{\sfdefault}{m}{sl}
\SetMathAlphabet{\mathsfit}{bold}{\encodingdefault}{\sfdefault}{bx}{n}
\def\gE{{\mathcal{E}}}
\def\gG{{\mathcal{G}}}
\def\gV{{\mathcal{V}}}
\newtheorem{theorem}{Theorem}[section]
\newtheorem{proposition}[theorem]{Proposition}
\newtheorem{lemma}[theorem]{Lemma}
\crefname{defn}{definition}{definitions}
\Crefname{defn}{Definition}{Definitions}
\crefname{lem}{lemma}{lemmata}
\Crefname{lem}{Lemma}{Lemmata}
\newcommand{\undirected}{{\text{U}}}
\newcommand{\Eundirected}{{\gE_{\undirected}}}
\newcommand{\directed}{{\text{D}}}
\newcommand{\Edirected}{{\gE_{\directed}}}
\newcommand{\orientation}{{\mathfrak{O}}}
\newcommand{\equi}{\text{equ}}
\newcommand{\inv}{\text{inv}}
\newcommand{\Equi}{\text{Equ.}}
\newcommand{\Inv}{\text{Inv.}}
\newcommand{\randomwalkdenoising}{RW Comp}
\newcommand{\mixedlongestcycleidentification}{LD Cycles}
\newcommand{\typedtrianglefloworientation}{Tri-Flow}
\newcommand{\electricalcircuits}{Circuits}
\newcommand{\modelname}[1]{\textls[0]{\textsc{{#1}}}}
\newcommand{\fusion}{\modelname{F}\xspace}
\newcommand{\model}{\modelname{EIGN}\xspace}
\newcommand{\orientationtransformation}{{\Delta_{\orientation, \hat{\orientation}}}}
\newcommand{\mlp}{\modelname{MLP}}
\newcommand{\linegraphgnn}{\modelname{LineGraph}}
\newcommand{\hodgegnn}{\modelname{HodgeGNN}}
\newcommand{\hodgegnninvariant}{\modelname{Hodge+Inv}}
\newcommand{\hodgegnnundirected}{\modelname{Hodge+Dir}}
\newcommand{\transformer}{\modelname{Line-MagNet}}
\newcommand{\rossignn}{Dir-GNN}
\newcommand{\ourimprovement}{23.5\%\xspace}
\newcommand{\validoriented}{direction-consistent\xspace}
\NewDocumentCommand{\melequi}{o}{
    {\mL^{(\IfNoValueTF{#1}{q}{#1})}_\equi}
}
\NewDocumentCommand{\melinv}{o}{
    {\mL^{(\IfNoValueTF{#1}{q}{#1})}_\inv}
}
\NewDocumentCommand{\mboundaryequi}{o}{
    {\mB^{(\IfNoValueTF{#1}{q}{#1})}_\equi}
}
\newcommand{\boundaryequi}{\mB_\equi}
\NewDocumentCommand{\mboundaryinv}{o}{
    {\mB^{(\IfNoValueTF{#1}{q}{#1})}_\inv}
}
\newcommand{\boundaryinv}{\mB_\inv}
\NewDocumentCommand{\melequitoinv}{o}{
    {\mL^{(\IfNoValueTF{#1}{q}{#1})}_\text{\equi $\rightarrow$ \inv}}
}
\NewDocumentCommand{\melinvtoequi}{o}{
    {\mL^{(\IfNoValueTF{#1}{q}{#1})}_\text{\inv $\rightarrow$ \equi}}
}
\newcommand{\elequi}{\mL_\equi}
\newcommand{\elinv}{\mL_\inv}
\DeclareMathOperator*{\abs}{abs}
\DeclareMathOperator*{\realpart}{Re}
\DeclareMathOperator*{\imaginarypart}{Im}
\definecolor{phaseshiftorange}{HTML}{de8f03}
\definecolor{phaseshiftred}{HTML}{D55E00}
\definecolor{phaseshiftgreen}{HTML}{029e72}
\definecolor{phaseshiftblue}{HTML}{0173b2}
\newcommand{\opm}{\textcolor{phaseshiftorange}{\pm}}
\newcommand{\bpm}{\textcolor{phaseshiftblue}{\pm}}
\newcommand{\bmp}{\textcolor{phaseshiftblue}{\mp}}
\newif\ifrebuttal
\newcommand{\rebuttal}[1]{\ifrebuttal \textcolor{blue}{#1} \else #1 \fi}
\newcommand{\cmark}{\text{\ding{51}}}
\newcommand{\xmark}{\text{\ding{55}}}
\title{Graph Neural Networks for Edge Signals: \\Orientation Equivariance and Invariance}
\author{Dominik Fuchsgruber\textsuperscript{1}, Tim Po\v{s}tuvan\textsuperscript{2}, Stephan Günnemann\textsuperscript{1}, Simon Geisler\textsuperscript{1} \\
\textsuperscript{1}Department of Computer Science \& Munich Data Science Institute, TU Munich \qquad\textsuperscript{2}EPFL\\
\resizebox{0.96\linewidth}{\height}{\texttt{\{d.fuchsgruber,s.guennemann,s.geisler\}@tum.de},\quad \texttt{tim.postuvan@epfl.ch}} \\
}
\newif\ifpreprint
\begin{document}

\maketitle
\begin{abstract}
Many applications in traffic, civil engineering, or electrical engineering revolve around edge-level signals. Such signals can be categorized as inherently directed, for example, the water flow in a pipe network, and undirected, like the diameter of a pipe. Topological methods model edge signals with inherent direction by representing them relative to a so-called \emph{orientation} assigned to each edge. 
They can neither model undirected edge signals nor distinguish if an edge itself is directed or undirected. We address these shortcomings by (i) revising the notion of \emph{orientation equivariance} to enable edge direction-aware topological models, (ii) proposing \textit{orientation invariance} as an additional requirement to describe signals without inherent direction, and (iii) developing \model, an architecture composed of novel direction-aware edge-level graph shift operators, that provably fulfills the aforementioned desiderata. It is the first work that discusses modeling directed and undirected signals while distinguishing between directed and undirected edges. A comprehensive evaluation shows that \model outperforms prior work in edge-level tasks, improving in RMSE on flow simulation tasks by up to \ourimprovement.
\end{abstract}

 \section{Introduction}

Most research on Graph Neural Networks (GNN) research has focused on node- or graph-level tasks \citep{wu2020comprehensive,kipf2017semisupervised, hamilton2017inductive, velivckovic2018graph,gasteiger2020directional,ALQURAISHI20211} while edge-level problems remain underexplored. Edge-level signals describe the properties of \emph{existing} edges and are either the features, hidden representations of a GNN, or the targets. They naturally arise in applications involving traffic and many areas of engineering (e.g. electric circuits \citep{dorfler2018electrical} or hydraulics \citep{garzon2022machine,herrera2016graph}).

Signals on edges come in two different modalities: (a) They can have an inherent orientation, like the water flow in a pipe network or traffic flow on streets.
We call them \emph{orientation-equivariant} signals as they are naturally expressed as scalar values relative to a chosen orientation as reference (see \Cref{sec:background}).
(b) The other signal modality has no intrinsic direction, like pipe diameter or speed limits. %
We refer to such signals as \emph{orientation-invariant}. Similar concepts also exist in continuous domains, e.g. a scalar field assigns a single value to each point in space, while a vector field describes magnitude and direction.

At the same time, edges themselves may be directed or undirected. Examples that induce directed edges include valves in water networks, one-way roads in street networks, or diodes in electrical circuits. Often, in these applications, directed edges prohibit (to a large extent) a signal that is oriented against the edge direction. Such constraints can have a big impact on the targets. For example, imagine a one-way street that forces cars to take a detour. %

\begin{figure}[ht]
    \centering
    \vspace{-2mm}
    \includegraphics[width=.95\textwidth]{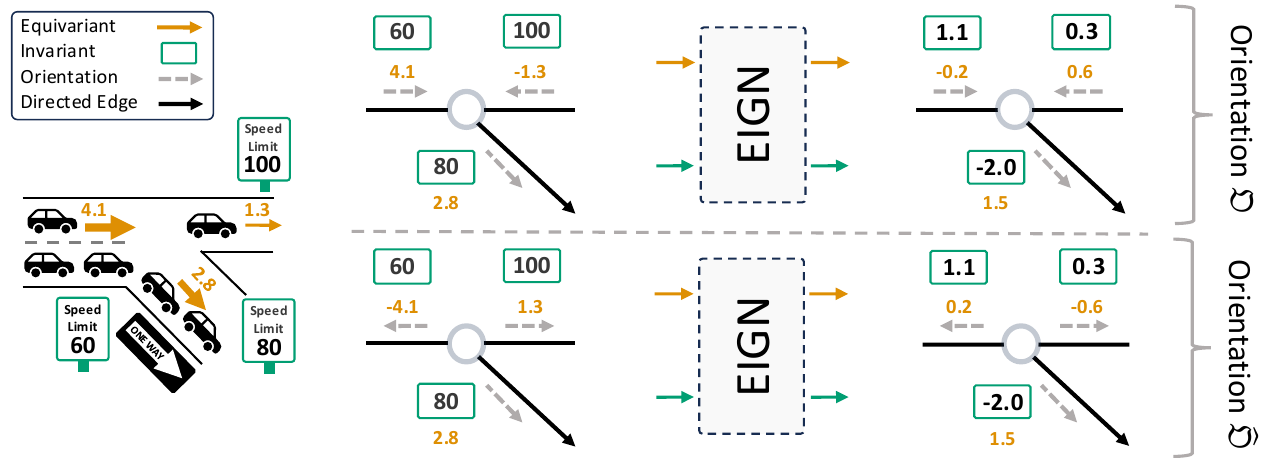}
    \vspace{-1mm}
    \subfloat[Application: Traffic\label{fig:overview:a}]{\hspace{0.26\textwidth}}
    \subfloat[Graph representations\label{fig:overview:b}]{\hspace{0.24\textwidth}}
    \subfloat[\model\label{fig:overview:c}]{\hspace{0.19\textwidth}}
    \subfloat[\model Predictions\label{fig:overview:d}]{\hspace{0.22\textwidth}}
    \subfloat{\hspace{0.08\textwidth}}
    \centering
    \caption{
    \model models an arbitrary combination of orientation-equivariant and -invariant edge-level inputs or targets. In this example, the \textcolor{phaseshiftorange}{car flow is equivariant} and represented relative to two different (top and bottom) arbitrary \validoriented \textcolor{gray}{orientations} $\orientation$ and $\smash{\hat{\orientation}}$ (notice the sign of equivariant signals), while \textcolor{phaseshiftgreen}{speed limits are invariant}. \model makes consistent predictions for $\orientation$ and $\smash{\hat{\orientation}}$: It outputs the same invariant signals while the sign of equivariant outputs is determined by the orientation.
    }
    \label{fig:overview}
    \vspace{-4mm}
\end{figure}

There are two relevant streams of work that, however, fall short of either representing orientation-equivariant or orientation-invariant signals. (a) One strategy is to map between orientation-invariant edge-level signals using Line Graphs. (b) The other line of work relies on Algebraic Topology \citep{schaub2022signal,ebli2020simplicial,bunch2020simplicial} to cope with orientation-equivariant signals. Topological models define an arbitrary reference direction for each edge, a so-called \emph{orientation}, and use positive values to indicate that an orientation-equivariant signal aligns with this reference orientation whereas negative values indicate misalignment (see how in \Cref{fig:overview:a,fig:overview:b}, the \textcolor{phaseshiftorange}{signal $1.3$} is represented as \textcolor{phaseshiftorange}{$-1.3$}). For undirected edges, there is no preferred reference orientation. Consequently, a common requirement for topological models is to be \emph{orientation-equivariant} \citep{roddenberry2021principled}: The sign of the (orientation-equivariant) signal must change together with the orientation of the corresponding edge (see \Cref{fig:overview}). Orientation equivariance is a property of a topological model that allows it to deal with orientation-equivariant signals.

Many applications have both orientation-equivariant and invariant features with an arbitrary combination of orientation-equivariant and invariant targets. Moreover, real scenarios are often only accurately modeled if using both undirected and directed edges. However, Line Graph approaches ignore the properties of orientation-equivariant signals entirely, while topological approaches treat every signal as orientation-equivariant and can not distinguish edge direction. 
Arguably, the inductive biases of prior methods render them ineffective in a large range of applications, which is consistent with our experimental findings.

We address these shortcomings and are the first to model edge-level tasks with orientation-equivariant and -invariant signals on graphs with directed and undirected edges. Our key contributions are:
\begin{enumerate}[label=\textbf{\alph*}),nosep, leftmargin =!]
        \item \textbf{Desiderata.} 
        We formalize suitable desiderata:
        \begin{inparaenum}[(i)] 
            \item \emph{Joint orientation equivariance} for orientation-equivariant signals, and
            \item \emph{joint orientation invariance} for orientation-invariant signals.
        \end{inparaenum}
        \item \textbf{\model: A general-purpose edge-level topological GNN.} \model provably fulfills these desiderata, leveraging novel direction-aware convolution operators.
        and a fusion operation between both modalities to model their \textcolor{gray}{interactions} ( see \Cref{fig:architecture}).
        \item \textbf{Benchmarking.} We devise a suite of challenging and diverse benchmarks covering synthetic and real-world tasks, including a novel dataset for electric circuits that requires dealing with orientation-equivariant and orientation-invariant signals on graphs with directed and undirected edges. \model outperforms prior work by reducing the RMSE up to \ourimprovement in our experiments.\footnote{We provide our code at \href{https://www.cs.cit.tum.de/daml/eign/}{cs.cit.tum.de/daml/eign/}} 
\end{enumerate}

\section{Background}\label{sec:background}

Algebraic Topology is a principled framework for representing orientation-equivariant edge signals~\citep{roddenberry2019hodgenet,roddenberry2021principled,ebli2020simplicial}. 
In the interest of notational clarity, we simplify many concepts by directly applying them to the edge domain and omit general topological definitions. For a comprehensive introduction, we refer to \citet{schaub2022signal}.

\textbf{Notation.} We consider edge-level problems on a graph $\gG = (\gV, \gE)$ with $n = \lvert \gV\rvert$ nodes and $m = \lvert \gE \rvert$ edges. We distinguish between:
\begin{inparaenum}[(i)]
    \item directed edges, represented by ordered tuples $\Edirected \subseteq \gV \times \gV$, and undirected edges $\Eundirected \subseteq \{\{u, v\} : u, v \in \gV\}$, represented by unordered sets.
    \item edge-signals (inputs, hidden representations, outputs) that come with inherent direction as \emph{orientation-equivariant} $\mX_\equi \in \mathbb{C}^{m \times d_\equi}$ and \emph{orientation-invariant} $\mX_\inv \in \mathbb{C}^{m \times d_\inv}$ otherwise. 
\end{inparaenum}

\textbf{Orientation.} While orientation-invariant edge signals can be represented by scalar values as they are, orientation-equivariant edge signals need to be defined relative to the orientation of the associated edge. To that end, topological methods define an arbitrarily chosen orientation $\orientation : \gE \rightarrow \gV \times \gV$ for every edge, also represented by ordered tuples.  If an orientation-equivariant signal $x$ aligns with the orientation of its edge, we represent it with $x$ and $-x$ otherwise. Changing from orientation $\orientation$ to another orientation $\smash{\hat{\orientation}}$ induces sign flips for all orientation-equivariant signals of edges whose orientation was flipped. We can represent this orientation change for signals with a diagonal matrix $\smash{\orientationtransformation \in \mathbb{R}^{m \times m}}$ with entries $\smash{[\orientationtransformation]_{e,e} = 1}$ if $\smash{\orientation(e) = \hat{\orientation}(e)}$ and $-1$ otherwise. 

Edge direction and orientation are different concepts. While the former is part of the given problem topology and may influence model predictions, the latter can be thought of as a basis in which orientation-equivariant edge signals are represented. For undirected edges, which orientation is chosen must not impact the problem itself. However, unlike previous work, we fix the orientation of \emph{directed edges} to match their direction ($\orientation(e_\directed) = e_\directed$) and, consequently, represent their orientation-equivariant signals relative to their direction. We refer to such orientations as \emph{\validoriented}. They can encode information about the direction of \emph{directed} edges.

\textbf{Boundary and Laplace Operators.} For a given orientation $\orientation$, one can define a boundary operator that maps signals on $m$ edges defined with respect to orientation $\orientation$ to the domain of $n$ nodes. Reusing the analogy of water flow, the boundary operator sums all flow coming into a node and subtracts all outgoing flow. It can be represented by a matrix $\boundaryequi \in \mathbb{R}^{n \times m}$:
\begin{equation} \label{eq:boundary}
  [\boundaryequi]_{v,e} =
  \begin{cases}
    -1 & \text{if } \orientation(e) = (v, \cdot) \\
    1  & \text{if } \orientation(e) = (\cdot, v) \\
    0  & \text{otherwise}
  \end{cases}
  \rebuttal{\,.}
\end{equation}
The boundary operator can be used to define a Laplace operator on the edges of the graph, the so-called (Equivariant) Edge Laplacian \citep{schaub2022signal}: $\smash{\elequi = \boundaryequi^T \boundaryequi}$. This operator can be understood as message passing between edges that are incident to a shared node (see \Cref{sec:mel}).

\section{Related Work}

\ifpreprint
\else
\begin{wrapfigure}{r}{0.42\textwidth}
    \vspace{-2.1em}
  \begin{center}
    \includegraphics[width=0.42\textwidth]{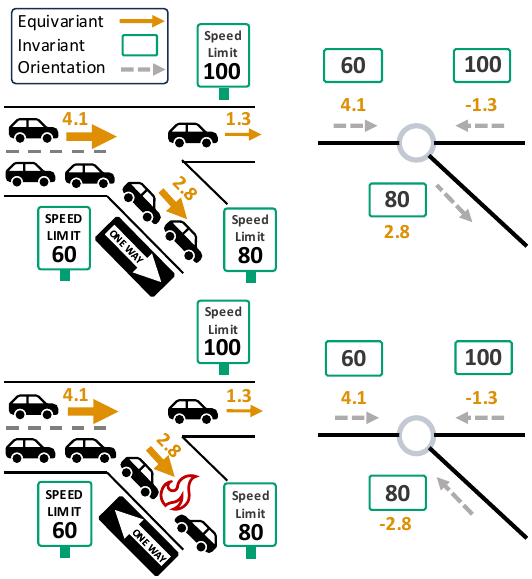}
    \subfloat[Traffic Scenarios\label{fig:undirected_example:a}]{\hspace{0.21\textwidth}}
    \subfloat[ Representations\label{fig:undirected_example:b}]{\hspace{0.21\textwidth}}
  \end{center}
  
    \vspace{-3mm}
  \caption{Two scenarios (top, bottom) that differ in the direction of one edge but model different situations (flame in bottom left). Their representations are indistinguishable for models that are orientation-equivariant for \emph{directed} edges.}
    \label{fig:undirected_example}
    \vspace{-5mm}
\end{wrapfigure}
\fi
\textbf{Topological Models.} Methods grounded in Algebraic Topology often represent orientation-equivariant signals on undirected graphs relative to an arbitrary orientation \citep{schaub2022signal}. Some of these approaches also utilize higher-order structures composed of edges that, however, often need to be handcrafted \citep{bunch2020simplicial,giusti2022simplicial}.
HodgeGNN \citep{roddenberry2019hodgenet} and similar architectures \citep{park2023convolving,roddenberry2021principled} satisfy a more limited notion of orientation equivariance compared to our proposal. This results in major shortcomings of these models in practice:
\begin{inparaenum}[(i)]
    \item They treat all input and output signals as orientation-equivariant and, therefore, can not model orientation-invariant edge signals appropriately. In the example of \Cref{fig:overview}, they do not predict the same orientation-invariant output under different orientations $\orientation$ and $\smash{\hat{\orientation}}$ but instead induce sign flips as if they were orientation-equivariant signals leading to inconsistent predictions for different (arbitrary) orientations of the same topology.
    \item These approaches are equivariant regarding the orientation of \emph{all} edges, whereas our desiderata relax this requirement to hold for \emph{undirected} edges only. As depicted in \rebuttal{\Cref{fig:undirected_example}}, edge direction often alters the nature of the problem: In one scenario cars go against the direction of a one-way street while in the other they do not. \rebuttal{For fully orientation equivariant models, both representations are indistinguishable: The edge orientation serves only as a representation basis and can not encode directionality. An exception among topological methods is concurrent work \citep{lecha2024higher} that implicitly distinguishes between directed and undirected edges by representing undirected edges with two directed yet antiparallel edges in a directed graph that is augmented with higher-order structures. However, they neither discuss the combination of equivariant and invariant features/targets nor is their model applicable to this setting without modifications.} 
\end{inparaenum}

\ifpreprint
\begin{wrapfigure}{r}{0.42\textwidth}
    \vspace{-2.1em}
  \begin{center}
    \includegraphics[width=0.42\textwidth]{figures/undirected_model.pdf}
    \subfloat[Traffic Scenarios\label{fig:undirected_example:a}]{\hspace{0.21\textwidth}}
    \subfloat[ Representations\label{fig:undirected_example:b}]{\hspace{0.21\textwidth}}
  \end{center}
    \vspace{-3mm}
  \caption{Two scenarios (top, bottom) that differ in the direction of one edge but model different situations (flame in bottom left). Their representations are indistinguishable for models that are orientation-equivariant for \emph{directed} edges.}
    \label{fig:undirected_example}
    \vspace{-3mm}
\end{wrapfigure}
\fi
\textbf{Flow Interpolation.} An orthogonal line of research studies flow interpolation problems \citep{ford1956maximal,lippi2013short}. Even though both features and targets are technically orientation-equivariant, many methods do not approach this task from a topological perspective. Instead, they utilize specialized inductive biases such as flow conservation \citep{jia2019graph} or physics-informed constraints \citep{smith2022physics}. \citet{silva2021combining} frame interpolation as a bi-level optimization problem where edge-level models serve as learned regularizers. Such approaches are limited to flow interpolation problems while our model is a general framework that can be applied to a broader range of edge-level tasks. As we find in \Cref{sec:ablations}, \model can learn the physical properties of a problem without explicitly encoding physics-informed inductive biases.

\textbf{Edge-Level GNNs.} Beyond flow-based problems, GNNs have been applied to (orientation-invariant) edge-level tasks as well. One family of approaches employs node-level GNNs and a successive readout function \citep{9962800}. Such approaches have been particularly popular for link prediction \citep{zhou2021progresses}, where target edges are not present in the input data. Another paradigm is to apply node-level GNNs to the dual Line Graph \citep{jiang2019censnet,jo2021edge} of the topology. Only a few approaches rely on edge-specific methods \citep{8842613}. Models that are not grounded in a topological framework lack appropriate inductive biases to model orientation-equivariant signals with inherent direction. These approaches can be categorized as treating both input and target as orientation-invariant signals. In the context of \Cref{fig:overview}, they would not represent orientation-equivariant signals relative to the respective orientation.

\textbf{GNNs for Directed Graphs.} Directed graphs have received a lot of attention for node-level problems. Many approaches discriminate between adjacent in-neighbors and out-neighbors \citep{li2016gated,rossi2024edge}. Also, spectral convolutions, that are based on the Node Laplacian, have been generalized to directed settings \citep{ma2019spectral, monti2018motifnet}. From the different possible direction-aware Laplacians \citep{tong2020digraph}, our work takes inspiration from the Magnetic Node Laplacian \citep{zhang2021magnet} which can be defined using a complex-valued boundary operator. It was recently utilized to compute direction-aware positional node encodings \citep{geisler2023transformers}. The Laplace operators of our approach generalize this concept to edge-level problems.

\section{Method}\label{sec:method}

\subsection{Desirable Properties for Edge-Level GNNs}\label{sec:properties}

At the core of our work stand novel desiderata which enforce a model to make consistent predictions for both orientation-equivariant and -invariant edge signals among different orientations.
We restrict our novel constraints to \emph{undirected edges} by requiring equivariance/invariance among \validoriented orientations only. We additionally prove that \model is also equivariant with respect to edge permutations, which we defer to \Cref{appendix:permutation_equivariance_proofs}.
\begin{restatable}[Joint Orientation Equivariance]{defn}{orientationequivariance}
    \label{def:orientation_equivariance}
    Let $\orientation, \hat{\orientation}$ be arbitrary \emph{\validoriented} orientations of edges on $\gG$. We say that a mapping $f$ is jointly orientation-equivariant if for any orientation-equivariant input $\mX_\equi \in \mathbb{C}^{m \times d_\equi}$ and any orientation-invariant input $\mX_\inv \in \mathbb{C}^{m \times d_\inv}$:
    \begin{equation*}
        \orientationtransformation f(\mX_\equi, \mX_\inv, \gG, \orientation) = f(\orientationtransformation \mX_\equi, \mX_\inv, \gG, \hat{\orientation})
  \rebuttal{\,.}
    \end{equation*}
\end{restatable}

\begin{restatable}[Joint Orientation Invariance]{defn}{orientationinvariance}
    \label{def:orientation_invariance}
     Let $\orientation, \hat{\orientation}$ be arbitrary \emph{\validoriented} orientations of edges on $\gG$. We say that a mapping $g$ is jointly orientation-invariant if for any orientation-equivariant input $\mX_\equi \in \mathbb{C}^{m \times d_\equi}$ and any orientation-invariant input $\mX_\inv \in \mathbb{C}^{m \times d_\inv}$:
    \begin{equation*}
        g(\mX_\equi, \mX_\inv, \gG, \orientation) = g(\orientationtransformation\mX_\equi, \mX_\inv, \gG, \hat{\orientation})
  \rebuttal{\,.}
    \end{equation*}
\end{restatable}

Both definitions ensure that a model predicts the same output signal when the orientation is changed from $\orientation$ to $\smash{\hat{\orientation}}$: While \Cref{def:orientation_equivariance} ensures that the orientation-equivariant output is represented relative to the new orientation $\smash{\hat{\orientation}}$, \Cref{def:orientation_invariance} requires the same orientation-invariant predictions which are not relative to the new orientation. In both desiderata, the orientation-equivariant input $\smash{\mX_\equi}$ needs to be represented relative to the respective orientation as well. Restricting these properties to \validoriented orientations means that both properties only need to hold regarding the arbitrary orientation of undirected edges. This allows models to break both desiderata for the orientation of \emph{directed edges}, which enables using their orientation to encode direction.

\subsection{\model: an orientation-equivariant and orientation-invariant Model}\label{sec:model}
\begin{wrapfigure}{r}{0.55\textwidth}
    \label{fig:architecture}
    \vspace{-3.5mm}
  \begin{center}
    \includegraphics[width=0.55\textwidth]{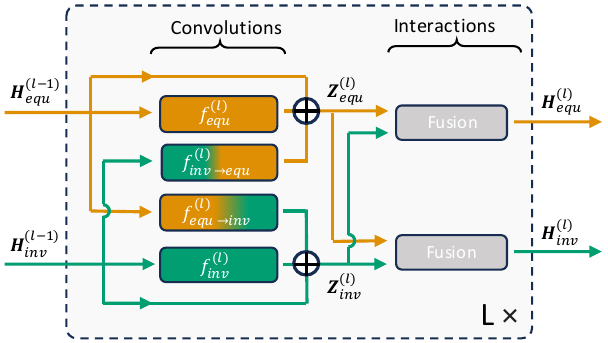}
  \end{center}
    \vspace{-1mm}
  \caption{\model architecture: In each layer, message passing using novel Laplacians is performed within and between orientation-equivariant and orientation-invariant signals. The two aggregated modalities $\smash{\mZ^{(l)}_{\equi}}$ and then ${\mZ^{(l)}_\inv}$ are then fused.}
  \label{fig:model}
    \vspace{-3mm}
\end{wrapfigure}
We propose \model (\Cref{fig:model}), a model that satisfies these desiderata. It consists of $L$ layers each of which takes orientation-equivariant and -invariant input edge signals \(\smash{\mH^{(l-1)}_\equi}\), \(\smash{\mH^{(l-1)}_\inv}\) and transforms them into outputs of the corresponding modality \(\smash{\mH^{(l)}_\equi}\) and \(\smash{\mH^{(l)}_\inv}\). Its message passing between edge signals is based on different graph shift operators (see ``Convolutions'' in \Cref{fig:model} and \Cref{sec:mel} for details) both within and between edge signal modalities. We set \(\smash{\mH^{(0)}_\equi = \mX_\equi}\), \(\smash{\mH^{(0)}_\inv = \mX_\inv}\) and \(\smash{\mH^{(L)}_\equi}\) and \(\smash{\mH^{(L)}_\inv}\) are the equivariant and invariant output signals, respectively. In the following, we denote with $\smash{\mW^{(l)}_{(\cdot)}}$ model parameters of appropriate dimensions, with $\sigma_\equi$ an element-wise sign equivariant activation function, i.e. $\sigma_\equi(-x) = -\sigma_\equi(x)$, and with $\sigma_\inv$ an arbitrary non-linearity. We detail the implementation of \model in \Cref{appendix:model_details}.
\begin{align}
    \label{eq:convolutionequi}
    \mZ^{(l)}_\equi = \sigma_\equi(f^{(l)}_\equi(\mH_\equi^{(l-1)})\mW_{\equi \rightarrow \equi}^{(l)} + f^{(l)}_{\inv \rightarrow \equi} (\mH_\inv^{(l-1)})\mW_{\inv \rightarrow \equi}^{(l)} + \mH_\equi^{(l-1)} \mW_\equi^{(l)})
  \rebuttal{\,.} \\
    \label{eq:convolutioninv}
    \mZ^{(l)}_\inv = \sigma_\inv(f^{(l)}_\inv( \mH_\inv^{(l-1)})\mW_{\inv \rightarrow \inv}^{(l)} + f^{(l)}_{\equi \rightarrow \inv} (\mH_\equi^{(l-1)})\mW_{\equi \rightarrow \inv}^{(l)} + \mH_\inv^{(l-1)} \mW_\inv^{(l)})
  \rebuttal{\,.}
\end{align}
The intra-modality message passing schemes $\smash{f^{(l)}_{\equi}}$ and $\smash{f^{(l)}_{\inv}}$ update the orientation-equivariant and -invariant signal representation of an edge by aggregating orientation-equivariant and -invariant signals of adjacent edges. Similarly, the inter-modality schemes $\smash{f^{(l)}_{\inv \rightarrow \equi}}$ and $\smash{f^{(l)}_{\equi \rightarrow \inv}}$ aggregate messages from adjacent edges of one modality and transform it into the other. While this enables information exchange between directed and undirected edge signals, it restricts their interaction to local aggregates: The edge orientation-equivariant signal only depends on the average of adjacent orientation-invariant signals and vice versa. This makes modeling interactions between orientation-equivariant and --invariant edge signals of the same edge difficult. Therefore, \model uses a second fusion operation that does not use Laplacians (depicted as ``Fusion'' in \Cref{fig:model}):
\begin{align}
    \label{eq:fusion}
    \mH^{(l)}_\equi = \sigma_\equi(
        \mZ^{(l)}_\equi \mW^{(l)}_{\fusion,\equi \rightarrow \equi} \odot \mZ^{(l)}_\inv \mW^{(l)}_{\fusion,\inv \rightarrow \equi}  + \mZ^{(l)}_\equi
    )
  \rebuttal{\,.} \\
    \label{eq:fusion2}
    \mH^{(l)}_\inv = \sigma_\inv(
        \mZ^{(l)}_\inv \mW^{(l)}_{\fusion,\inv \rightarrow \inv} \odot \abs( \mZ^{(l)}_\equi \mW^{(l)}_{\fusion,\equi \rightarrow \inv})  + \mZ^{(l)}_\inv
    )
  \rebuttal{\,.}
\end{align}

\begin{wraptable}{r}{0.38\linewidth}
    \ifpreprint
    \vspace{-0.1em}
    \else
    \vspace{-1.35em}
    \fi
    \caption{\textcolor{phaseshiftblue}{Convolution} and self- \textcolor{phaseshiftred}{interaction} operators for both input and output signal modalities.}
  \label{tab:operators}
  \centering
  \resizebox{0.99\linewidth}{!}{%
  \begin{tabular}{l|c|c}
    \toprule
    \makecell[l]{\xspace\xspace\xspace\xspace \small{Output} \\ \small{Input}}
    & \Equi & \Inv \\
    \midrule
    \Equi & 
    \makecell[c]{\textcolor{phaseshiftblue}{$\melequi$} \\ \textcolor{phaseshiftred}{$\mW_{\equi}$}} &
    \makecell[c]{\textcolor{phaseshiftblue}{$\melequitoinv$} \\ \textcolor{phaseshiftred}{$\mW_{\fusion,\equi \rightarrow \inv}$}}
    \\
    \midrule
    \Inv & \makecell[c]{\textcolor{phaseshiftblue}{$\melinvtoequi$} \\ \textcolor{phaseshiftred}{$\mW_{\fusion,\inv \rightarrow \equi}$}} &
    \makecell[c]{\textcolor{phaseshiftblue}{$\melinv$} \\ \textcolor{phaseshiftred}{$\mW_{\inv}$}} \\
    \bottomrule
\end{tabular}

  }
  \vspace{-0.8cm}
\end{wraptable}
Here, $\odot$ and $\abs$ denote element-wise multiplication and absolute value. In general, the fusion operation can be realized arbitrarily. We chose point-wise multiplication and absolute values as they are jointly orientation-equivariant / -invariant fusion operations respectively (see \Cref{appendix:proofs}).

 \model models all possible types of interactions between edge signal modalities (\Cref{tab:operators}): The Laplacian operators of \Cref{eq:convolutionequi,eq:convolutioninv} describe interactions of orientation-equivariant and -invariant signals with \textcolor{phaseshiftblue}{local aggregates} of both the same and different modality. The residual connections in \Cref{eq:convolutionequi,eq:convolutioninv} and the fusion operation of \Cref{eq:fusion,eq:fusion2} model \textcolor{phaseshiftred}{interactions} between orientation-equivariant and -invariant signals of the same edge.
The design choices of the fusion operation in \Cref{eq:fusion,eq:fusion2} and the definition of the message passing schemes \(\smash{f^{(l)}_{\equi}}\), \(\smash{f^{(l)}_{\inv}}\), \(\smash{f^{(l)}_{\inv \rightarrow \equi}}\) and \(\smash{f^{(l)}_{\equi \rightarrow \inv}}\), which we detail next, enforce \model to conform to all desiderata proposed in \Cref{sec:properties}.

\subsection{orientation-equivariant and orientation-invariant Laplacians}\label{sec:mel}

\textbf{Equivariant and Invariant Edge Laplacians}. The Equivariant Edge Laplacian (see \Cref{sec:background}) arises from an (equivariant) boundary operator as $\elequi = \boundaryequi^T \boundaryequi$. Its sparsity pattern corresponds to the adjacency matrix of the dual Line Graph \citep{jiang2019censnet,jo2021edge}, i.e. edges are adjacent if they are incident to the same node. It is a jointly orientation-equivariant mapping (see \Cref{thm:elorientationequivariance}) and therefore suitable to convole orientation-equivariant edge signals. Intuitively, $\elequi$ performs message passing: A target edge $e \in \gE$ aggregates the orientation-equivariant signals of adjacent edges. Additionally, it re-orients the signal of an adjacent edge $e^\prime \in \gE$ by multiplying with $-1$ if the orientations of $e$ and $e^\prime$ misalign. The orientations of two edges misalign if they are consecutive, i.e. the endpoint of one is the starting point of the other (see \Cref{fig:equi_mel_phase_shifts}).
\begin{equation}
    \left[\elequi\right]_{e, e^\prime} = \begin{cases}
    2 & \text{if } e = e^\prime \\
    -1 & \text{if } \orientation(e), \orientation(e^\prime) \text{ consecutive}\\
    1 & \text{if } \orientation(e), \orientation(e^\prime) \text{ not consecutive, but }e, e^\prime \text{ adjacent}\\
    0  & \text{otherwise}
  \end{cases}
  \rebuttal{\,.}
\end{equation}
\ifpreprint
\begin{wrapfigure}{r}{0.35\textwidth}
    \vspace{-6mm}
  \begin{center}
    \includegraphics[width=0.34\textwidth]{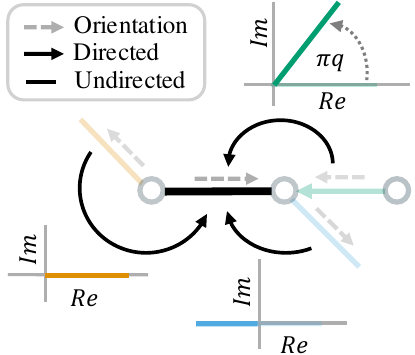}
  \end{center}
    \vspace{-3mm}
  \caption{The Equivariant Magnetic Edge Laplacian $\smash{\melequi}$ induces a complex phase shift of $\pi q$ for signals of \textcolor{phaseshiftgreen}{directed} edges that are aggregated by the black undirected edge. Signals of \textcolor{phaseshiftblue}{misaligned} edges are re-oriented.
  }
    \label{fig:equi_mel_phase_shifts}
    \vspace{-10mm}
\end{wrapfigure}
\else
\fi
We propose a novel Laplacian for orientation-invariant edge signals by \rebuttal{using the orientation-independent unsigned boundary operator \citep{bodnar2021weisfeiler,papillon2023architectures}:}
\begin{equation} \label{eq:boundary_invariant}
  \left[\boundaryinv\right]_{v,e} =
  \begin{cases}
    1 & \text{if } v \in e \\
    0  & \text{otherwise}
  \end{cases}
  \rebuttal{\,.}
\end{equation}
It induces the Invariant Edge Laplacian $\smash{\elinv = \boundaryinv^T\boundaryinv}$ which has the same sparsity pattern as $\elequi$. In particular, the Invariant Edge Laplacian can be obtained by taking the element-wise absolute value $\elinv = \abs(\elequi)$. Its message passing scheme is similar to $\elequi$ as well: It, too, aggregates messages of adjacent edges but does not re-orient them if their orientations are consecutive. Thus, it is a jointly orientation-invariant mapping and suitable to convolve orientation-invariant edge signals.

\ifpreprint
\else
\begin{wrapfigure}{r}{0.35\textwidth}
    \vspace{-6mm}
  \begin{center}
    \includegraphics[width=0.34\textwidth]{figures/phase_shifts.pdf}
  \end{center}
    \vspace{-3mm}
  \caption{The Equivariant Magnetic Edge Laplacian $\smash{\melequi}$ induces a complex phase shift of $\pi q$ for signals of \textcolor{phaseshiftgreen}{directed} edges that are aggregated by the black undirected edge. Signals of \textcolor{phaseshiftblue}{misaligned} edges are re-oriented.
  }
    \label{fig:equi_mel_phase_shifts}
    \vspace{-18mm}
\end{wrapfigure}
\fi
\textbf{Direction-aware Edge Laplacians.} Neither $\elequi$ nor $\elinv$ can distinguish directed from undirected edges as they do not explicitly model them differently. However, following a recent line of work on Magnetic Node Laplacians \citep{forman1993determinants, shubin1994discrete, colin2013magnetic, furutani2020graph, geisler2023transformers}, we generalize both operators to make them direction-aware. Intuitively, the Magnetic Node Laplacian represents edge direction through complex phase shifts of magnitude $\pi q$ for some fixed hyperparamter $q \in \mathbb{R}$. It can be computed using a complex-valued boundary operator \citep{fanuel2024sparsificationregularizedmagneticlaplacian}:
\begin{equation} \label{eq:magnetic_boundary}
  \left[\mboundaryequi\right]_{v,e} =
  \begin{cases}
    -\exp(i \pi q) & \text{if } e = (v, \cdot) \text{ and } e \in \Edirected \\
    \exp(-i \pi q)  & \text{if } e = (\cdot, v) \text{ and } e \in \Edirected \\
    -1 & \text{if } \orientation(e) = (v, \cdot) \text{ and } e \in \Eundirected \\
    1  & \text{if } \orientation(e) = (\cdot, v) \text{ and } e \in \Eundirected \\
    0  & \text{otherwise}
  \end{cases}
  \rebuttal{\,.}
\end{equation}
This boundary operator $\smash{\mboundaryequi}$ extends $\smash{\boundaryequi}$ by inducing complex phase shifts of $\pi q$ along directed edges. Consequently, $q=0$ recovers the direction-agnostic boundary operator of \Cref{eq:boundary}. Using the boundary operator \(\smash{\mboundaryequi}\), we can define a direction-aware Equivariant Magnetic Edge Laplacian analogously to its direction-agnostic counterpart as $\smash{\melequi = (\mboundaryequi)^H \mboundaryequi}$\rebuttal{, with $(.)^H$ denoting the conjugate transposed}. 
Its sparsity pattern is the same as $\smash{\elequi}$ and for undirected edges both operators coincide. 
Thus, its aggregation scheme is similar as well: The key difference is that $\smash{\melequi}$ applies a complex phase shift to the signals of \emph{directed edges} before aggregating them instead of just re-orienting them by flipping their sign.
The direction of the phase shift is determined by how the edges relate to their shared incident node. The phases of ingoing edges will be shifted in a different direction than the phases of outgoing edges. \Cref{fig:equi_mel_phase_shifts} depicts this mechanism for adjacent edges that are
\begin{inparaenum}[(i)]
    \item \textcolor{phaseshiftorange}{undirected and aligned in orientation}
    \item \textcolor{phaseshiftblue}{undirected and misaligned in orientation}
    \item \textcolor{phaseshiftgreen}{directed and aligned in orientation}.
\end{inparaenum}
$\smash{\melequi}$ is a jointly orientation-equivariant operator as per \Cref{def:orientation_equivariance} for the (arbitrary) orientation of undirected edges (even if they are adjacent to directed edges). It, however, specifically breaks orientation equivariance for \emph{directed} edges by inducing complex phase shifts: Changing the direction of a directed edge flips the sign of the complex phase shift that is applied before aggregation (see \Cref{tab:l_equi}).

Defining Laplace operators through boundary maps also enables a different interpretation of the induced message passing schemes. First, each node aggregates information from incident edges. Then each edge computes its representation from the information at its endpoints. In the case of equivariant signals, the node representations are analogous to potentials and the edge representations to the flow they induce. We utilize this through learnable node feature transformations and realize the message passing in \Cref{eq:convolutionequi,eq:convolutioninv} as $\smash{f^{(l)}_{(\cdot)}}(\mX) = \mB_{(\cdot)}^H h^{(l)}_{(.)}(\mB_{(\cdot)}\mX)$ instead of directly using the corresponding Laplacian $(\mB_{(\cdot)})^H \mB_{(\cdot)}$ as a graph shift operator.

\begin{restatable}{lem}{elorientationequivariance}
\label{thm:elorientationequivariance}
The Magnetic Equivariant Edge Laplacian %
implies a jointly orientation-equivariant mapping ${f(\mX_\equi, \mX_\inv, \gG, \orientation) = (\mboundaryequi)^H h(\mboundaryequi \mX_\equi)}$.
\end{restatable}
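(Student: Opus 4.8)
The claim is that the map $f(\mX_\equi, \mX_\inv, \gG, \orientation) = (\mboundaryequi)^H h(\mboundaryequi \mX_\equi)$ satisfies $\orientationtransformation f(\mX_\equi,\mX_\inv,\gG,\orientation) = f(\orientationtransformation\mX_\equi,\mX_\inv,\gG,\hat\orientation)$ for any two \validoriented orientations $\orientation,\hat\orientation$. The plan is to reduce everything to a single transformation identity for the complex boundary operator: namely that $\mboundaryequi[\hat\orientation] = \mboundaryequi[\orientation]\,\orientationtransformation$, where $\orientationtransformation$ is the diagonal sign matrix with $[\orientationtransformation]_{e,e}=1$ when $\orientation(e)=\hat\orientation(e)$ and $-1$ otherwise. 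First I would establish this identity by a case analysis on edges, using the explicit formula \eqref{eq:magnetic_boundary}. For a directed edge $e$, both $\orientation$ and $\hat\orientation$ are \validoriented so $\orientation(e)=\hat\orientation(e)=e$, hence $[\orientationtransformation]_{e,e}=1$ and the corresponding columns of the two boundary operators agree; the phase-shift entries $\pm\exp(\mp i\pi q)$ are unchanged. For an undirected edge $e$, if the orientation is preserved the columns agree and $[\orientationtransformation]_{e,e}=1$; if the orientation is flipped, then $\orientation(e)=(u,v)$ becomes $\hat\orientation(e)=(v,u)$, which swaps the $-1$ and $+1$ entries in that column, i.e. negates the whole column, matching $[\orientationtransformation]_{e,e}=-1$. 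This gives $\mboundaryequi[\hat\orientation] = \mboundaryequi[\orientation]\,\orientationtransformation$ columnwise.

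Next I would use this to compute the right-hand side. Since $\orientationtransformation$ is diagonal with $\pm 1$ entries, it is real, symmetric, and an involution: $\orientationtransformation^H = \orientationtransformation$ and $\orientationtransformation^2 = \mI$. Therefore
\begin{align*}
f(\orientationtransformation\mX_\equi, \mX_\inv, \gG, \hat\orientation)
&= (\mboundaryequi[\hat\orientation])^H\, h\!\left(\mboundaryequi[\hat\orientation]\,\orientationtransformation\mX_\equi\right) \\
&= (\mboundaryequi[\orientation]\,\orientationtransformation)^H\, h\!\left(\mboundaryequi[\orientation]\,\orientationtransformation\,\orientationtransformation\mX_\equi\right) \\
&= \orientationtransformation\,(\mboundaryequi[\orientation])^H\, h\!\left(\mboundaryequi[\orientation]\,\mX_\equi\right) \\
&= \orientationtransformation\, f(\mX_\equi, \mX_\inv, \gG, \orientation),
\end{align*}
which is exactly the joint orientation equivariance condition. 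The step $\orientationtransformation^H = \orientationtransformation$ and $\orientationtransformation^2=\mI$ collapse the inner argument back to $\mboundaryequi[\orientation]\mX_\equi$, and pull the outer $\orientationtransformation$ out front.

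One technical point worth spelling out is the role of $h$: for the middle equality to go through, $h$ only needs to act on the node domain (output of $\mboundaryequi$), and the identity does not require $h$ to commute with anything — the $\orientationtransformation$ factors cancel \emph{inside} the argument of $h$ before $h$ is applied, so $h$ can be an arbitrary (learnable) node-level map. I would state this explicitly so the reader sees that no constraint on $h$ is needed. The $\mX_\inv$ argument plays no role and is simply carried along.

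**Main obstacle.** The only real content is the columnwise identity $\mboundaryequi[\hat\orientation] = \mboundaryequi[\orientation]\,\orientationtransformation$, and the subtlety there is bookkeeping the four-way case split in \eqref{eq:magnetic_boundary} correctly — in particular checking that the directed-edge columns are genuinely untouched (which is where the \validoriented hypothesis is essential, since without it a directed edge could have a flipped orientation and the phase factors $\exp(\pm i\pi q)$ would \emph{not} simply negate). Everything after that is the routine algebra above exploiting that $\orientationtransformation$ is a real symmetric involution.
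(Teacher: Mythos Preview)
Your proposal is correct and follows essentially the same approach as the paper: the paper first isolates the column identity $\mboundaryequi(\gG,\hat\orientation)=\mboundaryequi(\gG,\orientation)\,\orientationtransformation$ as a separate proposition (proved by the same directed/undirected case split you describe, using the \validoriented hypothesis for directed edges), and then derives the lemma via the identical four-line computation exploiting $\orientationtransformation^H=\orientationtransformation$ and $\orientationtransformation^2=\mI$. Your remark that $h$ is unconstrained because the cancellation happens inside its argument matches the paper's observation as well.
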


Analogously, we use complex phase shifts to encode direction into the boundary operator $\boundaryinv$ (\Cref{eq:boundary_invariant}) that induces the Laplacian $\elinv$ for orientation-invariant signals:
\begin{equation}\label{eq:mboundary_invariant}
  \left[\mboundaryinv\right]_{v,e} =
  \begin{cases}
    \exp(i \pi q) & \text{if } e = (v, \cdot) \text{ and } e \in \Edirected \\
    \exp(-i \pi q)  & \text{if } e = (\cdot, v) \text{ and } e \in \Edirected \\
    1 & \text{if } v \in e \text{ and } e \in \Eundirected \\
    0  & \text{otherwise}
  \end{cases}
  \rebuttal{\,.}
\end{equation}
It induces a Magnetic Invariant Edge Laplacian $\smash{\melinv = (\mboundaryinv)^H \mboundaryinv}$ that performs message passing analogous to the Magnetic Equivariant Edge Laplacian. Like its direction-agnostic counterpart $\smash{\elinv}$, it is defined independently of orientation and does not re-orient its inputs. Instead, it generalizes $\smash{\elinv}$ by only encoding edge direction through the direction of a complex phase shift similar to $\smash{\melequi}$. It is a jointly orientation-invariant mapping as per \Cref{def:orientation_invariance}.

\begin{restatable}{lem}{elorientationinvariance}
\label{thm:elorientationinvariance}
The Magnetic Invariant Edge Laplacian
implies a jointly orientation-invariant mapping $g(\mX_\equi, \mX_\inv, \gG, \orientation) = (\mboundaryinv)^H h(\mboundaryinv\mX_\inv)$.
\end{restatable}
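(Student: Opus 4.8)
The plan is to verify \Cref{def:orientation_invariance} directly, by showing that for any two \validoriented orientations $\orientation$ and $\hat{\orientation}$ both sides of the required identity reduce to literally the same expression. Two reductions make this short. First, the map $g$ never reads its orientation-equivariant argument: the right-hand side $(\mboundaryinv)^H h(\mboundaryinv\mX_\inv)$ only passes $\mX_\inv$ through $\mboundaryinv$ and a fixed, orientation-independent node-level map $h$. Hence replacing $\mX_\equi$ by $\orientationtransformation\mX_\equi$ changes nothing, and it suffices to prove that the value of $g(\cdot,\mX_\inv,\gG,\orientation)$ does not depend on which \validoriented orientation is plugged in. Second, since that value is built entirely from $\mboundaryinv$ and $h$, it is enough to show that the matrix $\mboundaryinv$ of \Cref{eq:mboundary_invariant} is the same whether it is assembled from $\orientation$ or from $\hat{\orientation}$.

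For the latter I would simply walk through the four cases of \Cref{eq:mboundary_invariant}. For a directed edge $e\in\Edirected$ the nonzero entries are $\exp(\pm i\pi q)$, and which sign occurs is decided by whether $e=(v,\cdot)$ or $e=(\cdot,v)$ — i.e. by the ordered tuple that \emph{is} the directed edge — which every \validoriented orientation pins to the edge's direction and therefore leaves untouched. For an undirected edge $e\in\Eundirected$ the entry is $1$ exactly when $v\in e$, a pure incidence condition with no reference to $\orientation$. So the instance of $\mboundaryinv$ built from $\orientation$ and the one built from $\hat{\orientation}$ are identical, hence $(\mboundaryinv)^H h(\mboundaryinv\mX_\inv)$ is verbatim the same for both; combined with the first reduction this yields $g(\mX_\equi,\mX_\inv,\gG,\orientation)=g(\orientationtransformation\mX_\equi,\mX_\inv,\gG,\hat{\orientation})$, which is exactly joint orientation invariance. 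As a by-product the same case check shows $\melinv=(\mboundaryinv)^H\mboundaryinv$ is itself orientation-independent.

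The only place that needs attention — and it is light — is being explicit in the case analysis that \emph{direction-consistency} of both orientations is precisely what licenses treating the directed-edge entries of $\mboundaryinv$ as fixed input data, so that no orientation dependence can leak in through an undirected edge incident to a directed one. This is where the argument is genuinely easier than \Cref{thm:elorientationequivariance}: there $\mboundaryequi$ really does change with $\orientation$ on undirected edges, and one must use the relation re-expressing $\mboundaryequi$ built from $\hat{\orientation}$ as $\mboundaryequi$ built from $\orientation$ post-multiplied by $\orientationtransformation$, together with $\orientationtransformation^2=\mI$; here the corresponding relation is the trivial one, $\mboundaryinv$ unchanged, which is why invariance falls out immediately.
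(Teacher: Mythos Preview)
Your proposal is correct and follows essentially the same route as the paper: the paper isolates the key fact that $\mboundaryinv(\gG,\orientation)=\mboundaryinv(\gG,\hat{\orientation})$ for any two \validoriented orientations as a separate proposition (proved by the same directed/undirected case split you give), and then the lemma follows in one line by substituting this into $g$. Your only addition is making explicit that $g$ ignores its equivariant argument so the $\orientationtransformation\mX_\equi$ substitution is vacuous, which the paper leaves implicit.
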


\textbf{Fusing Invariant and Equivariant Signals.} Both \(\smash{\melequi}\) and \(\smash{\melinv}\) are convolutions within edge signals of the same modality, i.e. orientation-equivariant or orientation-invariant signals. We enable information exchange between both by combining the boundary operators of \Cref{eq:magnetic_boundary,eq:mboundary_invariant}. First, we define a Laplacian to transform orientation-equivariant edge signals into orientation-invariant edge signals as \(\smash{\melequitoinv = (\mboundaryinv)^H \mboundaryequi}\). This operator allows modeling orientation-invariant outputs even if no orientation-invariant inputs are available. Since it is constructed from direction-aware boundary operators it, too, induces complex phase shifts to encode directed edges. 

Analogously, a fusion operator that transforms orientation-invariant edge signals into orientation-equivariant edge signals can be constructed as $\smash{\melinvtoequi = (\boundaryequi)^H \boundaryinv}$. It transforms orientation-invariant inputs into an orientation-equivariant edge signal. Again, directed edges are encoded with complex phase shifts.
The outputs of both operators satisfy joint orientation equivariance/invariance (\Cref{def:orientation_equivariance,def:orientation_invariance}) for undirected edges for which no complex phase shift is applied.
\begin{restatable}{lem}{fusionmelequiinvariances}
\label{thm:fusionmelequiinvariances}
    The Invariant and Equivariant Fusion Magentic Edge Laplacians
    implies a jointly orientation-invariant and jointly orientation-equivariant mapping $g(\mX_\equi, \mX_\inv, \gG, \orientation) = (\mboundaryinv)^H h(\mboundaryequi\mX_\equi$)
    and a $f(\mX_\equi, \mX_\inv, \gG, \orientation) = (\mboundaryequi)^H h(\mboundaryinv\mX_\inv)$ respectively.
\end{restatable}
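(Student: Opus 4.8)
The plan is to verify the two claimed transformation laws directly from the behaviour of the boundary operators under an orientation change, exactly as was done for Lemmata \ref{thm:elorientationequivariance} and \ref{thm:elorientationinvariance}. The key observation is that an orientation change between two \validoriented orientations $\orientation, \hat\orientation$ only flips the arbitrary reference of \emph{undirected} edges, and this is recorded by the diagonal sign matrix $\orientationtransformation$. First I would establish the two "commutation" identities for the magnetic boundary operators: under the orientation change, $\mboundaryequi$ becomes $\mboundaryequi \orientationtransformation$ (each undirected column gets its sign flipped according to $\orientationtransformation$, while directed columns are untouched because their orientation is fixed to the edge direction), whereas $\mboundaryinv$ is genuinely \emph{unchanged}, since its undirected columns (see \Cref{eq:mboundary_invariant}) are defined via $v \in e$ with no dependence on orientation, and its directed columns again do not move. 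These two facts are the crux; everything else is bookkeeping.

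Given those identities, for the equivariant-to-invariant fusion map $g(\mX_\equi,\mX_\inv,\gG,\orientation) = (\mboundaryinv)^H h(\mboundaryequi \mX_\equi)$ I would compute
\begin{equation*}
g(\orientationtransformation\mX_\equi,\mX_\inv,\gG,\hat\orientation) = (\mboundaryinv[\hat\orientation])^H h\!\left(\mboundaryequi[\hat\orientation]\,\orientationtransformation\mX_\equi\right) = (\mboundaryinv)^H h\!\left(\mboundaryequi\,\orientationtransformation\,\orientationtransformation\mX_\equi\right) = (\mboundaryinv)^H h(\mboundaryequi\mX_\equi),
\end{equation*}
using $\orientationtransformation^2 = \mI$ (the sign matrix is an involution). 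This is precisely the joint orientation-invariance condition of \Cref{def:orientation_invariance}, so $g$ is jointly orientation-invariant. Symmetrically, for $f(\mX_\equi,\mX_\inv,\gG,\orientation) = (\mboundaryequi)^H h(\mboundaryinv\mX_\inv)$ I would compute
\begin{equation*}
f(\orientationtransformation\mX_\equi,\mX_\inv,\gG,\hat\orientation) = (\mboundaryequi[\hat\orientation])^H h\!\left(\mboundaryinv[\hat\orientation]\,\mX_\inv\right) = (\mboundaryequi\,\orientationtransformation)^H h(\mboundaryinv\mX_\inv) = \orientationtransformation\,(\mboundaryequi)^H h(\mboundaryinv\mX_\inv) = \orientationtransformation f(\mX_\equi,\mX_\inv,\gG,\orientation),
\end{equation*}
where I used that $\orientationtransformation$ is real diagonal so $(\mboundaryequi\orientationtransformation)^H = \orientationtransformation (\mboundaryequi)^H$, and that $\mX_\inv$ and the invariant boundary operator are untouched by the reorientation. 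This is the joint orientation-equivariance condition of \Cref{def:orientation_equivariance}, so $f$ is jointly orientation-equivariant. (Here $h$ is an arbitrary per-node map applied row-wise; no assumption on it is needed beyond acting on node signals, since we only ever push $\orientationtransformation$ past the boundary operators, never through $h$.)

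The main obstacle — really the only subtle point — is carefully checking the transformation behaviour of the two magnetic boundary operators on the \emph{directed} columns. One must argue that the complex phase factors $\exp(\pm i\pi q)$ attached to directed edges do not interfere: because \validoriented orientations always satisfy $\orientation(e_\directed) = \hat\orientation(e_\directed) = e_\directed$, the directed columns of both $\mboundaryequi$ and $\mboundaryinv$ are literally identical under $\orientation$ and $\hat\orientation$, so the entire change is confined to the undirected block where $\mboundaryinv$ is orientation-free and $\mboundaryequi$ picks up exactly the factor $\orientationtransformation$ restricted to that block — and since $\orientationtransformation$ acts as the identity on directed edges, $\mboundaryequi[\hat\orientation] = \mboundaryequi[\orientation]\orientationtransformation$ holds on all columns. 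Once this is spelled out, the two displayed computations close the proof; I would also remark that the conjugate transpose rather than plain transpose is harmless since $\orientationtransformation$ is real.
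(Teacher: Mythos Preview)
Your proposal is correct and follows essentially the same approach as the paper: both argue via the two boundary-operator identities $\mboundaryequi(\gG,\hat\orientation)=\mboundaryequi(\gG,\orientation)\,\orientationtransformation$ and $\mboundaryinv(\gG,\hat\orientation)=\mboundaryinv(\gG,\orientation)$ (stated there as \Cref{prop:mboundaryeqi,prop:mboundaryinv}), then push $\orientationtransformation$ through using $\orientationtransformation^2=\mI$ and $(\orientationtransformation)^H=\orientationtransformation$ to obtain the invariance of $g$ and equivariance of $f$. Your displayed computations match the paper's line for line.
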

Since \model is composed of Laplacians that are jointly orientation-equivariant/-invariant mappings respectively, it satisfies the desiderata for edge-level GNNs stated in \Cref{sec:properties}. The proof follows \Cref{thm:elorientationequivariance,thm:elorientationinvariance,thm:fusionmelequiinvariances} and is supplied in \Cref{appendix:proofs}.

\begin{restatable}{thm}{modelproperties}
\label{thm:modelproperties}
\begin{inparaenum}[(i)]
    \item $\smash{\mH^{(L)}_\equi}$ is a jointly orientation-equivariant mapping.
    \item $\smash{\mH^{(L)}_\inv}$ is a jointly orientation-invariant mapping.
    \item Both $\smash{\mH^{(L)}_\equi}$ and $\smash{\mH^{(L)}_\inv}$ are permutation equivariant mappings.
\end{inparaenum}
\end{restatable}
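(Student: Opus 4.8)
I would prove parts~(i) and~(ii) by induction on the layer index $l\in\{0,\dots,L\}$, with the inductive hypothesis that, viewed as functions of $(\mX_\equi,\mX_\inv,\gG,\orientation)$, the hidden representation $\mH^{(l)}_\equi$ is jointly orientation-equivariant in the sense of \Cref{def:orientation_equivariance} and $\mH^{(l)}_\inv$ is jointly orientation-invariant in the sense of \Cref{def:orientation_invariance}. The base case $l=0$ is immediate because $\mH^{(0)}_\equi=\mX_\equi$ and $\mH^{(0)}_\inv=\mX_\inv$ are, as maps of $(\mX_\equi,\mX_\inv,\gG,\orientation)$, the projections onto the equivariant and the invariant input, which satisfy \Cref{def:orientation_equivariance,def:orientation_invariance} by inspection. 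Part~(iii) is treated separately at the end.

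\textbf{Closure toolkit.} The induction step is assembled from a handful of closure facts, all of which ultimately rely on the structural observation that an orientation change is implemented by the \emph{real} diagonal matrix $\orientationtransformation$ with entries in $\{+1,-1\}$, equal to the identity on directed edges. First I would record: (a)~right-multiplication by an orientation-independent weight matrix commutes with left-multiplication by $\orientationtransformation$, hence preserves both joint orientation-equivariance and joint orientation-invariance; (b)~a sum of maps that are all jointly orientation-equivariant (all jointly orientation-invariant) is again jointly orientation-equivariant (invariant), since $\orientationtransformation$ acts linearly; (c)~because $\orientationtransformation$ is diagonal, the entrywise product $\odot$ of a jointly orientation-equivariant map with a jointly orientation-invariant map is jointly orientation-equivariant (the per-row $\pm1$ factor passes through the product), while $\odot$ of two jointly orientation-invariant maps is jointly orientation-invariant; (d)~since $|{\pm1}|=1$, applying the entrywise modulus $\abs(\cdot)$ to a jointly orientation-equivariant map yields a jointly orientation-invariant map; (e)~the sign-equivariant nonlinearity $\sigma_\equi$ (with $\sigma_\equi(-x)=-\sigma_\equi(x)$) commutes entrywise with $\orientationtransformation$ and so preserves joint orientation-equivariance, whereas the arbitrary $\sigma_\inv$ applied entrywise preserves joint orientation-invariance; and (f)~precomposing the signal argument of any of the Laplacian-based message-passing maps $f^{(l)}_\equi$, $f^{(l)}_\inv$, $f^{(l)}_{\inv\rightarrow\equi}$, $f^{(l)}_{\equi\rightarrow\inv}$ with a map of the matching input type produces a map of the corresponding output type. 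This last point combines \Cref{thm:elorientationequivariance,thm:elorientationinvariance,thm:fusionmelequiinvariances} with the fact that changing the orientation from $\orientation$ to $\hat\orientation$ right-multiplies $\boundaryequi$ (and its magnetic variant) by $\orientationtransformation$ while leaving $\boundaryinv$ (and its magnetic variant) unchanged, so the required intertwining with $\orientationtransformation$ chains cleanly through the inductive hypothesis.

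\textbf{Induction step.} With the toolkit in hand, the step reads directly off \Cref{eq:convolutionequi,eq:convolutioninv,eq:fusion,eq:fusion2}. Assuming $\mH^{(l-1)}_\equi$ is jointly orientation-equivariant and $\mH^{(l-1)}_\inv$ jointly orientation-invariant: in \Cref{eq:convolutionequi} each of the three summands is jointly orientation-equivariant --- the first by~(f) and~(a), the second by~(f) (via \Cref{thm:fusionmelequiinvariances}) and~(a), the third directly by the hypothesis and~(a) --- so $\mZ^{(l)}_\equi$ is jointly orientation-equivariant by~(b) and~(e); symmetrically, $\mZ^{(l)}_\inv$ from \Cref{eq:convolutioninv} is jointly orientation-invariant. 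Then in \Cref{eq:fusion} the product $\mZ^{(l)}_\equi\mW^{(l)}_{\fusion,\equi\rightarrow\equi}\odot\mZ^{(l)}_\inv\mW^{(l)}_{\fusion,\inv\rightarrow\equi}$ is jointly orientation-equivariant by~(c), adding $\mZ^{(l)}_\equi$ preserves this by~(b), and $\sigma_\equi$ preserves it by~(e), so $\mH^{(l)}_\equi$ is jointly orientation-equivariant; likewise in \Cref{eq:fusion2} the factor $\abs(\mZ^{(l)}_\equi\mW^{(l)}_{\fusion,\equi\rightarrow\inv})$ is jointly orientation-invariant by~(d), its product with $\mZ^{(l)}_\inv\mW^{(l)}_{\fusion,\inv\rightarrow\inv}$ is jointly orientation-invariant by~(c), and the residual and $\sigma_\inv$ preserve this, so $\mH^{(l)}_\inv$ is jointly orientation-invariant. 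Setting $l=L$ yields~(i) and~(ii).

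\textbf{Permutation equivariance and the main obstacle.} For~(iii), every primitive appearing in the network --- the boundary operators $\boundaryequi,\boundaryinv$ and their magnetic variants (whose entries depend only on node--edge incidence and, for directed edges, on direction), the node-domain transformations, the entrywise nonlinearities, the entrywise products, and right-multiplication by weights --- is equivariant under simultaneously permuting the node and edge labels, and equivariance is preserved under composition; the explicit bookkeeping I would relegate to \Cref{appendix:permutation_equivariance_proofs}. I expect the only genuinely delicate point of the whole argument to be the two cross-modality, non-Laplacian operations in the fusion step: one must use that $\orientationtransformation$ is a real $\pm1$ diagonal matrix to see that $\odot$ sends an (equivariant, invariant) pair to an equivariant signal and that the modulus $\abs(\cdot)$ sends an equivariant signal to an invariant one. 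The phase-shift subtleties for directed edges have already been absorbed into \Cref{thm:elorientationequivariance,thm:elorientationinvariance,thm:fusionmelequiinvariances}, so nothing deeper remains.
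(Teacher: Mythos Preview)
Your proposal is correct and follows essentially the same route as the paper: an induction on the layer index $l$ with base case $\mH^{(0)}_{\equi}=\mX_\equi$, $\mH^{(0)}_{\inv}=\mX_\inv$, where the inductive step first shows $\mZ^{(l)}_\equi$ is jointly orientation-equivariant and $\mZ^{(l)}_\inv$ jointly orientation-invariant via \Cref{thm:elorientationequivariance,thm:elorientationinvariance,thm:fusionmelequiinvariances} and the sign-equivariance of $\sigma_\equi$, and then handles the fusion by exploiting that $\orientationtransformation$ is a real $\pm1$ diagonal matrix so that $\odot$ and $\abs$ behave as you describe; permutation equivariance is a parallel induction. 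Your ``closure toolkit'' packaging is a slight organizational difference only---the paper proves the same facts inline (its \Cref{prop:signequivariance} is your item~(e), and its direct manipulations in the proof of \Cref{lem:h_equi} are your items~(a)--(d)).
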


\section{Experiments}\label{sec:experiments}

We showcase the efficacy of \model on three synthetic problems and three tasks on five real-world datasets. We devise the synthetic tasks to require proper handling of directionality as well as orientation-equivariant and orientation-invariant information.
 We categorize all datasets in terms of whether the graphs are (partially) directed and if features and/or targets are orientation-equivariant or orientation-invariant in \Cref{tab:dataset_categories} (see details in \Cref{appendix:experimental_details}).

\textbf{Baselines.} We compare \model to five baselines:
\begin{inparaenum}[(i)]
    \item An \modelname{MLP}.
    \item \modelname{LineGraphGNN}, a node-level spectral GNN applied to the line graph of the problem \citep{bandyopadhyay2020linehypergraphconvolutionnetwork}).
    \item \modelname{HodgeGNN} \citep{roddenberry2021principled, park2023convolving}, based on the Edge Laplacian $\elequi$ (see \Cref{sec:background}): It assumes inputs to be orientation-equivariant and edges to be undirected.
    \item \hodgegnninvariant\xspace as a variant of \hodgegnninvariant\xspace that models orientation-invariant features as orientation-equivariant.
    \item \modelname{\hodgegnnundirected}, a variant of \hodgegnn\xspace that breaks orientation equivariance, and, thus, treats all edges as directed.
    \rebuttal{We also adapt two node-level GNNs for directed graphs (see \Cref{appendix:model_details}):}
    \item \rebuttal{\modelname{\transformer}, a graph transformer similar to \citet{geisler2023transformers} and }
    \item \rebuttal{\modelname{\rossignn}, an edge-level GNN that represents directed edges through separate message-passing operations \citep{rossi2024edge,battaglia2018relational}.}
\end{inparaenum}
\rebuttal{As depicted in \Cref{tab:model_categories}, \model is applicable in every possible scenario. While \rossignn\xspace can be adapted to all modalities using our proposed Laplacians, this comes with significant drawbacks (see \Cref{appendix:model_details}).
To mitigate side-effects from the cyclical nature of complex phase shifts, we choose $q=1 / m$ (see \Cref{appendix:additional_results}).}

\subsection{Synthetic Tasks}
 
\begin{figure}[]
\hfill
\begin{minipage}{0.49\textwidth}
    \centering
    \captionof{table}{Modelling capabilities of all architectures. ``-'' denotes that the modality is modeled without satisfying orientation invariance/equivariance.}
   \label{tab:model_categories}
  \resizebox{0.99\linewidth}{!}{%
  
\begin{tabular}{lcccccc}
\toprule
\multirow{2}{*}{\textbf{Model}} & \multicolumn{2}{c}{\textbf{Edges}} & \multicolumn{2}{c}{\textbf{Features}} & \multicolumn{2}{c}{\textbf{Targets}}        \\
    \cmidrule(lr){2-3} \cmidrule(lr){4-5} \cmidrule(lr){6-7}
    & Dir. & Undir. & \Equi & \Inv & \Equi & \Inv \\
    \midrule
    \makecell[l]{\small{\mlp}} & \xmark & \xmark & \xmark & - & \xmark & - \\
    \makecell[l]{\small{\linegraphgnn}} & \xmark & \checkmark & \xmark & \checkmark& \xmark & \checkmark \\
    \makecell[l]{\small{\hodgegnn}} & \xmark & \checkmark & \checkmark & \xmark & \checkmark & \xmark \\
    \makecell[l]{\small{\hodgegnninvariant}} & \xmark & \checkmark & \checkmark & - & \checkmark & \xmark  \\
    \makecell[l]{\small{\hodgegnnundirected}} & \checkmark & \xmark & - & \xmark & - & \xmark \\
    \makecell[l]{\rebuttal{\small{\transformer}}} & \checkmark & \checkmark & - & \cmark & - & \cmark \\
    \makecell[l]{\rebuttal{\small{\rossignn*}}} & \checkmark & \checkmark & \checkmark & \checkmark & \checkmark & \checkmark \\
    \midrule
     \makecell[l]{\small{\modelname{EIGN}*}} & \checkmark & \checkmark & \checkmark & \checkmark & \checkmark & \checkmark \\
 \bottomrule
\end{tabular}

  }
\end{minipage}%
\hspace{1.em}
\begin{minipage}{0.47\textwidth}
    \vspace{3.5mm}
    \centering
    \captionof{table}{Datasets in terms of edge direction and input/target feature modality.}
    \label{tab:dataset_categories}
  \resizebox{0.99\linewidth}{!}{%
  
\begin{tabular}{lccccc}
\toprule
\multirow{2}{*}{\textbf{Dataset}} & \multicolumn{2}{c}{\textbf{Edges}} & \multicolumn{2}{c}{\textbf{Features}} & \multirow{2}{*}{\textbf{Targets}}           \\
    \cmidrule(lr){2-3} \cmidrule(lr){4-5}
    & Dir. & Undir. & \Equi & \Inv & \\
    \midrule
\makecell[l]{\randomwalkdenoising} & \checkmark & & & \checkmark & \Inv \\
\makecell[l]{\mixedlongestcycleidentification} & \checkmark & \checkmark & & & \Inv \\
\makecell[l]{\typedtrianglefloworientation} & \checkmark & \checkmark & \checkmark & \checkmark & \Equi \\
\midrule
Anaheim & \checkmark & \checkmark & (\checkmark) & \checkmark & \Equi \\
Barcelona & \checkmark & \checkmark & (\checkmark) & \checkmark & \Equi \\
Chicago & & \checkmark & (\checkmark)& \checkmark & \Equi \\ 
Winnipeg & \checkmark & \checkmark & (\checkmark)& \checkmark & \Equi \\
\midrule
\electricalcircuits & \checkmark & \checkmark & \checkmark & \checkmark & \Equi \\
\bottomrule
\end{tabular}

  }
\end{minipage}%
\hfill
\vspace{-1.em}
\end{figure}
\begin{wraptable}{r}{0.5\linewidth}
    \vspace{-0.45cm}
    \caption{Average performance of different models on synthetic tasks (\textcolor{phaseshiftgreen}{\textbf{best}} and \textcolor{phaseshiftorange}{\textbf{runner-up}}). \model is particularly effective on the hard \typedtrianglefloworientation \space problem with interactions between orientation-equivariant and orientation-invariant inputs.}
  \label{tab:results_synthetic_main}
  \centering
  \resizebox{0.99\linewidth}{!}{%
\begin{tabular}{lccc}
\toprule
\small{\textbf{Model}} & \makecell{\small{\randomwalkdenoising} \\ \tiny{AUC-ROC($\uparrow$)}} & \makecell{\small{\mixedlongestcycleidentification} \\ \tiny{AUC-ROC($\uparrow$)}} & \makecell{\small{\typedtrianglefloworientation} \\ \tiny{RMSE($\downarrow$)}} \\
\midrule
\footnotesize{\mlp} & {$0.720$} & {$0.500$} & {$0.547$} \\
\makecell[l]{\footnotesize{\linegraphgnn}} & {$0.758$} & {$0.683$} & {$0.497$} \\
\makecell[l]{\footnotesize{\hodgegnn}} & {$0.500$} & {$0.500$} & {$0.458$} \\
\makecell[l]{\footnotesize{\hodgegnninvariant}} & {$0.811$} & {$0.754$} & {$\textcolor{phaseshiftorange}{\textbf{0.293}}$} \\
\makecell[l]{\footnotesize{\hodgegnnundirected}} & {$\textcolor{phaseshiftorange}{\textbf{0.819}}$} & {$\textcolor{phaseshiftorange}{\textbf{0.799}}$} & {$\textcolor{phaseshiftorange}{\textbf{0.293}}$} \\
\makecell[l]{\footnotesize{\rebuttal{\transformer}}} & {$0.729$} & {$0.502$} & {$0.542$} \\
\makecell[l]{\footnotesize{\rebuttal{\rossignn}}} & {$0.757$} & {$0.768$} & {$0.453$} \\
\midrule
\makecell[l]{\footnotesize{\model}} & {$\textcolor{phaseshiftgreen}{\textbf{0.864}}$} & {$\textcolor{phaseshiftgreen}{\textbf{0.996}}$} & {$\textcolor{phaseshiftgreen}{\textbf{0.022}}$} \\
\bottomrule
\end{tabular}

  }
  \vspace{-0.7cm}
\end{wraptable}
\textbf{Random Walk Completion (\randomwalkdenoising).} The first synthetic problem we devise is completing random walks on a directed graph. We input the (orientation-invariant) transition probabilities and a subset of the edges that were traversed. The task is to classify if an edge is part of the random walk (details in \Cref{appendix:dataset_details}). Therefore, this problem tests if a model can distinguish different edge directions.

\textbf{Longest Directed Cycle Prediction (\mixedlongestcycleidentification).} We increase the problem difficulty by including both directed and undirected edges: We generate different graphs that contain cycles entirely composed of each respective edge type. The task is to predict which edges belong to the largest cycle of only directed edges (details in \Cref{appendix:dataset_details}). Since there are no additional inputs, this task tests a model's ability to distinguish directed and undirected edges.

\textbf{Triangle Flow Orientation (\typedtrianglefloworientation).} This is the most challenging synthetic task as it requires combining orientation-equivariant and orientation-invariant features in a partially directed graph.
We create multiple graphs containing disjoint triangles and provide an (orientation-equivariant) flow input. The task is to reorient the flow such that for triangles satisfying certain conditions there is no excess flux. These constraints are based on direction and an orientation-invariant attribute assigned to every edge, thus introducing a relationship between direction and both edge feature modalities.

\textbf{Results.} \Cref{tab:results_synthetic_main} shows that on all three synthetic tasks, \model is the superior architecture. Its high efficacy on the \randomwalkdenoising\space and \mixedlongestcycleidentification\space problems confirm its merits in modeling edge direction: It distinguishes between edges of different directions as well as undirected edges. As it satisfies our desiderata, it can model the relationship between orientation-equivariant and orientation-invariant features and relate them to edge direction. This is reflected in its impressive performance on the \typedtrianglefloworientation\xspace task while the direction-aware \hodgegnn\xspace as well as \hodgegnnundirected, which also combines both orientation-equivariant and -invariant inputs, struggle to achieve comparable results.

\subsection{Real-World Problems}

\begin{wraptable}{r}{0.7\linewidth}
    \vspace{-0.47cm}
    \caption{Average RMSE ($\downarrow$) of different models for the simulation task on real-world datasets (\textcolor{phaseshiftgreen}{\textbf{best}} and \textcolor{phaseshiftorange}{\textbf{runner-up}}). \model achieves substantial improvements over all baselines.}
  \label{tab:results_simulation_main}
  \centering
  \resizebox{0.99\linewidth}{!}{%
  \begin{tabular}{lccccc}
\toprule
\small{\textbf{Model}} & \small{Anaheim} & \small{Barcelona} & \small{Chicago} & \small{Winnipeg} & \makecell[c]{\small{\electricalcircuits}} \\
\midrule
\footnotesize{\mlp} & {$0.105$} & {$0.149$} & {$0.109$} & {$0.167$} & {$1.030$} \\
\makecell[l]{\footnotesize{\linegraphgnn}} & {$0.101$} & {$0.149$} & {$0.109$} & {$0.164$} & {$1.037$} \\
\makecell[l]{\footnotesize{\hodgegnn}} & {$0.280$} & {$0.170$} & {$0.107$} & {$0.173$} & {$1.016$} \\
\makecell[l]{\footnotesize{\hodgegnninvariant}} & {$0.098$} & {$0.146$} & {$0.108$} & {$0.151$} & {$0.828$} \\
\makecell[l]{\footnotesize{\hodgegnnundirected}} & {$\textcolor{phaseshiftorange}{\textbf{0.091}}$} & {$\textcolor{phaseshiftorange}{\textbf{0.144}}$} & {$0.109$} & {$\textcolor{phaseshiftorange}{\textbf{0.132}}$} & {$\textcolor{phaseshiftorange}{\textbf{0.760}}$} \\
\makecell[l]{\footnotesize{\rebuttal{\transformer}}} & {$0.119$} & {$0.151$} & {$\textcolor{phaseshiftorange}{\textbf{0.105}}$} & {$0.170$} & {$1.027$} \\
\makecell[l]{\footnotesize{\rebuttal{\rossignn}}} & {$0.278$} & {$0.170$} & {$0.106$} & {$0.173$} & {$1.029$} \\
\midrule
\makecell[l]{\footnotesize{\model}} & {$\textcolor{phaseshiftgreen}{\textbf{0.090}}$} & {$\textcolor{phaseshiftgreen}{\textbf{0.133}}$} & {$\textcolor{phaseshiftgreen}{\textbf{0.078}}$} & {$\textcolor{phaseshiftgreen}{\textbf{0.101}}$} & {$\textcolor{phaseshiftgreen}{\textbf{0.696}}$} \\
\bottomrule
\end{tabular}

  }
  \vspace{-0.0cm}
\end{wraptable}
\textbf{Datasets.} We also apply \model to real-world traffic networks and electrical circuits. 
\begin{inparaenum}[(i)]
    \item We select four transportation networks \citep{stabler2024transportation} (Anaheim, Barcelona, Chicago, Winnipeg) where the targets are the best-known flow solutions in terms of lowest Average Excess Cost \citep{boyce2004convergence}. While these datasets only contain orientation-invariant features, two tasks use information from the orientation-equivariant targets as additional inputs.
    \item We generate different electrical circuits consisting of resistances, diodes, and one power outlet with a given voltage. We then simulate currents and voltages using LTSpice \citep{asadi2022essential}. Each circuit has orientation-equivariant (voltage at the source) and orientation-invariant features (resistance, component type), while the target current is also orientation-equivariant (details in \Cref{appendix:dataset_details}
). 
\end{inparaenum}

\begin{wraptable}{r}{0.6\linewidth}
    \vspace{-0.47cm}
    \caption{Ablation of different components of \model on synthetic tasks and simulation on real data (\textcolor{phaseshiftgreen}{\textbf{best}} and \textcolor{phaseshiftorange}{\textbf{runner-up}}). We omit (i) direction-awarenes ($q=0$), (ii) the fusion operation of \Cref{eq:fusion,eq:fusion2}, (iii) the fusion operators $f_{\equi \rightarrow \inv}$ and $f_{\inv \rightarrow \equi}$, and (iv) the node embedding $h$.
    }
  \label{tab:results_ablation_main}
  \centering
  \resizebox{0.99\linewidth}{!}{%
  \begin{tabular}{llcccc|c}
\toprule
 &\small{\textbf{Dataset}} & \makecell{\model \\ \tiny{w/o Direction}} & \makecell{\model \\ \tiny{No Fusion}} & \makecell{\model \\ \tiny{No Fusion-Conv.}} & \makecell{\model \\ \tiny{No $h$}} & \makecell[l]{\footnotesize{\model}} \\
\midrule
\multirow{2}{*}{\makecell{\rotatebox[origin=c]{90}{\tiny{AUC}}} $\uparrow$} & \makecell{\small{\randomwalkdenoising} } & {$0.762$} & {$0.853$} & {$0.845$} & {$\textcolor{phaseshiftorange}{\textbf{0.862}}$} & {$\textcolor{phaseshiftgreen}{\textbf{0.864}}$} \\
 & \makecell{\small{\mixedlongestcycleidentification} } & {$0.689$} & {$\textcolor{phaseshiftorange}{\textbf{0.987}}$} & {$0.926$} & {$\textcolor{phaseshiftgreen}{\textbf{0.996}}$} & {$\textcolor{phaseshiftgreen}{\textbf{0.996}}$} \\
\midrule
\multirow{6}{*}{\makecell{\rotatebox[origin=c]{90}{\tiny{RMSE}}} $\downarrow$} & \makecell{\small{\typedtrianglefloworientation} } & {$0.362$} & {$0.088$} & {$0.074$} & {$\textcolor{phaseshiftorange}{\textbf{0.034}}$} & {$\textcolor{phaseshiftgreen}{\textbf{0.022}}$} \\
 & \small{Anaheim} & {$0.289$} & {$\textcolor{phaseshiftorange}{\textbf{0.097}}$} & {$0.283$} & {$0.099$} & {$\textcolor{phaseshiftgreen}{\textbf{0.090}}$} \\
 & \small{Barcelona} & {$0.172$} & {$\textcolor{phaseshiftorange}{\textbf{0.139}}$} & {$0.177$} & {$0.163$} & {$\textcolor{phaseshiftgreen}{\textbf{0.133}}$} \\
 & \small{Chicago} & {$\textcolor{phaseshiftorange}{\textbf{0.079}}$} & {$0.093$} & {$0.110$} & {$0.082$} & {$\textcolor{phaseshiftgreen}{\textbf{0.078}}$} \\
 & \small{Winnipeg} & {$\textcolor{phaseshiftorange}{\textbf{0.132}}$} & {$0.170$} & {$0.175$} & {$0.138$} & {$\textcolor{phaseshiftgreen}{\textbf{0.101}}$} \\
 & \makecell[l]{\small{\electricalcircuits}} & {$0.957$} & {$0.974$} & {$0.727$} & {$\textcolor{phaseshiftorange}{\textbf{0.707}}$} & {$\textcolor{phaseshiftgreen}{\textbf{0.696}}$} \\
\bottomrule
\end{tabular}

  }
  \vspace{-0.5cm}
\end{wraptable}
\textbf{Tasks.} For each dataset, we study three problems of increasing difficulty for each of which the edge flow needs to be predicted:
\begin{inparaenum}[(i)]
    \item \emph{Denoising}: We noise the target flow and provide it as an (additional) equivariant input.
    \item \emph{Interpolation}: We supply the target flow on a subset of edges as an (additional) equivariant input.
    \item \emph{Simulation}: The target flow is to be predicted without any additional inputs.
\end{inparaenum}

\textbf{Results.} We report the RMSE ($\downarrow$) of all models for the simulation task in \Cref{tab:results_simulation_main} and defer results for the easier denoising and interpolation problems to \Cref{appendix:additional_results}. \model substantially improves over all baselines in interpolation.
On the challenging simulation task, \model shows the merit of treating orientation-equivariant and orientation-invariant signals in a principled way.

\subsection{Ablations}\label{sec:ablations}

\textbf{Fusion Operators.} In \Cref{sec:mel}, we introduce Laplacian operators to fuse orientation-equivariant and orientation-invariant features by transforming a signal from one modality into the other. This is particularly useful as it allows \model to output orientation-equivariant signals in the absence of orientation-equivariant inputs and vice versa. 
Omitting them and only relying on \Cref{eq:fusion,eq:fusion2} for interaction between the two modalities is problematic for equivariant signals: 

The only choice for an equivariant input signal in the absence of such features is $0$. 
Intuitively, this is because the sign of an orientation-equivariant input indicates its direction relative to the chosen orientation. However, the ''absence`` of an input has no inherent direction. 
Setting the equivariant input to zero while at the same time omitting the $\smash{f_{\inv \rightarrow \equi}}$ term in \Cref{eq:convolutionequi} results embeddings $\smash{\mZ_\equi^{(l)} = \bm{0}}$ for undirected edges (see \Cref{appendix:invtoequirequired}), and, consequently, $\smash{\mH^{(l)} = \bm{0}}$ for \Cref{eq:fusion} as well. Therefore, omitting the Laplacian fusion operation from \Cref{eq:convolutionequi} prohibits the model from learning any non-zero orientation-equivariant output in the absence of orientation-equivariant inputs. \Cref{tab:results_ablation_main} confirms that omitting these fusion operators results in considerably worse performance in simulation tasks where no orientation-equivariant features are available. 

At the same time, omitting the fusion operation of \Cref{eq:fusion,eq:fusion2} and only relying on the Laplacian fusion operators $\smash{f_{\equi \rightarrow \inv}}$, $\smash{f_{\inv \rightarrow \equi}}$ restricts the information exchange between orientation-equivariant and orientation-invariant features to only exchanging aggregated information through the convolutions $\smash{f_{\equi \rightarrow \inv}}$ and $\smash{f_{\inv \rightarrow \equi}}$. Therefore, it is more difficult to learn direct interactions between the signal modalities of \emph{the same} edge. Both on \typedtrianglefloworientation\xspace and real data, where these interactions are relevant, omitting the operators of \Cref{eq:fusion,eq:fusion2} leads to worse performance, see \Cref{tab:results_ablation_main}.

\begin{wrapfigure}{r}{0.45\linewidth}
    \centering
    \vspace{-4mm}
    \input{figures/anaheim_directed_edges_flow.pgf} %
      \vspace{-0mm}
        \caption{Distribution of simulated traffic flow on Anaheim for \model and a direction-agnostic variant. Direction-awareness makes predictions that obey flow constraints of one-way streets more likely.}
        \label{fig:anaheim_directed_edges_flow}
      \vspace{-3mm}
\end{wrapfigure}

\textbf{Direction-Awareness.} We also study the impact of using complex-valued Laplace operators to make \model direction-aware by ablating a direction-agnostic version of \model in \Cref{tab:results_ablation_main} ($q=0$). First, the \randomwalkdenoising\xspace and \mixedlongestcycleidentification\xspace tasks that test direction-awareness benefit greatly from encoding direction with complex phase shifts. Second, many real-world datasets contain directed edges that constrain flows. Accurately modeling the system requires distinguishing between directed and undirected edges which is reflected in the superior performance of the direction-aware \model. Finally, \Cref{fig:anaheim_directed_edges_flow} shows that the direction-aware \model yields more plausible predictions than the direction-agnostic variant for one-way roads. It learns the problem constraints without encoding them into the architecture explicitly. 

\section{Limitations}

We address lack of benchmarks for edge-level tasks by designing three synthetic tasks and propose a novel task involving real-world electric circuits. 
Even though \model is grounded in Algebraic Topology, we limit our study to edge-level problems and do not model higher-order structures.
\rebuttal{W use Magnetic Laplacians to encode edge direction but other frameworks satisfying our formal desiderata may be effective as well in practice.}
Lastly, since our Laplacians are \emph{normal matrices}, they enable global propagation similar to \citet{geisler2024spatio-spectral} to extend upon the local scheme used in our work.

\section{Conclusion}

We propose \model, a framework for edge-level problems that allows modeling orientation-equivariant and orientation-invariant features and can encode edge direction. It relies on novel graph shift operators that provably preserve novel notions of joint orientation-equivariance and -invariance for undirected edges while also being sensitive to flipping directed edges. On a benchmark of synthetic tasks and real-world flow modeling problems from two domains, we show the high efficacy of the inductive biases encoded by \model.

\newpage

\newpage

\section*{Ethics Statement}

We acknowledge that we thoroughly read and adhere to the code of ethics. Since our work can be categorized as foundational research, we do not see any immediate implications beyond the risk of advancing Machine Learning, in general. We, nonetheless, encourage readers and practitioners building on our work to keep in mind the potential risks in the context of reliability, interpretability, fairness, and privacy for which we do explicitly account.

\section*{Reproducability Statement}

We detail assumptions and proofs for all claims made in our work clearly in \Cref{appendix:proofs}. Furthermore, we describe in detail how the data and models are tuned in \Cref{appendix:experimental_details}. Additionally, we provide our code and the optimal hyperparameter configurations we found in the supplementary material.

\bibliography{iclr2025_conference}
\bibliographystyle{iclr2025_conference}

\newpage

\appendix

\section{Proofs}\label{appendix:proofs}

\subsection{Joint Orientation Equivariance and Joint Orientation Invariance}

\begin{proposition}\label{prop:mboundaryeqi}
Let $\orientation, \hat{\orientation}$ be two  \validoriented orientations of edges on $\gG$. Let $\mX_\equi \in \mathbb{C}^{m \times d}$. Then:
\begin{align*}
    \mboundaryequi(\gG, \orientation) \orientationtransformation = \mboundaryequi(\gG, \hat{\orientation})
\end{align*}
\end{proposition}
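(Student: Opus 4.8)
The statement is an entrywise identity between two $n \times m$ matrices, so the plan is to fix an arbitrary node $v \in \gV$ and edge $e \in \gE$ and verify $[\mboundaryequi(\gG,\orientation)\orientationtransformation]_{v,e} = [\mboundaryequi(\gG,\hat{\orientation})]_{v,e}$. Since $\orientationtransformation$ is diagonal, the left-hand side is simply $[\mboundaryequi(\gG,\orientation)]_{v,e} \cdot [\orientationtransformation]_{e,e}$, so everything reduces to tracking how a single column of $\mboundaryequi$ (indexed by $e$) transforms under the orientation change together with the sign $[\orientationtransformation]_{e,e} \in \{+1,-1\}$. I would split into the three natural cases that already appear in the definition of $\mboundaryequi$ in \Cref{eq:magnetic_boundary}.

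\textbf{Case 1: $e \in \Edirected$.} Because we only consider \validoriented orientations, directed edges satisfy $\orientation(e) = \hat{\orientation}(e) = e$, hence $[\orientationtransformation]_{e,e} = 1$. Moreover, for a directed edge the entry $[\mboundaryequi]_{v,e}$ depends only on $e$ itself (the first two branches of \Cref{eq:magnetic_boundary}), not on the chosen orientation, so $[\mboundaryequi(\gG,\orientation)]_{v,e} = [\mboundaryequi(\gG,\hat{\orientation})]_{v,e}$ and multiplying by $1$ changes nothing.

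\textbf{Case 2: $e \in \Eundirected$ with $\orientation(e) = \hat{\orientation}(e)$.} Here again $[\orientationtransformation]_{e,e} = 1$, and the relevant branches of \Cref{eq:magnetic_boundary} reference only $\orientation(e)$, which equals $\hat{\orientation}(e)$, so the two columns coincide. \textbf{Case 3: $e = \{u,w\} \in \Eundirected$ with $\orientation(e) \neq \hat{\orientation}(e)$.} Then $[\orientationtransformation]_{e,e} = -1$, and without loss of generality $\orientation(e) = (u,w)$ while $\hat{\orientation}(e) = (w,u)$. For $v \notin \{u,w\}$ both sides are $0$. For $v = u$: $[\mboundaryequi(\gG,\orientation)]_{u,e} = -1$, so the left-hand side is $(-1)(-1) = 1 = [\mboundaryequi(\gG,\hat{\orientation})]_{u,e}$; symmetrically for $v = w$ one gets $(1)(-1) = -1 = [\mboundaryequi(\gG,\hat{\orientation})]_{w,e}$. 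Assembling all cases gives the claimed equality.

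\textbf{Main obstacle.} There is no real difficulty here; the only thing that requires care is making explicit the observation that the \validoriented hypothesis pins down $\orientation$ and $\hat{\orientation}$ on directed edges so that $\orientationtransformation$ acts as the identity there — this is the one place where the restriction in \Cref{def:orientation_equivariance} to direction-consistent orientations is actually used, and it is worth stating it as a separate sentence rather than burying it inside the case split.
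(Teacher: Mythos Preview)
Your proof is correct and follows essentially the same approach as the paper's own proof: a case split on whether the edge is directed or undirected, and in the undirected case on whether the two orientations agree. Your version is more explicit (in particular, you actually compute the entries in Case~3 rather than just asserting the sign flip), but the structure and the key observation about the \validoriented hypothesis forcing $[\orientationtransformation]_{e,e}=1$ on directed edges are identical.
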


\begin{proof}
    By assumption, for any $e_\directed \in \Edirected$ we have that $\hat{\orientation}(e_\directed) = \orientation(e_\directed) = \orientationtransformation \orientation(e_\directed)$. Now consider the case $e_\undirected \in \Eundirected$.
If $\orientation(e_\undirected) = \hat{\orientation}(e_\undirected)$, then $\mboundaryequi(\gG, \orientation)_{v, e_\undirected} = \mboundaryequi(\hat{\gG, \orientation})_{v, e_\undirected}$ for all $v \in \gV$. Similarly, if $\orientation(e_\undirected) \neq \hat{\orientation}(e_\undirected)$, then $[\mboundaryequi(\gG, \orientation)]_{v, e} = -[\mboundaryequi(\gG, \hat{\orientation})]_{v, e_\undirected}$ for all $v \in \gV$.
\end{proof}

\elorientationequivariance*

\begin{proof}
 Let $\orientation, \hat{\orientation}$ be two \validoriented orientations of edges on $\gG$. Let $\mX_\equi \in \mathbb{C}^{m \times d_\equi}$ and $\mX_\inv \in \mathbb{C}^{m \times d_\inv}$. Then:
\begin{align*}
    f(\orientationtransformation \mX_\equi, \mX_\inv, \gG, \hat{\orientation})
    &= (\mboundaryequi(\gG, \hat{\orientation}))^H h(\mboundaryequi(\gG, \hat{\orientation})\orientationtransformation \mX_\equi) \\
    &= (\mboundaryequi(\gG, \orientation) \orientationtransformation)^H h(\mboundaryequi(\gG, \orientation) \orientationtransformation \orientationtransformation \mX_\equi) \\
    &= (\orientationtransformation)^H(\mboundaryequi(\gG, \orientation))^H h(\mboundaryequi(\gG, \orientation) \mX_\equi) \\
    &= \orientationtransformation f(\mX_\equi, \mX_\inv, \gG, \orientation)
\end{align*}
Here, we used \Cref{prop:mboundaryeqi} and the facts that $\orientationtransformation\orientationtransformation = \mI_m$ by definition and $\orientationtransformation = (\orientationtransformation)^H$ as it is a diagonal real matrix. The proof holds for arbitrary node feature transformations $h$.

\end{proof}

\begin{proposition}\label{prop:mboundaryinv}
Let $\orientation, \hat{\orientation}$ be two \validoriented orientations of edges on $\gG$. Let $\mX_\equi \in \mathbb{C}^{m \times d}$. Then:
\begin{align*}
    \mboundaryinv(\gG, \orientation) = \mboundaryinv(\gG, \hat{\orientation})
\end{align*}
\end{proposition}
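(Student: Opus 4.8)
The plan is to argue directly from the definition of $\mboundaryinv$ in \Cref{eq:mboundary_invariant}, which I claim never references the orientation $\orientation$ in a way that could change under $\orientation \mapsto \hat{\orientation}$. I would split into the two edge types and check that each nonzero entry $[\mboundaryinv]_{v,e}$ is determined by orientation-independent data.

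First, for a directed edge $e \in \Edirected$: the first two cases of \Cref{eq:mboundary_invariant} are keyed on whether $e = (v, \cdot)$ or $e = (\cdot, v)$, i.e.\ on the \emph{edge direction} $e \in \Edirected \subseteq \gV \times \gV$ itself, which is part of the fixed topology $\gG$ and not a function of the chosen orientation. (Equivalently, since both $\orientation$ and $\hat{\orientation}$ are \validoriented, $\orientation(e_\directed) = e_\directed = \hat{\orientation}(e_\directed)$, so even phrased via the orientation the two agree.) Hence $[\mboundaryinv(\gG,\orientation)]_{v,e} = [\mboundaryinv(\gG,\hat{\orientation})]_{v,e}$ for all $v \in \gV$ and all $e \in \Edirected$. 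Second, for an undirected edge $e \in \Eundirected$: the relevant case of \Cref{eq:mboundary_invariant} gives $[\mboundaryinv]_{v,e} = 1$ exactly when $v \in e$ and $0$ otherwise, where $v \in e$ is membership in an \emph{unordered} pair and is therefore symmetric; in particular it does not depend on which of the two endpoints the orientation designates as head or tail. So again $[\mboundaryinv(\gG,\orientation)]_{v,e} = [\mboundaryinv(\gG,\hat{\orientation})]_{v,e}$ for all $v$ and all $e \in \Eundirected$.

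Combining the two cases, the matrices agree entrywise, hence $\mboundaryinv(\gG,\orientation) = \mboundaryinv(\gG,\hat{\orientation})$, which is the claim. This parallels \Cref{prop:mboundaryeqi} but is strictly easier: there, undirected edges whose orientation was flipped contributed a sign flip, which is exactly the content of $\orientationtransformation$ on the right-hand side; here the unsigned (invariant) boundary operator has no sign to flip on undirected edges, so there is simply nothing to correct and the identity holds without any $\orientationtransformation$ factor. I do not anticipate a real obstacle; the only point requiring a moment's care is to note that directed edges are pinned down by the \validoriented assumption, so that the directed-edge entries are genuinely orientation-independent rather than merely orientation-equivariant.

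This proposition is then exactly the ingredient needed, in the same way \Cref{prop:mboundaryeqi} is used in the proof of \Cref{thm:elorientationequivariance}, to establish \Cref{thm:elorientationinvariance}: substituting $\mboundaryinv(\gG,\hat\orientation) = \mboundaryinv(\gG,\orientation)$ into $g(\orientationtransformation\mX_\equi,\mX_\inv,\gG,\hat\orientation) = (\mboundaryinv(\gG,\hat\orientation))^H h(\mboundaryinv(\gG,\hat\orientation)\mX_\inv)$ immediately collapses it to $g(\mX_\equi,\mX_\inv,\gG,\orientation)$, since $\mboundaryinv$ acts on the orientation-invariant signal $\mX_\inv$, which is itself unchanged.
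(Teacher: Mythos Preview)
Your proposal is correct and follows essentially the same approach as the paper: a case split on directed versus undirected edges, using the \validoriented assumption to pin down directed edges and observing that the undirected case of \Cref{eq:mboundary_invariant} depends only on incidence $v \in e$ and not on orientation. The paper's proof is terser but structurally identical.
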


\begin{proof}
    By assumption, for any $e_\directed \in \Edirected$ we have that $\hat{\orientation(e_\directed)} = \orientation(e_\directed) = \orientationtransformation \orientation(e_\directed)$ and consequentially, $[\mboundaryinv(\gG, \orientation)]_{v, e_\directed} = [\mboundaryinv(\gG, \hat{\orientation})]_{v, e_\directed}$ for all $v \in \gV$. For $e_\undirected \in \Eundirected$, the claim follows directly from \Cref{eq:mboundary_invariant}.
\end{proof}

\elorientationinvariance*

\begin{proof}
 Let $\orientation, \hat{\orientation}$ be two \validoriented orientations of edges on $\gG$. Let $\mX_\equi \in \mathbb{C}^{m \times d_\equi}$ and $\mX_\inv \in \mathbb{C}^{m \times d_\inv}$. Then:
\begin{align*}
    g(\mX_\equi, \mX_\inv, \gG, \hat{\orientation})
    &= (\mboundaryinv(\gG, \hat{\orientation}))^H h(\mboundaryinv(\gG, \hat{\orientation})\mX_\inv) \\
    &= (\mboundaryinv(\gG, \orientation) )^H h(\mboundaryinv(\gG, \orientation) \mX_\inv) \\
    &= g(\mX_\equi, \mX_\inv, \gG, \orientation)
\end{align*}
Here, we directly applied \Cref{prop:mboundaryinv}. The proof holds for arbitrary node feature transformations $h$.
\end{proof}

\fusionmelequiinvariances*

\begin{proof}
 Let $\orientation, \hat{\orientation}$ be two \validoriented orientations of edges on $\gG$. Let $\mX_\equi \in \mathbb{C}^{m \times d_\equi}$ and $\mX_\inv \in \mathbb{C}^{m \times d_\inv}$. Then:
 \begin{align*}
    g(\orientationtransformation\mX_\equi, \mX_\inv, \gG, \hat{\orientation})
     &= (\mboundaryinv(\gG, \hat{\orientation}))^H h(\mboundaryequi(\gG, \hat{\orientation}) \orientationtransformation \mX_\equi) \\
     &= (\mboundaryinv(\gG, \orientation))^H h(\mboundaryequi(\gG, \orientation) \orientationtransformation \orientationtransformation \mX_\equi) \\
     &= (\mboundaryinv(\gG, \orientation))^H h(\mboundaryequi(\gG, \orientation) \mX_\equi) \\
     &= g(\mX_\equi, \mX_\inv, \gG, \orientation)
 \end{align*}
 Here, we used \Cref{prop:mboundaryeqi,prop:mboundaryinv} and the fact that $\orientationtransformation\orientationtransformation = \mI_m$.

 Let $\orientation, \hat{\orientation}$ be two \validoriented orientations of edges on $\gG$. Let $\mX_\equi \in \mathbb{C}^{m \times d_\equi}$ and $\mX_\inv \in \mathbb{C}^{m \times d_\inv}$. Then:
 \begin{align*}
    f(\orientationtransformation \mX_\equi, \mX_\inv, \gG, \hat{\orientation})
     &= (\mboundaryequi(\gG, \hat{\orientation}))^H h(\mboundaryinv(\gG, \hat{\orientation}) \mX_\inv) \\
     &= (\mboundaryequi(\gG, \orientation)\orientationtransformation)^H h(\mboundaryinv(\gG, \orientation) \mX_\inv) \\
     &= (\orientationtransformation)^H (\mboundaryequi(\gG, \orientation))^H h(\mboundaryinv(\gG, \orientation) \mX_\inv) \\
     &= \orientationtransformation  f(\mX_\equi, \mX_\inv, \gG, \orientation)
 \end{align*}
 Here, we used \Cref{prop:mboundaryeqi,prop:mboundaryinv} and the fact that $(\orientationtransformation)^H = \orientationtransformation$ as it is a diagonal real matrix.
\end{proof}

We now prove that the composition of orientation-equivariant / invariant functions preserves orientation equivariance / invariance, which allows us to prove \Cref{thm:modelproperties} by using \Cref{thm:elorientationinvariance,thm:fusionmelequiinvariances,thm:elorientationequivariance}. 

\begin{proposition}\label{prop:equivariantcomposition}
Let $f_1$ and $f_2$ be jointly orientation-equivariant mappings according to \Cref{def:orientation_equivariance}. Then $f_1 \circ_\equi f_2 = f_1(f_2(\mX_\equi, \mX_\inv, \gG, \orientation), \mX_\inv, \gG, \orientation)$ is also jointly orientation-equivariant.
\end{proposition}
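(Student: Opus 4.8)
The plan is to verify the defining identity of Definition~\ref{def:orientation_equivariance} for the composed map $g := f_1 \circ_\equi f_2$ by applying the hypothesis on $f_2$ first and then on $f_1$. Concretely, fix two \validoriented orientations $\orientation, \hat{\orientation}$ of $\gG$, an orientation-equivariant input $\mX_\equi \in \mathbb{C}^{m \times d_\equi}$, and an orientation-invariant input $\mX_\inv \in \mathbb{C}^{m \times d_\inv}$. I want to show
\begin{equation*}
    \orientationtransformation\, g(\mX_\equi, \mX_\inv, \gG, \orientation) = g(\orientationtransformation \mX_\equi, \mX_\inv, \gG, \hat{\orientation}).
\end{equation*}

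The key steps, in order: First, expand the right-hand side using the definition of $\circ_\equi$, giving $g(\orientationtransformation\mX_\equi, \mX_\inv, \gG, \hat\orientation) = f_1\big(f_2(\orientationtransformation\mX_\equi, \mX_\inv, \gG, \hat\orientation), \mX_\inv, \gG, \hat\orientation\big)$. Second, apply the joint orientation equivariance of $f_2$ to the inner argument: $f_2(\orientationtransformation\mX_\equi, \mX_\inv, \gG, \hat\orientation) = \orientationtransformation\, f_2(\mX_\equi, \mX_\inv, \gG, \orientation)$. Substituting yields $f_1\big(\orientationtransformation\, f_2(\mX_\equi, \mX_\inv, \gG, \orientation), \mX_\inv, \gG, \hat\orientation\big)$. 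Third, observe that $\orientationtransformation\, f_2(\mX_\equi, \mX_\inv, \gG, \orientation)$ is exactly the orientation-equivariant input to $f_1$ transformed by $\orientationtransformation$, so we may apply the joint orientation equivariance of $f_1$ with the orientation-equivariant input $f_2(\mX_\equi, \mX_\inv, \gG, \orientation)$: this gives $f_1\big(\orientationtransformation\, f_2(\mX_\equi, \mX_\inv, \gG, \orientation), \mX_\inv, \gG, \hat\orientation\big) = \orientationtransformation\, f_1\big(f_2(\mX_\equi, \mX_\inv, \gG, \orientation), \mX_\inv, \gG, \orientation\big) = \orientationtransformation\, g(\mX_\equi, \mX_\inv, \gG, \orientation)$, which is the desired identity.

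The only subtlety — and the one point worth stating carefully rather than a genuine obstacle — is matching up the invariant argument and the output of $f_2$ with the signature expected by $f_1$: $f_2$ outputs an orientation-equivariant signal (since it is a jointly orientation-equivariant mapping, its output transforms with $\orientationtransformation$), which is precisely the type the first slot of $f_1$ expects, while the orientation-invariant argument $\mX_\inv$ is passed through unchanged to both $f_2$ and $f_1$ and is unaffected by the orientation change. One should also note that the proof treats $\mX_\inv$ (or, more generally, any orientation-invariant intermediate signal produced alongside $f_2$'s output) as a fixed invariant input to $f_1$; if $f_2$ additionally produced an invariant output one would invoke Proposition~\ref{prop:equivariantcomposition}'s invariant analogue, but in the present statement $\circ_\equi$ threads only the equivariant output, so the argument above suffices. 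An entirely parallel statement and proof hold for the composition of jointly orientation-invariant mappings (and for mixed compositions via the fusion Laplacians), which together let Theorem~\ref{thm:modelproperties} be assembled by induction over the $L$ layers of \model, combining Lemmas~\ref{thm:elorientationequivariance}, \ref{thm:elorientationinvariance}, and \ref{thm:fusionmelequiinvariances} with this closure property and the analogous closure for the fusion operations of \Cref{eq:fusion,eq:fusion2}.
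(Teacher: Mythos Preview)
Your proof is correct and matches the paper's own argument essentially line for line: expand the composition, apply the equivariance of $f_2$ to the inner call, then apply the equivariance of $f_1$ to pull $\orientationtransformation$ outside. The additional remarks about type-matching and the role of this proposition in the inductive proof of Theorem~\ref{thm:modelproperties} are accurate and helpful but go beyond what the paper itself states at this point.
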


\begin{proof}

    \begin{align*}
        (f_1 \circ_\equi f_2)(\orientationtransformation \mX_\equi, \mX_\inv, \gG, \hat{\orientation}) 
        &= f_1(f_2(\orientationtransformation \mX_\equi,\mX_\inv, \gG, \hat{\orientation}),\mX_\inv, \gG, \hat{\orientation}) \\
        &= f_1(\orientationtransformation f_2( \mX_\equi, \mX_\inv, \gG, \orientation), \mX_\inv, \gG, \hat{\orientation}) \\
        &= \orientationtransformation f_1(f_2( \mX_\equi, \mX_\inv, \gG, \orientation), \mX_\inv, \gG, \orientation) \\
        &= \orientationtransformation(f_1 \circ_\equi f_2) (\mX_\equi, \gG, \orientation)
    \end{align*}
\end{proof}

Next we discuss the relationship between sign-equivariant and jointly orientation-equivariant mappings.

\begin{proposition}\label{prop:signequivariance}
Let $\sigma$ be a sign-equivariant mapping, i.e. $\sigma(-x, -y) = -\sigma(x, y)$ and $f_1$ and $f_2$ be jointly orientation-equivariant mappings. Then $f(\mX_\equi, \mX_\inv, \gG, \orientation) = \sigma(f_1(\mX_\equi, \mX_\inv, \gG, \orientation), f_2(\mX_\equi, \mX_\inv, \gG, \orientation))$ is a jointly orientation-equivariant mapping as well.
\end{proposition}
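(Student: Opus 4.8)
The statement to prove is \Cref{prop:signequivariance}: if $\sigma$ is sign-equivariant and $f_1, f_2$ are jointly orientation-equivariant, then $f = \sigma(f_1, f_2)$ is jointly orientation-equivariant.

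The plan is to directly verify the defining equation from \Cref{def:orientation_equivariance}, namely $\orientationtransformation f(\mX_\equi, \mX_\inv, \gG, \orientation) = f(\orientationtransformation\mX_\equi, \mX_\inv, \gG, \hat\orientation)$, by unfolding the definition of $f$ and pushing the orientation change through. First I would start from the right-hand side $f(\orientationtransformation\mX_\equi, \mX_\inv, \gG, \hat\orientation) = \sigma\big(f_1(\orientationtransformation\mX_\equi, \mX_\inv, \gG, \hat\orientation), f_2(\orientationtransformation\mX_\equi, \mX_\inv, \gG, \hat\orientation)\big)$. Applying the joint orientation-equivariance of $f_1$ and $f_2$ turns each argument into $\orientationtransformation f_1(\mX_\equi, \mX_\inv, \gG, \orientation)$ and $\orientationtransformation f_2(\mX_\equi, \mX_\inv, \gG, \orientation)$ respectively.

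The one subtlety is that $\orientationtransformation$ is a diagonal matrix of $\pm 1$ acting row-wise (per edge), while $\sigma$ is applied element-wise; so the sign-equivariance property $\sigma(-x,-y) = -\sigma(x,y)$ needs to be applied per-edge, accounting for the fact that $\orientationtransformation$ flips some rows and leaves others fixed. Concretely, for an edge $e$ with $[\orientationtransformation]_{e,e} = 1$ nothing changes in that row, and for $[\orientationtransformation]_{e,e} = -1$ both arguments of $\sigma$ in that row are negated, so by sign-equivariance the output row is negated as well — which is exactly the action of $\orientationtransformation$ on the output. This lets me pull $\orientationtransformation$ out: $\sigma(\orientationtransformation a, \orientationtransformation b) = \orientationtransformation \sigma(a,b)$ where $a = f_1(\mX_\equi,\mX_\inv,\gG,\orientation)$, $b = f_2(\mX_\equi,\mX_\inv,\gG,\orientation)$. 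The resulting expression is $\orientationtransformation \sigma(f_1(\dots), f_2(\dots)) = \orientationtransformation f(\mX_\equi, \mX_\inv, \gG, \orientation)$, which is the left-hand side, completing the argument.

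The main (minor) obstacle is making the commutation $\sigma(\orientationtransformation a, \orientationtransformation b) = \orientationtransformation\sigma(a,b)$ rigorous rather than hand-wavy — it relies on $\orientationtransformation$ being diagonal with entries in $\{+1,-1\}$ and on $\sigma$ acting entrywise, and on extending the two-argument sign-equivariance $\sigma(-x,-y)=-\sigma(x,y)$ from scalars to the matrix setting row-by-row (and column-by-column, since $\sigma$ is applied to every entry). I would state this as a short lemma or an inline remark before the chain of equalities. I would present the proof as a four-line display chain mirroring the style of \Cref{prop:equivariantcomposition}, with the key step annotated as "by sign-equivariance of $\sigma$ applied entrywise, using that $\orientationtransformation$ is diagonal with $\pm 1$ entries."

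\begin{proof}
Write $a = f_1(\mX_\equi, \mX_\inv, \gG, \orientation)$ and $b = f_2(\mX_\equi, \mX_\inv, \gG, \orientation)$. Since $\orientationtransformation$ is a diagonal matrix with entries $[\orientationtransformation]_{e,e} \in \{+1, -1\}$ acting row-wise and $\sigma$ is applied entrywise, sign-equivariance of $\sigma$ yields $\sigma(\orientationtransformation a, \orientationtransformation b) = \orientationtransformation \sigma(a, b)$: rows with $[\orientationtransformation]_{e,e} = 1$ are unchanged, and rows with $[\orientationtransformation]_{e,e} = -1$ have both arguments negated, so the output row is negated by $\sigma(-x,-y) = -\sigma(x,y)$. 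Using this together with the joint orientation-equivariance of $f_1$ and $f_2$,
\begin{align*}
    f(\orientationtransformation \mX_\equi, \mX_\inv, \gG, \hat{\orientation})
    &= \sigma\big(f_1(\orientationtransformation \mX_\equi, \mX_\inv, \gG, \hat{\orientation}),\, f_2(\orientationtransformation \mX_\equi, \mX_\inv, \gG, \hat{\orientation})\big) \\
    &= \sigma\big(\orientationtransformation f_1(\mX_\equi, \mX_\inv, \gG, \orientation),\, \orientationtransformation f_2(\mX_\equi, \mX_\inv, \gG, \orientation)\big) \\
    &= \orientationtransformation \sigma\big(f_1(\mX_\equi, \mX_\inv, \gG, \orientation),\, f_2(\mX_\equi, \mX_\inv, \gG, \orientation)\big) \\
    &= \orientationtransformation f(\mX_\equi, \mX_\inv, \gG, \orientation).
\end{align*}
Hence $f$ is jointly orientation-equivariant.
\end{proof}
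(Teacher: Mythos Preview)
Your proof is correct and follows exactly the same four-line chain of equalities as the paper's proof. Your version is in fact slightly more careful than the paper's, since you explicitly justify the commutation $\sigma(\orientationtransformation a, \orientationtransformation b) = \orientationtransformation\sigma(a,b)$ via the diagonal $\pm 1$ structure of $\orientationtransformation$ and the entrywise action of $\sigma$, whereas the paper invokes sign-equivariance directly at that step without spelling this out.
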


\begin{proof}
    \begin{align*}
        f(\orientationtransformation \mX_\equi, \gG, \hat{\orientation})
        &= \sigma(f_1(\orientationtransformation\mX_\equi, \mX_\inv, \gG, \hat{\orientation}), f_2(\orientationtransformation\mX_\equi, \mX_\inv, \gG, \hat{\orientation})) \\
        &= \sigma(\orientationtransformation f_1(\mX_\equi, \mX_\inv, \gG, \orientation), \orientationtransformation f_2(\mX_\equi, \mX_\inv, \gG, \orientation)) \\
        &= \orientationtransformation \sigma( f_1(\mX_\equi, \mX_\inv, \gG, \orientation), f_2(\mX_\equi, \mX_\inv, \gG, \orientation)) \\
        &= \orientationtransformation f(\mX_\equi, \mX_\inv, \gG, \orientation)
        \end{align*}
\end{proof}

Note that \Cref{prop:signequivariance} generalizes to sign equivariant functions with one argument by defining $\sigma^\prime(x) = \sigma(x, 0)$.

We can now prove that $\mH^{(l)}_\equi$ as defined in \Cref{eq:convolutionequi} is a jointly equivariant mapping.

\begin{lemma}\label{lem:h_equi}
    $\mH^{(l)}_\equi$ as defined in \Cref{eq:convolutionequi} is a jointly orientation-equivariant mapping. $\mH^{(l)}_\inv$ as defined in \Cref{eq:convolutioninv} is a jointly orientation-invariant mapping.
\end{lemma}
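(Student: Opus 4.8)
The plan is a simultaneous induction over the layer index $l$, proving at each layer that $\mH^{(l)}_\equi$ (the output of the convolution in \Cref{eq:convolutionequi} followed by the fusion in \Cref{eq:fusion}) is jointly orientation-equivariant and $\mH^{(l)}_\inv$ (the output of \Cref{eq:convolutioninv} followed by \Cref{eq:fusion2}) is jointly orientation-invariant. The base case is trivial because $\mH^{(0)}_\equi = \mX_\equi$ obeys $\orientationtransformation \mX_\equi = \orientationtransformation \mH^{(0)}_\equi$ and $\mH^{(0)}_\inv = \mX_\inv$ does not depend on the orientation. Throughout, I would exploit three elementary facts about the diagonal sign matrix $\orientationtransformation$: it commutes with right multiplication by any constant weight matrix $\mW$, it satisfies $(\orientationtransformation \mA)\odot \mB = \orientationtransformation(\mA\odot \mB)$, and $\abs(\orientationtransformation \mA)=\abs(\mA)$ since $\orientationtransformation$ has entries in $\{\pm1\}$.

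For the inductive step I would first treat $\mZ^{(l)}_\equi$ in \Cref{eq:convolutionequi} summand by summand. The operator $f^{(l)}_\equi$ is the message passing induced by the Magnetic Equivariant Edge Laplacian, hence a jointly orientation-equivariant mapping by \Cref{thm:elorientationequivariance}; composing it with the inductively jointly orientation-equivariant $\mH^{(l-1)}_\equi$ stays jointly orientation-equivariant by \Cref{prop:equivariantcomposition}. The operator $f^{(l)}_{\inv\rightarrow\equi}$ is $\melinvtoequi$-message passing, which by \Cref{thm:fusionmelequiinvariances} sends a jointly orientation-invariant input to a jointly orientation-equivariant output; feeding it the inductively invariant $\mH^{(l-1)}_\inv$ gives a jointly orientation-equivariant mapping by the same computation as in \Cref{prop:equivariantcomposition} (only the outer map changes modality). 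The residual term $\mH^{(l-1)}_\equi$ is equivariant by induction. Right-multiplying each term by the corresponding $\mW^{(l)}_{(\cdot)}$ and summing preserves joint orientation equivariance (using that $\orientationtransformation$ commutes with $\mW$ and is linear), and finally $\sigma_\equi$ is element-wise sign-equivariant, so \Cref{prop:signequivariance} yields joint orientation equivariance of $\mZ^{(l)}_\equi$. The mirror argument --- \Cref{thm:elorientationinvariance} for $f^{(l)}_\inv$, \Cref{thm:fusionmelequiinvariances} for $f^{(l)}_{\equi\rightarrow\inv}$, and the fact that applying \emph{any} function to a jointly orientation-invariant quantity keeps it invariant --- shows $\mZ^{(l)}_\inv$ is jointly orientation-invariant.

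It then remains to push these properties through the fusion. In \Cref{eq:fusion} the orientation change sends $\mZ^{(l)}_\equi \mapsto \orientationtransformation \mZ^{(l)}_\equi$ and leaves $\mZ^{(l)}_\inv$ fixed, so the identities for $\odot$ and for right multiplication move $\orientationtransformation$ out front of the whole argument of $\sigma_\equi$; a last use of \Cref{prop:signequivariance} gives that $\mH^{(l)}_\equi$ is jointly orientation-equivariant. In \Cref{eq:fusion2} the term $\abs(\mZ^{(l)}_\equi \mW^{(l)}_{\fusion,\equi\rightarrow\inv})$ is unchanged because $\abs(\orientationtransformation\,\cdot)=\abs(\cdot)$, and $\mZ^{(l)}_\inv$ is unchanged, so the argument of $\sigma_\inv$ is invariant under the orientation change and hence so is $\mH^{(l)}_\inv$; this completes the induction, and applied with $l=L$ it is exactly the inductive core of \Cref{thm:modelproperties}.

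The step I expect to require the most care is setting up the induction \emph{jointly} for the two streams: $\mZ^{(l)}_\equi$ reads the previous invariant output through $f^{(l)}_{\inv\rightarrow\equi}$ and $\mZ^{(l)}_\inv$ reads the previous equivariant output through $f^{(l)}_{\equi\rightarrow\inv}$, so neither claim can be proved in isolation. A secondary subtlety is the cross-modality composition, where \Cref{prop:equivariantcomposition} does not literally apply (the outer map turns one modality into the other) and must be re-derived from \Cref{prop:mboundaryeqi,prop:mboundaryinv}; this re-derivation is one line, and otherwise the argument is pure bookkeeping with the $\pm1$ matrix $\orientationtransformation$.
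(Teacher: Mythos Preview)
Your proposal is correct and follows essentially the same approach as the paper: a joint induction on the layer index that first establishes the equivariance/invariance of $\mZ^{(l)}_\equi$ and $\mZ^{(l)}_\inv$ via \Cref{thm:elorientationequivariance,thm:elorientationinvariance,thm:fusionmelequiinvariances} and \Cref{prop:signequivariance}, and then pushes these through the fusion using the $\odot$ and $\abs$ identities you list. Your explicit base case and the remark about needing the induction to run simultaneously on both streams are details the paper leaves implicit, but otherwise the arguments coincide.
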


\begin{proof}
    The proof is done inductively. Therefore, assume $\mH^{(l-1)}_\equi$ to be a jointly orientation-equivariant mapping and $\mH^{(l-1)}_\inv$ to be a jointly orientation-invariant mapping. For notational simplicitly, we absorb the dependency on $\mX_\equi$, $\mX_\inv$, $\gG$ and $\orientation$ into the mapping itself: That is, for a mapping $\mF$, we denote $\mF(\mX_\equi, \mX_\inv, \gG, \orientation) = \mF$ and $\mF(\orientationtransformation\mX_\equi, \mX_\inv, \gG, \hat{\orientation}) = \hat{\mF}$.
    
    We first show that $\mZ^{(l)}_\equi$ to be a jointly orientation-equivariant mapping.
    \begin{align*}
        \hat{\mZ}^{(l)}_\equi 
        &=
        \sigma_\equi(
            f_\equi^{(l)}(\hat{\mH}_\equi^{(l-1)})\mW_{\equi \rightarrow \equi}^{(l)} + f_{\inv \rightarrow \equi}^{(l)}(\hat{\mH}_\inv^{(l-1)})\mW_{\inv \rightarrow \equi}^{(l)} + \hat{\mH}_\equi^{(l-1)} \mW_\equi^{(l)}
        ) \\
        &=
        \sigma_\equi(
            f_\equi^{(l)}(\orientationtransformation{\mH}_\equi^{(l-1)})\mW_{\equi \rightarrow \equi}^{(l)} + f_{\inv \rightarrow \equi}^{(l)}({\mH}_\inv^{(l-1)})\mW_{\inv \rightarrow \equi}^{(l)} + \orientationtransformation{\mH}_\equi^{(l-1)} \mW_\equi^{(l)}
        ) \\
        &=
        \sigma_\equi(
            \orientationtransformation f_\equi^{(l)}({\mH}_\equi^{(l-1)})\mW_{\equi \rightarrow \equi}^{(l)} + \orientationtransformation f_{\inv \rightarrow \equi}^{(l)}({\mH}_\inv^{(l-1)})\mW_{\inv \rightarrow \equi}^{(l)} + \orientationtransformation{\mH}_\equi^{(l-1)} \mW_\equi^{(l)}
        ) \\
        &= 
        \orientationtransformation \sigma_\equi(
             f_\equi^{(l)}({\mH}_\equi^{(l-1)})\mW_{\equi \rightarrow \equi}^{(l)} +  f_{\inv \rightarrow \equi}^{(l)}({\mH}_\inv^{(l-1)})\mW_{\inv \rightarrow \equi}^{(l)} + {\mH}_\equi^{(l-1)} \mW_\equi^{(l)}
        ) \\
        &= \orientationtransformation \mZ^{(l)}_\equi
    \end{align*}
Here, we first used the induction assumption, then applied the joint orientation equivariance property of $\melequi$ and $\melinvtoequi$ proven in \Cref{thm:elorientationequivariance,thm:fusionmelequiinvariances} and lastly used the fact that we assume $\sigma_\equi$ to be sign-invariant function to apply \Cref{prop:signequivariance}.

Similarily, we can show that $\mZ^{(l)}_\equi$ to be a jointly orientation-equivariant mapping.
    \begin{align*}
        \hat{\mZ}^{(l)}_\inv 
        &=
        \sigma_\inv(
            f^{(l)}_\inv(\hat{\mH}_\inv^{(l-1)}) \mW_{\inv \rightarrow \inv}^{(l)} + f^{(l)}_{\equi \rightarrow \inv} (\hat{\mH}_\equi^{(l-1)}) \mW_{\equi \rightarrow \inv}^{(l)} + \hat{\mH}_\inv^{(l-1)} \mW_\inv^{(l)}
        ) \\
        &=
        \sigma_\inv(
            f^{(l)}_\inv({\mH}_\inv^{(l-1)}) \mW_{\inv \rightarrow \inv}^{(l)} + f^{(l)}_{\equi \rightarrow \inv} (\orientationtransformation{\mH}_\equi^{(l-1)}) \mW_{\equi \rightarrow \inv}^{(l)} + {\mH}_\inv^{(l-1)} \mW_\inv^{(l)}
        ) \\
        &=
        \sigma_\inv(
            f^{(l)}_\inv({\mH}_\inv^{(l-1)}) \mW_{\inv \rightarrow \inv}^{(l)} + f^{(l)}_{\equi \rightarrow \inv} ({\mH}_\equi^{(l-1)}) \mW_{\equi \rightarrow \inv}^{(l)} + {\mH}_\inv^{(l-1)} \mW_\inv^{(l)}
        ) \\
        &= \mZ^{(l)}_\inv
    \end{align*}
Here, we first used the induction assumption, then applied the joint orientation invariance property of $\melinv$ and $\melequitoinv$ proven in \Cref{thm:elorientationinvariance,thm:fusionmelequiinvariances}.

Next, we can show that $\mH^{(l)}_\equi$ is an orientation-equivariant mapping.
    \begin{align*}
        \hat{\mH}^{(l)}_\equi &= \sigma_\equi(
        \hat{\mZ}^{(l)}_\equi \mW^{(l)}_{\fusion,\equi, \equi} \odot \hat{\mZ}^{(l)}_\inv \mW^{(l)}_{\fusion,\equi, \inv}  + \hat{\mZ}^{(l)}_\equi) \\
        &= \sigma_\equi(
        (\orientationtransformation \mZ^{(l)}_\equi \mW^{(l)}_{\fusion,\equi, \equi}) \odot \mZ^{(l)}_\inv \mW^{(l)}_{\fusion,\equi, \inv}  + \orientationtransformation \mZ^{(l)}_\equi) \\
        &= \sigma_\equi(
        \orientationtransformation( \mZ^{(l)}_\equi \mW^{(l)}_{\fusion,\equi, \equi} \odot \mZ^{(l)}_\inv \mW^{(l)}_{\fusion,\equi, \inv})  + \orientationtransformation \mZ^{(l)}_\equi) \\
        &= \orientationtransformation \sigma_\equi(
        ( \mZ^{(l)}_\equi \mW^{(l)}_{\fusion,\equi, \equi} \odot \mZ^{(l)}_\inv \mW^{(l)}_{\fusion,\equi, \inv})  + \mZ^{(l)}_\equi) \\
        &= \orientationtransformation \mH^{(l)}_\equi
    \end{align*}
We first use the previously proven joint orientation equivariance of $\mZ^{(l)}_\equi$, and then the fact both $\orientationtransformation$ and $\odot$ are element-wise multiplications and, therefore, are associative operations. Lastly, we again make use of \Cref{prop:signequivariance} for the sign-equivariant function $\sigma_\equi$.

Lastly, we can show that $\mH^{(l)}_\inv$ is an orientation-invariant mapping.
    \begin{align*}
        \hat{\mH}^{(l)}_\inv &= \sigma_\inv(
        \hat{\mZ}^{(l)}_\inv \mW^{(l)}_{\fusion,\inv, \inv} \odot \abs( \hat{\mZ}^{(l)}_\equi \mW^{(l)}_{\fusion,\inv, \equi})  + \hat{\mZ}^{(l)}_\inv) \\
        &= \sigma_\inv(
        \mZ^{(l)}_\inv \mW^{(l)}_{\fusion,\inv, \inv} \odot \abs(\orientationtransformation\mZ^{(l)}_\equi \mW^{(l)}_{\fusion,\inv, \equi})  + \mZ^{(l)}_\inv) \\
        &= \sigma_\inv(
        \mZ^{(l)}_\inv \mW^{(l)}_{\fusion,\inv, \inv} \odot \abs(\mZ^{(l)}_\equi \mW^{(l)}_{\fusion,\inv, \equi})  + \mZ^{(l)}_\inv) \\
        &= \mH^{(l)}_\inv
    \end{align*}
Here, we first use the previously proven joint orientation invariance of $\mZ^{(l)}_\inv$. Then, we notice that $\orientationtransformation$ corresponds to an element-wise multiplication with $\pm 1$ and is therefore canceled out by the element-wise absolute value function.
\end{proof}

\subsection{Permutation Equivariance}\label{appendix:permutation_equivariance_proofs}

 A common requirement for GNNs is equivariance with respect to the ordering of input to enable generalization \citep{maron2018invariant,wu2020comprehensive}. While node-level GNNs are equivariant with respect to node permutations, we require edge-level permutation equivariance.
\begin{restatable}[Permutation Equivariance]{defn}{permutationequivariance}
    \label{def:permutation_invariance}
    Let $\orientation$ be an orientation of edges on $\gG$ and $\mP \in \mathbb{R}^{m \times m}$ be a permutation matrix on the edges. Let $\orientation_\mP$ and $\gG_\mP$ be its application to $\orientation$ and $\gG$ respectively.
    We say that a mapping $f$ satisfies permutation equivariance if for any orientation-equivariant signal $\mX_\equi \in \mathbb{C}^{m \times d_\equi}$ and any orientation-invariant signal $\mX_\inv \in \mathbb{C}^{m \times d_\inv}$:
    \begin{equation*}
        \mP f(\mX_\equi, \mX_\inv, \gG, \orientation) = f(\mP\mX_\equi, \mP\mX_\inv,\gG_\mP, \orientation_\mP)
  \rebuttal{\,.}
    \end{equation*}
\end{restatable}

We now proceed to prove that \model is a permutation equivariant mapping according to \Cref{def:permutation_invariance}. Again, we first concern boundary operators and show results for the generalized, directed case and recover \Cref{eq:fusion,eq:fusion2} with $q=0$.

\begin{proposition}\label{prop:perm_melequi}
    $\melequi$ induces a permutation equivariant mapping according to \Cref{def:permutation_invariance}.
\end{proposition}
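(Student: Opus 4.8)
The plan is to follow the template of the proofs of \Cref{thm:elorientationequivariance,thm:elorientationinvariance}: first pin down how the magnetic boundary operator transforms under an edge relabeling, then push the permutation matrix through the factored mapping $f(\mX_\equi,\mX_\inv,\gG,\orientation) = (\mboundaryequi)^H h(\mboundaryequi\mX_\equi)$ that $\melequi = (\mboundaryequi)^H\mboundaryequi$ induces.

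\textbf{Step 1 (transformation law of $\mboundaryequi$).} The columns of $\mboundaryequi(\gG,\orientation)\in\mathbb{C}^{n\times m}$ are indexed by edges and its rows by nodes, and an edge permutation $\mP$ leaves the vertex set — hence the row index set — untouched. By definition $\orientation_\mP$ and $\gG_\mP$ are obtained by carrying, for each edge, its membership in $\Edirected$ or $\Eundirected$, its endpoints, and its orientation tuple to the new index. Going through the case split in \Cref{eq:magnetic_boundary} entry by entry then yields
\[
  \mboundaryequi(\gG_\mP,\orientation_\mP) = \mboundaryequi(\gG,\orientation)\,\mP^{T},
\]
i.e.\ the columns are permuted by $\mP$ (using the convention that $\mP\mX$ permutes the edge-indexed rows of $\mX$, so that $\mP^{T}=\mP^{-1}$).

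\textbf{Step 2 (push $\mP$ through).} Using Step 1, $(\mP^{T})^{H}=\mP$ (as $\mP$ is real), and $\mP^{T}\mP=\mI_m$ to collapse the argument of $h$:
\begin{align*}
  f(\mP\mX_\equi,\mP\mX_\inv,\gG_\mP,\orientation_\mP)
  &= \big(\mboundaryequi(\gG_\mP,\orientation_\mP)\big)^{H} h\big(\mboundaryequi(\gG_\mP,\orientation_\mP)\,\mP\mX_\equi\big) \\
  &= \big(\mboundaryequi(\gG,\orientation)\mP^{T}\big)^{H} h\big(\mboundaryequi(\gG,\orientation)\mP^{T}\mP\,\mX_\equi\big) \\
  &= \mP\,\big(\mboundaryequi(\gG,\orientation)\big)^{H} h\big(\mboundaryequi(\gG,\orientation)\mX_\equi\big) \\
  &= \mP\,f(\mX_\equi,\mX_\inv,\gG,\orientation),
\end{align*}
which is exactly \Cref{def:permutation_invariance}. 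The same algebra at the level of the Laplacian reads $\melequi(\gG_\mP,\orientation_\mP)=\mP\,\melequi(\gG,\orientation)\,\mP^{T}$, so plain multiplication by $\melequi$ is permutation equivariant too, and, as in the earlier lemmata, the computation is valid for any node feature map $h$.

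\textbf{Main obstacle.} The only step requiring genuine care is Step 1 — in particular fixing the permutation-action convention so that $\mP^{T}$ (and not $\mP$) appears, and checking that the directed/undirected distinction in \Cref{eq:magnetic_boundary} is preserved under relabeling. Since $\orientation_\mP$ and $\gG_\mP$ are, by construction, the pushforwards of $\orientation$ and $\gG$ along $\mP$, this preservation is immediate; everything after that is the bookkeeping already used in the orientation-equivariance proofs, and it transfers verbatim to $\melinv$, $\melequitoinv$, and $\melinvtoequi$ by substituting the corresponding boundary operator for $\mboundaryequi$.
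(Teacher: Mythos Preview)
Your proof is correct and follows essentially the same approach as the paper: establish the transformation law $\mboundaryequi(\gG_\mP,\orientation_\mP)=\mboundaryequi(\gG,\orientation)\mP^{T}$ (the paper writes $\mP^{H}$, which coincides since $\mP$ is real), then push $\mP$ through the factored map using $\mP^{T}\mP=\mI_m$. Your Step~1 is in fact slightly more explicit than the paper's, which simply asserts the column-permutation identity without walking through the case split in \Cref{eq:magnetic_boundary}.
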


\begin{proof}
    Let $\orientation$ be an orientation of edges on $\gG$ and $\mP$ be a permutation matrix. Let $\orientation_\mP$ and $\gG_\mP$ its application to $\orientation$ and $\gG$ respectively.
    \begin{align*}
        f(\mP\mX_\equi, \mP\mX_\inv, \gG_\mP, \orientation_\mP) 
        &= (\mboundaryequi(\gG_\mP, \orientation_\mP))^H h(\mboundaryequi(\gG_\mP, \orientation_\mP)\mP \mX_\equi) \\
        &= (\mboundaryequi(\gG, \orientation)\mP^H)^Hh(\mboundaryequi(\gG, \orientation) \mP^H \mP \mX_\equi) \\
        &= \mP (\mboundaryequi(\gG, \orientation))^H h(\mboundaryequi(\gG, \orientation) \mX_\equi) \\
        &= \mP f(\mX_\equi, \mX_\inv, \gG, \orientation)
    \end{align*}
\end{proof}

\begin{proposition}\label{prop:perm_melinv}
    $\melinv$ induces a permutation equivariant mapping according to \Cref{def:permutation_invariance}.
\end{proposition}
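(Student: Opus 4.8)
The statement to prove is \Cref{prop:perm_melinv}: the Magnetic Invariant Edge Laplacian $\melinv$ induces a permutation equivariant mapping according to \Cref{def:permutation_invariance}. This mirrors \Cref{prop:perm_melequi} exactly, so the plan is to follow the same template.

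\textbf{Plan.} The mapping induced by $\melinv$ is $g(\mX_\equi, \mX_\inv, \gG, \orientation) = (\mboundaryinv)^H h(\mboundaryinv \mX_\inv)$ (as in \Cref{thm:elorientationinvariance}). First I would establish the boundary-operator analogue of the identity used in \Cref{prop:perm_melequi}: when a permutation matrix $\mP$ on the edges is applied, the columns of the boundary operator are permuted accordingly, i.e. $\mboundaryinv(\gG_\mP, \orientation_\mP) = \mboundaryinv(\gG, \orientation)\mP^H$. This follows directly from the definition in \Cref{eq:mboundary_invariant}, since permuting edges relabels the edge index $e$ while leaving the node index $v$ and the in/out relationship (and hence the phase shift) intact; I would state this as a small auxiliary observation or inline it. Then the computation is the chain
\begin{align*}
    g(\mP\mX_\equi, \mP\mX_\inv, \gG_\mP, \orientation_\mP)
    &= (\mboundaryinv(\gG_\mP, \orientation_\mP))^H h(\mboundaryinv(\gG_\mP, \orientation_\mP)\mP \mX_\inv) \\
    &= (\mboundaryinv(\gG, \orientation)\mP^H)^H h(\mboundaryinv(\gG, \orientation) \mP^H \mP \mX_\inv) \\
    &= \mP (\mboundaryinv(\gG, \orientation))^H h(\mboundaryinv(\gG, \orientation) \mX_\inv) \\
    &= \mP g(\mX_\equi, \mX_\inv, \gG, \orientation),
\end{align*}
using $\mP^H \mP = \mI_m$ (permutation matrices are orthogonal) and $(\mA\mP^H)^H = \mP\mA^H$. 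The node feature transformation $h$ is arbitrary and is not touched, exactly as in the equivariant case.

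\textbf{Main obstacle.} There is essentially no hard step: the only thing to be careful about is arguing cleanly that $\mboundaryinv$ transforms by right-multiplication with $\mP^H$ under an edge permutation and that the phase-shift entries are unaffected. That is a definitional check against \Cref{eq:mboundary_invariant} rather than a substantive argument, since edge permutations only relabel edges and preserve whether $e \in \Edirected$ or $e \in \Eundirected$ as well as the orientation/direction of each edge. I would keep the proof to three or four displayed lines, exactly parallel to \Cref{prop:perm_melequi}, perhaps prefaced by one sentence noting that the columns of $\mboundaryinv$ are permuted by $\mP$.

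\begin{proof}
    Let $\orientation$ be an orientation of edges on $\gG$ and $\mP$ a permutation matrix on the edges, with $\orientation_\mP$ and $\gG_\mP$ its application to $\orientation$ and $\gG$ respectively. An edge permutation merely relabels edges, leaving the node indices, the membership $e \in \Edirected$ versus $e \in \Eundirected$, and the in/out relation of each edge to its endpoints unchanged. Hence, by \Cref{eq:mboundary_invariant}, the columns of the boundary operator are permuted accordingly, $\mboundaryinv(\gG_\mP, \orientation_\mP) = \mboundaryinv(\gG, \orientation)\mP^H$. Therefore
    \begin{align*}
        g(\mP\mX_\equi, \mP\mX_\inv, \gG_\mP, \orientation_\mP)
        &= (\mboundaryinv(\gG_\mP, \orientation_\mP))^H h(\mboundaryinv(\gG_\mP, \orientation_\mP)\mP \mX_\inv) \\
        &= (\mboundaryinv(\gG, \orientation)\mP^H)^H h(\mboundaryinv(\gG, \orientation) \mP^H \mP \mX_\inv) \\
        &= \mP (\mboundaryinv(\gG, \orientation))^H h(\mboundaryinv(\gG, \orientation) \mX_\inv) \\
        &= \mP g(\mX_\equi, \mX_\inv, \gG, \orientation),
    \end{align*}
    where we used $\mP^H \mP = \mI_m$. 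The proof holds for arbitrary node feature transformations $h$.
\end{proof}
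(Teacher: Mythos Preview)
Your proof is correct and follows essentially the same approach as the paper's own proof: the same four-line computation using $\mboundaryinv(\gG_\mP, \orientation_\mP) = \mboundaryinv(\gG, \orientation)\mP^H$ and $\mP^H\mP = \mI_m$. Your version is in fact slightly more careful, since you spell out why the boundary operator's columns permute under an edge permutation, whereas the paper simply asserts this step.
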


\begin{proof}
    Let $\orientation$ be an orientation of edges on $\gG$ and $\mP$ be a permutation matrix. Let $\orientation_\mP$ and $\gG_\mP$ its application to $\orientation$ and $\gG$ respectively.
    \begin{align*}
        g(\mP\mX_\inv, \mP\mX_\inv, \gG_\mP, \orientation_\mP) 
        &= (\mboundaryinv(\gG_\mP, \orientation_\mP))^H h(\mboundaryinv(\gG_\mP, \orientation_\mP)\mP \mX_\inv) \\
        &= (\mboundaryinv(\gG, \orientation)\mP^H)^H h(\mboundaryinv(\gG, \orientation) \mP^H \mP \mX_\inv) \\
        &= \mP (\mboundaryinv(\gG, \orientation))^H h(\mboundaryinv(\gG, \orientation) \mX_\inv) \\
        &= \mP g(\mX_\inv, \mX_\inv, \gG, \orientation)
    \end{align*}
\end{proof}

\begin{proposition}\label{prop:perm_melequitoinv}
    $\melequitoinv$ induces a permutation equivariant mapping according to \Cref{def:permutation_invariance}.
\end{proposition}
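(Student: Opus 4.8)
The plan is to mimic the proofs of \Cref{prop:perm_melequi,prop:perm_melinv} almost verbatim, since $\melequitoinv = (\mboundaryinv)^H \mboundaryequi$ is assembled from exactly the two boundary operators already handled there, and the induced mapping is $f(\mX_\equi, \mX_\inv, \gG, \orientation) = (\mboundaryinv)^H h(\mboundaryequi \mX_\equi)$. The only two ingredients needed are: (i) permuting the edge set by $\mP$ acts on every boundary operator by permuting its columns, i.e. $\mboundaryequi(\gG_\mP, \orientation_\mP) = \mboundaryequi(\gG, \orientation)\mP^H$ and likewise $\mboundaryinv(\gG_\mP, \orientation_\mP) = \mboundaryinv(\gG, \orientation)\mP^H$; and (ii) $\mP$ is orthogonal, so $\mP^H \mP = \mI_m$.

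For (i) I would note that the boundary matrices in \Cref{eq:magnetic_boundary,eq:mboundary_invariant} have rows indexed by nodes and columns indexed by edges, with the $(v,e)$-entry depending only on how edge $e$ and its orientation are incident to node $v$. An edge permutation leaves the node set and the incidence structure untouched and merely relabels the edges, so it only reorders the columns, which is precisely right multiplication by $\mP^H$. Since this column-permutation fact is the same one that already drives \Cref{prop:perm_melequi,prop:perm_melinv}, it can either be re-derived in one line or simply invoked from those statements.

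With these in hand, the computation is
\begin{align*}
    f(\mP\mX_\equi, \mP\mX_\inv, \gG_\mP, \orientation_\mP)
    &= (\mboundaryinv(\gG_\mP, \orientation_\mP))^H h(\mboundaryequi(\gG_\mP, \orientation_\mP)\mP \mX_\equi) \\
    &= (\mboundaryinv(\gG, \orientation)\mP^H)^H h(\mboundaryequi(\gG, \orientation) \mP^H \mP \mX_\equi) \\
    &= \mP (\mboundaryinv(\gG, \orientation))^H h(\mboundaryequi(\gG, \orientation) \mX_\equi) \\
    &= \mP f(\mX_\equi, \mX_\inv, \gG, \orientation),
\end{align*}
using $(\mA\mP^H)^H = \mP\mA^H$ and $\mP^H\mP = \mI_m$. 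As in the earlier lemmata, the argument never touches $h$, so it holds for an arbitrary node feature transformation.

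I do not expect a genuine obstacle: the only subtlety is bookkeeping the direction of the permutation (columns versus rows, $\mP$ versus $\mP^H$) so that the $\mP^H\mP$ cancellation fires inside $h$ and a single $\mP$ survives on the outside; the non-square, mixed shape of $\melequitoinv$ does not change anything because the argument is carried out at the level of the boundary operators rather than the Laplacian. The companion claim for $\melinvtoequi = (\mboundaryequi)^H \mboundaryinv$ follows by swapping the roles of the two boundary operators.
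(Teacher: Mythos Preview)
Your proposal is correct and matches the paper's proof essentially line for line: the paper uses exactly the same four-line computation based on $\mboundaryequi(\gG_\mP,\orientation_\mP)=\mboundaryequi(\gG,\orientation)\mP^H$, $\mboundaryinv(\gG_\mP,\orientation_\mP)=\mboundaryinv(\gG,\orientation)\mP^H$, and $\mP^H\mP=\mI_m$. Your additional remarks about the column-permutation interpretation and the arbitrariness of $h$ are accurate and only make the implicit steps of the paper's argument more explicit.
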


\begin{proof}
    Let $\orientation$ be an orientation of edges on $\gG$ and $\mP$ be a permutation matrix. Let $\orientation_\mP$ and $\gG_\mP$ its application to $\orientation$ and $\gG$ respectively.
    \begin{align*}
        f(\mP\mX_\equi, \mP\mX_\inv, \gG_\mP, \orientation_\mP) 
        &= (\mboundaryinv(\gG_\mP, \orientation_\mP))^H h(\mboundaryequi(\gG_\mP, \orientation_\mP)\mP \mX_\equi) \\
        &= (\mboundaryinv(\gG, \orientation)\mP^H)^H h(\mboundaryequi(\gG, \orientation) \mP^H \mP \mX_\equi) \\
        &= \mP (\mboundaryinv(\gG, \orientation))^H h(\mboundaryequi(\gG, \orientation) \mX_\equi) \\
        &= \mP f(\mX_\equi, \mX_\inv, \gG, \orientation)
    \end{align*}
\end{proof}

\begin{proposition}\label{prop:perm_melinvtoequi}
    $\melinvtoequi$ induces a permutation equivariant mapping according to \Cref{def:permutation_invariance}.
\end{proposition}

\begin{proof}
    Let $\orientation$ be an orientation of edges on $\gG$ and $\mP$ be a permutation matrix. Let $\orientation_\mP$ and $\gG_\mP$ its application to $\orientation$ and $\gG$ respectively.
    \begin{align*}
        f(\mP\mX_\equi, \mP\mX_\inv, \gG_\mP, \orientation_\mP) 
        &= (\mboundaryequi(\gG_\mP, \orientation_\mP))^H h(\mboundaryinv(\gG_\mP, \orientation_\mP)\mP \mX_\inv) \\
        &= (\mboundaryequi(\gG, \orientation)\mP^H)^H h(\mboundaryinv(\gG, \orientation) \mP^H \mP \mX_\inv) \\
        &= \mP (\mboundaryequi(\gG, \orientation))^H h(\mboundaryinv(\gG, \orientation) \mX_\inv) \\
        &= \mP f(\mX_\equi, \mX_\inv, \gG, \orientation)
    \end{align*}
\end{proof}

\begin{lemma}\label{lem:h_permutation}
    $\mH^{(l)}_\equi$ and $\mH^{(l)}_\inv$ as defined in \Cref{eq:convolutionequi} are permutation equivariant mappings.
\end{lemma}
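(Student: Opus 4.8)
\textbf{Proof proposal for \Cref{lem:h_permutation}.}
The plan is to mirror the inductive argument used for \Cref{lem:h_equi}, replacing the orientation-change matrix $\orientationtransformation$ by the edge-permutation matrix $\mP$ and the joint-orientation-equivariance of the Laplacian operators by their permutation equivariance, established in \Cref{prop:perm_melequi,prop:perm_melinv,prop:perm_melequitoinv,prop:perm_melinvtoequi}. I would induct on the layer index $l$. For the base case, $\mH^{(0)}_\equi = \mX_\equi$ and $\mH^{(0)}_\inv = \mX_\inv$, so $\mP\mH^{(0)}_\equi = \mP\mX_\equi$ and $\mP\mH^{(0)}_\inv = \mP\mX_\inv$, which is exactly \Cref{def:permutation_invariance} for the identity input map.

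For the inductive step, assume $\mH^{(l-1)}_\equi$ and $\mH^{(l-1)}_\inv$ are permutation equivariant, and adopt the shorthand of the proof of \Cref{lem:h_equi}, writing $\hat{\mF}$ for $\mF(\mP\mX_\equi, \mP\mX_\inv, \gG_\mP, \orientation_\mP)$. First I would show $\hat{\mZ}^{(l)}_\equi = \mP\mZ^{(l)}_\equi$, using three observations: (i) $f^{(l)}_\equi$ and $f^{(l)}_{\inv\rightarrow\equi}$ are permutation equivariant by \Cref{prop:perm_melequi,prop:perm_melinvtoequi}, so feeding them the inductive hypothesis pulls a factor $\mP$ out on the left; (ii) right-multiplication by a fixed weight matrix $\mW^{(l)}_{(\cdot)}$, as well as the residual term $\mH^{(l-1)}_\equi\mW^{(l)}_\equi$, acts on the feature axis and hence commutes with the row-permutation $\mP$; and (iii) $\sigma_\equi$ is applied entrywise and therefore commutes with $\mP$. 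Chaining these gives $\hat{\mZ}^{(l)}_\equi = \sigma_\equi(\mP(\cdots)) = \mP\sigma_\equi(\cdots) = \mP\mZ^{(l)}_\equi$, and the identical argument with \Cref{prop:perm_melinv,prop:perm_melequitoinv} yields $\hat{\mZ}^{(l)}_\inv = \mP\mZ^{(l)}_\inv$.

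It then remains to push equivariance through the fusion step (\Cref{eq:fusion,eq:fusion2}). For $\mH^{(l)}_\equi$, I would use that the Hadamard product is entrywise, so $(\mP\mU)\odot(\mP\mV) = \mP(\mU\odot\mV)$ for any $\mU,\mV$, together with the commutation of $\mP$ with the right-multiplications and with $\sigma_\equi$, to conclude $\hat{\mH}^{(l)}_\equi = \mP\mH^{(l)}_\equi$. For $\mH^{(l)}_\inv$ the only additional ingredient is that $\abs(\cdot)$ is also entrywise, hence $\abs(\mP\mU) = \mP\abs(\mU)$; combined with the just-proven equivariance of $\mZ^{(l)}_\equi$ and $\mZ^{(l)}_\inv$ and the commutation of $\mP$ with $\sigma_\inv$, this gives $\hat{\mH}^{(l)}_\inv = \mP\mH^{(l)}_\inv$, closing the induction.

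I do not anticipate a genuine obstacle: unlike $\orientationtransformation$, the edge permutation $\mP$ interacts trivially with $\abs$ and with the sign structure of $\sigma_\equi$, so no sign bookkeeping is required. The one point deserving a line of care is that \Cref{prop:perm_melequi,prop:perm_melinv,prop:perm_melequitoinv,prop:perm_melinvtoequi} are each stated for a single boundary-based operator of the form $\mB^H h(\mB\mX)$, so I would invoke them once per occurrence in \Cref{eq:convolutionequi,eq:convolutioninv} and remark explicitly that permuting edges leaves the node index set — and hence the domain of the node transformation $h$ — untouched, which is why the argument goes through for an arbitrary $h$.
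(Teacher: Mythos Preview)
Your proposal is correct and follows essentially the same inductive argument as the paper's proof: establish permutation equivariance of $\mZ^{(l)}_\equi$ and $\mZ^{(l)}_\inv$ via \Cref{prop:perm_melequi,prop:perm_melinv,prop:perm_melequitoinv,prop:perm_melinvtoequi} together with the commutation of row-permutations with right weight-multiplication and entrywise nonlinearities, then push through the fusion step using that $\odot$ and $\abs$ are entrywise. The only cosmetic differences are that you spell out the base case explicitly and add the remark about $h$ acting on the node index set, neither of which appears in the paper's write-up.
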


\begin{proof}
    The proof is done inductively. Therefore, assume $\mH^{(l-1)}_\equi$ and $\mH^{(l-1)}_\inv$ to be a permutation equivariant mappings. For simplicitly, we, again, absorb the dependency on $\mX_\equi$, $\mX_\inv$, $\gG$ and $\orientation$ into the mapping itself: That is, for a mapping $\mF$, we denote $\mF(\mX_\equi, \mX_\inv, \gG, \orientation) = \mF$ and $\mF(\mP\mX_\equi, \mP\mX_\inv, \gG_\mP, \orientation_\mP) = \tilde{\mF}$.
    
    We first show that $\mZ^{(l)}_\equi$ to be a permutation equivariant mapping.
    \begin{align*}
        \tilde{\mZ}^{(l)}_\equi &=
        \sigma_\equi(f^{(l)}_\equi (\tilde{\mH}_\equi^{(l-1)}) \mW_{\equi \rightarrow \equi}^{(l)} + f^{(l)}_{\inv \rightarrow \equi}(\tilde{\mH}_\inv^{(l-1)}) \mW_{\inv \rightarrow \equi}^{(l)} + \tilde{\mH}_\equi^{(l-1)} \mW_\equi^{(l)}) \\
        &=
        \sigma_\equi(f^{(l)}_\equi(\mP\mH_\equi^{(l-1)}) \mW_{\equi \rightarrow \equi}^{(l)} + f^{(l)}_{\inv \rightarrow \equi}(\mP \mH_\inv^{(l-1)}) \mW_{\inv \rightarrow \equi}^{(l)} + \mP \mH_\equi^{(l-1)} \mW_\equi^{(l)}) \\
        &=
        \sigma_\equi(\mP f^{(l)}_\equi (\mH_\equi^{(l-1)}) \mW_{\equi \rightarrow \equi}^{(l)} +  \mP f^{(l)}_{\inv \rightarrow \equi} (\mH_\inv^{(l-1)}) \mW_{\inv \rightarrow \equi}^{(l)} + \mP \mH_\equi^{(l-1)} \mW_\equi^{(l)}) \\
        &=
        \mP \sigma_\equi(f^{(l)}_\equi (\mH_\equi^{(l-1)}) \mW_{\equi \rightarrow \equi}^{(l)} + f^{(l)}_{\inv \rightarrow \equi} (\mH_\inv^{(l-1)}) \mW_{\inv \rightarrow \equi}^{(l)} + \mH_\equi^{(l-1)} \mW_\equi^{(l)}) \\
        &= \mP \mZ^{(l)}_\equi
    \end{align*}
Here, we first used the induction assumption, then applied the permutation equivariance property of $\melequi$ and $\melinvtoequi$ proven in \Cref{prop:perm_melequi,prop:perm_melinvtoequi} and lastly used the fact that we assume $\sigma_\equi$ to be an element-wise function which, hence, commutes with permutations.

Similarily, we can show that $\mZ^{(l)}_\equi$ to be a permutation equivariant mapping.
\begin{align*}
    \tilde{\mZ}^{(l)}_\inv &=
    \sigma_\inv(f^{(l)}_\inv (\tilde{\mH}_\inv^{(l-1)}) \mW_{\inv \rightarrow \inv}^{(l)} + f^{(l)}_{\equi \rightarrow \inv} (\tilde{\mH}_\equi^{(l-1)}) \mW_{\equi \rightarrow \inv}^{(l)} + \tilde{\mH}_\inv^{(l-1)} \mW_\inv^{(l)}) \\
    &=
    \sigma_\inv(f^{(l)}_\inv(\mP\mH_\inv^{(l-1)}) \mW_{\inv \rightarrow \inv}^{(l)} + f^{(l)}_{\equi \rightarrow \inv} (\mP \mH_\equi^{(l-1)}) \mW_{\equi \rightarrow \inv}^{(l)} + \mP \mH_\inv^{(l-1)} \mW_\inv^{(l)}) \\
    &=
    \sigma_\inv(\mP f^{(l)}_\inv (\mH_\inv^{(l-1)}) \mW_{\inv \rightarrow \inv}^{(l)} +  \mP f^{(l)}_{\equi \rightarrow \inv} (\mH_\equi^{(l-1)}) \mW_{\equi \rightarrow \inv}^{(l)} + \mP \mH_\inv^{(l-1)} \mW_\inv^{(l)}) \\
    &=
    \mP \sigma_\inv(f^{(l)}_\inv (\mH_\inv^{(l-1)}) \mW_{\inv \rightarrow \inv}^{(l)} + f^{(l)}_{\equi \rightarrow \inv} (\mH_\equi^{(l-1)}) \mW_{\equi \rightarrow \inv}^{(l)} + \mH_\inv^{(l-1)} \mW_\inv^{(l)}) \\
    &= \mP \mZ^{(l)}_\inv
\end{align*}
Here, we first used the induction assumption, then applied the permutation equivariance property of $\melequi$ and $\melinvtoequi$ proven in \Cref{prop:perm_melinv,prop:perm_melequitoinv} and lastly used the fact that we assume $\sigma_\equi$ to be an element-wise function which, hence, commutes with permutations.

Next, we can show that $\mH^{(l)}_\equi$ is an permutation equivariant mapping.
    \begin{align*}
        \tilde{\mH}^{(l)}_\equi &= \sigma_\equi(
        \tilde{\mZ}^{(l)}_\equi \mW^{(l)}_{\fusion,\equi, \equi} \odot \tilde{\mZ}^{(l)}_\inv \mW^{(l)}_{\fusion,\equi, \inv}  + \tilde{\mZ}^{(l)}_\equi) \\
        &= \sigma_\equi(
        (\mP \mZ^{(l)}_\equi \mW^{(l)}_{\fusion,\equi, \equi}) \odot (\mP \mZ^{(l)}_\inv \mW^{(l)}_{\fusion,\equi, \inv})  + \mP \mZ^{(l)}_\equi) \\
        &= \sigma_\equi(
        \mP( \mZ^{(l)}_\equi \mW^{(l)}_{\fusion,\equi, \equi} \odot \mZ^{(l)}_\inv \mW^{(l)}_{\fusion,\equi, \inv})  + \mP \mZ^{(l)}_\equi) \\
        &= \mP \sigma_\equi(
         \mZ^{(l)}_\equi \mW^{(l)}_{\fusion,\equi, \equi} \odot \mZ^{(l)}_\inv \mW^{(l)}_{\fusion,\equi, \inv}  +  \mZ^{(l)}_\equi) \\
        &= \mP\mH^{(l)}_\equi
    \end{align*}
We first use the previously proven joint permutation equivariance of $\mZ^{(l)}_\equi$, and then the fact both $\sigma_\equi$ and $\odot$ are element-wise operations and, therefore, commute with permutations.

Lastly, we can show that $\mH^{(l)}_\inv$ is an orientation-invariant mapping.
\begin{align*}
    \tilde{\mH}^{(l)}_\inv &= \sigma_\inv(
    \tilde{\mZ}^{(l)}_\inv \mW^{(l)}_{\fusion,\inv, \inv} \odot \abs(\tilde{\mZ}^{(l)}_\equi \mW^{(l)}_{\fusion,\inv, \equi})  + \tilde{\mZ}^{(l)}_\inv) \\
    &= \sigma_\inv(
    (\mP \mZ^{(l)}_\inv \mW^{(l)}_{\fusion,\inv, \inv}) \odot (\abs(\mP \mZ^{(l)}_\equi \mW^{(l)}_{\fusion,\inv, \equi}))  + \mP \mZ^{(l)}_\inv) \\
    &= \sigma_\inv(
    (\mP \mZ^{(l)}_\inv \mW^{(l)}_{\fusion,\inv, \inv}) \odot (\mP \abs(\mZ^{(l)}_\equi \mW^{(l)}_{\fusion,\inv, \equi}))  + \mP \mZ^{(l)}_\inv) \\
    &= \sigma_\inv(
    \mP( \mZ^{(l)}_\inv \mW^{(l)}_{\fusion,\inv, \inv} \odot \abs(\mZ^{(l)}_\equi \mW^{(l)}_{\fusion,\inv, \equi}))  + \mP \mZ^{(l)}_\inv) \\
    &= \mP \sigma_\inv(
     \mZ^{(l)}_\inv \mW^{(l)}_{\fusion,\inv, \inv} \odot \abs(\mZ^{(l)}_\equi \mW^{(l)}_{\fusion,\inv, \equi})  +  \mZ^{(l)}_\inv) \\
    &= \mP\mH^{(l)}_\inv
\end{align*}
Here, we first use the previously proven joint permutation equivariance of $\mZ^{(l)}_\inv$, and then the fact $\sigma_\inv$, $\abs$ and $\odot$ are element-wise operations and, therefore, commute with permutations.
\end{proof}

\subsection{The Main Result}

We can now plug \Cref{lem:h_equi,lem:h_permutation} together and prove \Cref{thm:modelproperties}.

\modelproperties*

\begin{proof}
    The proof of (i) and (ii) is directly given by \Cref{lem:h_equi}. The proof of (iii) is directly given by \Cref{lem:h_permutation}.
\end{proof}

\subsection{Inter-Modality Convolutions to Learn Orientation-Equivariant Representations from Orientation-Invariant Representations}\label{appendix:invtoequirequired}

Here, we show that not using the inter-modality convolutions $f^{(l)}_{\inv \rightarrow \equi}$ results in the model outputting $\bm$ for the orientation-equivariant edge representations in case there are no orientation-equivariant inputs available ($\mX_\equi = \bm{0}$). Therefore, omitting these convolutions prevents the model from predicting any non-zero orientation-equivariant output for undirected edges if orientation-equivariant inputs are unavailable. 

\begin{lemma}
    Computing $\mZ^{(l)}_\equi$ as per \Cref{eq:convolutionequi} but without using $f^{(l)}_{\inv \rightarrow \equi)}$ implies that if $\mH^{(l-1)}_\equi = \bm{0}$, then also $(\mZ^{(l)}_\equi)_{\Eundirected} = \bm{0}$ for undirected edges.
\end{lemma}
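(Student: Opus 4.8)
Looking at the claim, I need to show that if $\mH^{(l-1)}_\equi = \bm{0}$ and we omit the $f^{(l)}_{\inv \rightarrow \equi}$ term, then $\mZ^{(l)}_\equi$ restricted to undirected edges is zero.

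\textbf{Proof proposal.}

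The plan is to trace through the definition of $\mZ^{(l)}_\equi$ in \Cref{eq:convolutionequi} with the stated modifications and show each surviving term vanishes on undirected edge rows. With $f^{(l)}_{\inv \rightarrow \equi}$ removed and $\mH^{(l-1)}_\equi = \bm{0}$, the formula collapses to $\mZ^{(l)}_\equi = \sigma_\equi\bigl(f^{(l)}_\equi(\bm{0})\mW^{(l)}_{\equi \rightarrow \equi} + \bm{0}\cdot\mW^{(l)}_\equi\bigr) = \sigma_\equi\bigl(f^{(l)}_\equi(\bm{0})\mW^{(l)}_{\equi \rightarrow \equi}\bigr)$. So the entire argument reduces to showing that $f^{(l)}_\equi(\bm{0})$ is zero on undirected edge rows, and then that $\sigma_\equi$ applied row-wise to the zero vector stays zero.

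First I would recall from \Cref{sec:mel} that $f^{(l)}_\equi(\mX) = (\mboundaryequi)^H h^{(l)}(\mboundaryequi \mX)$. Plugging in $\mX = \bm{0}$ gives $f^{(l)}_\equi(\bm{0}) = (\mboundaryequi)^H h^{(l)}(\bm{0})$. The key observation is the interplay between $h^{(l)}(\bm{0})$ at the node level and the structure of $(\mboundaryequi)^H$. For an undirected edge $e_\undirected = \{u,v\}$, the row $[(\mboundaryequi)^H]_{e_\undirected, \cdot}$ has exactly two nonzero entries, $+1$ and $-1$ (or their negatives), corresponding to the two endpoints $u$ and $v$, by \Cref{eq:magnetic_boundary} with $q$ irrelevant since no phase shift applies to undirected edges. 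Now $h^{(l)}$ is applied to node features; if $h^{(l)}$ is an affine/MLP-type map that maps $\bm{0}$ to a constant node vector $\vc$ that is identical across all nodes (e.g. a bias), then $[(\mboundaryequi)^H h^{(l)}(\bm{0})]_{e_\undirected, \cdot} = \vc - \vc = \bm{0}$. The main obstacle here is precisely this: the claim implicitly requires that $h^{(l)}(\bm{0})$ produces the \emph{same} value at every node, which holds if $h^{(l)}$ is a node-wise map with no per-node parameters (so that a zero input yields a node-independent output). I would state this as the operative assumption on the implementation of $h^{(l)}$ (consistent with \Cref{appendix:model_details}), and note that the telescoping difference across the two endpoints of any undirected edge then annihilates the constant.

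Finally, once $[f^{(l)}_\equi(\bm{0})\mW^{(l)}_{\equi \rightarrow \equi}]_{e_\undirected, \cdot} = \bm{0}$ for every undirected edge, I apply the sign-equivariance of $\sigma_\equi$: since $\sigma_\equi(-x) = -\sigma_\equi(x)$, setting $x = 0$ yields $\sigma_\equi(0) = -\sigma_\equi(0)$, hence $\sigma_\equi(0) = 0$. Applying $\sigma_\equi$ element-wise to the zero rows therefore keeps them zero, so $(\mZ^{(l)}_\equi)_{\Eundirected} = \bm{0}$, completing the proof. I would also remark that directed edges are exempt because for them $(\mboundaryequi)^H$ carries complex phase factors $-\exp(\mp i\pi q)$ that do \emph{not} cancel the constant node vector, which is exactly why the conclusion is stated only for undirected edges and why this motivates keeping the $f^{(l)}_{\inv \rightarrow \equi}$ term in the full model.
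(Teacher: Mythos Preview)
Your proposal is correct and follows essentially the same approach as the paper: substitute $\mH^{(l-1)}_\equi = \bm{0}$, unfold $f^{(l)}_\equi(\bm{0}) = (\mboundaryequi)^H h(\bm{0})$, use that $h(\bm{0})$ yields a node-independent constant $\vc$, and observe that the rows of $(\mboundaryequi)^H$ corresponding to undirected edges compute differences of endpoint values so that $\vc - \vc = \bm{0}$, whence $\sigma_\equi(\bm{0}) = \bm{0}$. Your explicit derivation of $\sigma_\equi(0)=0$ from sign-equivariance and your remark on why directed edges escape the argument via the non-cancelling phase factors are both welcome additions that the paper treats more tersely.
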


\begin{proof}
    \begin{align*}
        ({\mZ}^{(l)}_\equi)_{\Eundirected} 
        &=
        \sigma_\equi(
            f_\equi^{(l)}({\mH}_\equi^{(l-1)})\mW_{\equi \rightarrow \equi}^{(l)} + {\mH}_\equi^{(l-1)} \mW_\equi^{(l)}
        )_{\Eundirected} \\
        &=
        \sigma_\equi(
            f_\equi^{(l)}(\bm{0})\mW_{\equi \rightarrow \equi}^{(l)} + \bm{0} \mW_\equi^{(l)}
        )_{\Eundirected} \\
        &=
        \sigma_\equi(
            (\mboundaryequi)^H h(\mboundaryequi\bm{0})\mW_{\equi \rightarrow \equi}^{(l)}
        )_{\Eundirected} \\
        &=
        \sigma_\equi(
            (\mboundaryequi)^H h(\bm{0})\mW_{\equi \rightarrow \equi}^{(l)}
        )_{\Eundirected} \\
        &=
        \sigma_\equi(
            (\mboundaryequi)^H_{\Eundirected} h(\bm{0})\mW_{\equi \rightarrow \equi}^{(l)}
        ) \\
        &=
        \sigma_\equi(
            (\mboundaryequi)^H_{\Eundirected} \bm{c})\mW_{\equi \rightarrow \equi}^{(l)}
        ) \\
        &=
        \sigma_\equi(
            \bm{0}\mW_{\equi \rightarrow \equi}^{(l)}
        ) \\
        &=
        \sigma_\equi(\bm{0})
        ) \\
        &=
        \bm{0}
         \\
    \end{align*}
\end{proof}

Here, we have used that the node feature mapping $h$ will get zeros as input and therefore produce the same constant embedding $\bm{c}$ for all nodes. Along undirected edges, the flow that is induced by $(\mboundaryequi)^H$ corresponds to the differences between its endpoints, which consequently will be zero as well. Note that similarly each directed edge will be assigned the representation $(\exp(i\pi q) - \exp(-i \pi q)\bm{c}$. While being non-zero, this layer architecture is only able to produce a constant feature for both directed and undirected edges which heavily limits its expressivity. This is also well reflected in \Cref{tab:results_ablation}.

\section{\rebuttal{Considerations for Modelling Orientation Equivariant and Orientation Invariant Signals}}

\rebuttal{Here, we provide additional considerations for how orientation equivariant and invariant signals should be modeled on graphs that contain both directed and undirected edges. This discussion supplements \Cref{fig:undirected_example}. We begin by highlighting the limitations of previous architectures that treat all edge signals as fully orientation invariant or orientation equivariant respectively.}

\rebuttal{\textbf{Orientation Invariant Architectures.}
Most architectures that are not grounded in Algebraic Topology treat all inputs and targets as orientation invariant. That is, the input features and the output features are assumed to not have an inherent direction. This is, however, limiting, as in many real-world applications (traffic, electrical engineering, water flow networks, etc.) accounting for the direction of a signal is crucial. Therefore, these architectures are not applicable to these settings.
}

\rebuttal{\textbf{Orientation Equivariant Architectures.}
Note that it is impossible to represent the direction of a signal through a scalar value without defining a reference orientation for the associated edge: Either the signal direction matches the reference orientation or it does not. For example, topological methods use the sign of a signal to indicate if its direction matches the reference orientation. In the case of directed edges, there is a clear choice for defining this reference. For undirected edges, however, an arbitrary reference must be fixed which is typically done in topological approaches. Since the reference orientation is used only to represent direction in signals and can be seen as a representational basis it should not affect the predictions of a model. This leads to an equivariance condition for these architectures: If the reference orientation changes, inputs, and outputs of a model should be represented with respect to this new orientation but not change apart from that. Topological models typically satisfy this notion \citep{roddenberry2019hodgenet,roddenberry2021principled}.
}

\rebuttal{They, however, treat all inputs and outputs as signals that have inherent direction, i.e. represent them with respect to an orientation. Therefore, they can not model information that does not come with inherent direction like the number of lanes in a street, resistance, or pipe diameter. Enforcing orientation equivariance for all inputs and outputs of a model does not apply to settings where also orientation invariant signals that are irrespective of a reference orientation are available. In particular, these models require representing input and output signals with respect to an orientation and it is ill-defined to define how an orientation invariant signal should be represented with respect to a reference orientation, e.g. through its sign.}

\rebuttal{\textbf{Edge Direction.}
In addition to the inherent direction to the signal of an edge, the edge itself can be directed. In practice, the direction of an edge implies constraints on the solutions admissible to a problem: One-way streets in traffic networks prohibit a certain traffic flow direction, and diodes and valves play a similar role in electrical circuits or pipe networks. However, orientation equivariant models are insensitive to the (arbitrarily chosen) orientation of edge edge. Therefore, even if the orientation of directed edges is fixed to coincide with their direction, this information can not be utilized by a fully orientation equivariant architecture. \Cref{fig:undirected_example} exemplifies this issue: \Cref{fig:undirected_example:a,fig:undirected_example:b} differ only in the orientation chosen to represent the scenario and, thus, any orientation equivariant model will output the same value for both. In \Cref{fig:undirected_example:a}, however, the flow orientation is defined opposite to the direction of a directed edge. In traffic applications, this drastically affects the scenario: In one case, one-way constraints are violated while in the other they are not.
}

\rebuttal{This motivates our adopted notions of orientation equivariance and invariance: The concept of orientation is needed only for undirected edges. Directed edges already provide a reference orientation for representing signals through their direction. Furthermore, models should not be equivariant with respect to the direction of directed edges. In our work, we represent equivariant signals of directed edges with respect to their inherent direction and use an arbitrary orientation only for undirected edges. Consequently, the notions of orientation equivariance and orientation invariance only need to hold for undirected edges.}

\section{The Laplace Operators of \model}\label{appendix:laplacians}

Here, we explicitly list the definitions of all Laplace operators and briefly describe some intuition. This is supplementary to the description for the Equivariant Edge Laplacian $\elequi$ in \Cref{sec:mel}. Intuitively speaking, each Laplacian is a composition of two boundary operators. Therefore, the message passing from an adjacent edge $e^\prime \in \gE$ to a target edge $e \in \gE$ can be understood as a two-step procedure: 
\begin{inparaenum}
    \item The signal of $e^\prime$ is expressed relative to the node both $e$ and $e^\prime$ are incident to. This corresponds to the right-hand term in the construction of the Laplacian. Expressing the signal on the node level makes it invariant to the orientation (if the suitable boundary operator was used, i.e. the orientation-equivariant boundary for an orientation-equivariant signal and vice versa).
    \item The signal of $e^\prime$, currently expressed in reference to the shared node (ingoing versus outgoing), is then expressed relative to $e$. This corresponds to the left-hand term in the construction of the Laplacian from composing two boundaries. Depending on which boundary operator is used, this signal will be orientation-equivariant or orientation-invariant.
\end{inparaenum}

\begin{table}[ht]
    \centering
    \caption{Realization of different Laplacian operators for directed edges $\longrightarrow$ and undirected (oriented) edges $\Longrightarrow$. Equivariant operators encode if two edge orientations/directions \textcolor{phaseshiftorange}{align}, i.e. whether they are non-consecutive through the sign of the real part. The sign of the complex phase determines the \textcolor{phaseshiftblue}{direction} relative to the reference node both edges are incident to, i.e. whether they are in- or outgoing. The relative orientation (alignment) of two edges is ignored by orientation-invariant operators. We omit the cases for which the direction/orientation of both $e$ and $e^\prime$ are flipped (this realizes a different sign in the \textcolor{phaseshiftblue}{complex phase)}.}
    \label{tab:l_equi}
  \vspace{0.2cm}
    \resizebox{0.9\linewidth}{!}{%
  \begin{tabular}{c|cc|cc|c|c}
    \toprule
    \makecell[c]{\textbf{Adjacency}} & $[\elequi]_{e, e^\prime}$ & $[\melequi]_{e, e^\prime}$ & $[\elinv]_{e, e^\prime}$ & $[\melinv]_{e, e^\prime}$ & $[\melequitoinv]_{e, e^\prime}$ & $[\melinvtoequi]_{e, e^\prime}$ \\

    \midrule\midrule
    $e = e^\prime$ & $2$ & $2$ & $2$ & $2$ & $2$ & $2$\\
    \midrule
    $\overset{e}{\longrightarrow} \overset{e^\prime}{\longrightarrow}$ & \multirow{2}{*}{$\opm 1$} & $ \exp(\bpm 2\pi q)$ & \multirow{2}{*}{$1$}& $\exp(\bpm 2\pi i q)$ & $\bmp \exp(\bpm 2\pi i q)$ & $ \bpm\exp(\bpm 2\pi i q)$\\
    $\overset{e}{\longrightarrow} \overset{e^\prime}{\longleftarrow}$ & & $1$ & & $1$ & $\bpm 1$ & $ \bpm 1$
    \\
    \midrule
    $\overset{e}{\longrightarrow} \overset{e^\prime}{\Longrightarrow}$ & \multirow{4}{*}{$\opm 1$} & \multirow{4}{*}{$\opm \exp(\bpm \pi i q)$} & \multirow{4}{*}{$1$} & \multirow{4}{*}{$\exp(\bpm \pi i q)$} & \multirow{2}{*}{$\bmp \exp(\bpm \pi i q)$} & \multirow{2}{*}{$\bpm \exp(\bpm \pi i q)$} \\
    $\overset{e}{\Longrightarrow} \overset{e^\prime}{\longrightarrow}$ & & & & & & \\
    $\overset{e}{\longrightarrow} \overset{e^\prime}{\Longleftarrow}$& & & & & $\bpm \exp(\bpm \pi i q)$ & $\bpm \exp(\bpm \pi i q)$ \\
    $\overset{e}{\Longrightarrow} \overset{e^\prime}{\longleftarrow}$& & & & & $\bpm \exp(\bmp \pi i q)$ & $\bpm \exp(\bmp \pi i q)$ \\
    \midrule
    $\overset{e}{\Longrightarrow} \overset{e^\prime}{\Longrightarrow}$ & \multirow{2}{*}{$\opm 1$} & \multirow{2}{*}{$\opm 1$}
    & \multirow{2}{*}{$1$}
    & \multirow{2}{*}{$1$}
    & \multirow{1}{*}{$\bmp 1$} & \multirow{2}{*}{$\bpm 1$} \\
    $\overset{e}{\Longrightarrow} \overset{e^\prime}{\Longleftarrow}$ & & & & & $\bpm 1$ &  \\
     \bottomrule
\end{tabular}

  }
\end{table}

The values in \Cref{tab:l_equi} assume values based on if 
\begin{inparaenum}[(i)]
    \item $e$ and $e^\prime$ \textcolor{phaseshiftorange}{align} (i.e. are not consecutive). If they align, the first value is assumed, if they misalign, the second.
    \item The \textcolor{phaseshiftblue}{direction} of $e$ relative to the node. For the direction depicted in \Cref{tab:l_equi}, the first value is assumed. Flipping the orientation / direction of both edges gives the second value.
\end{inparaenum}

Based on this intuition, we interpret how the Laplacians of \model materialize in \Cref{tab:l_equi}. We can see that only the Magnetic Laplacians are direction-aware and $\elequi$ and $\elinv$ do not depend on the direction. Furthermore, messages between two directed edges experience phase shifts twice: Once when being emitted from the directed source edge and once when being aggregated by the target edge. If the two directions are aligned, these phase shifts cancel out ($\overset{e}{\longrightarrow} \overset{e^\prime}{\longleftarrow}$). Otherwise, they add to a total phase shift of $2 \pi q$. One can also see that which direction an edge has (relative to the shared incident node) is encoded in the sign (i.e. direction) of the complex phase shift for both $\melequi$ and $\melinv$. The key difference between the orientation-equivariant $\melequi$ and the orientation-invariant $\melinv$ is that the former also \textcolor{phaseshiftorange}{re-orients} signals depending on if the edges align, similar to $\elequi$. The Fusion Laplacians $\melequitoinv$ and $\melinvtoequi$ mix both modalities and therefore can not be categorized intuitively as easily.

\rebuttal{Since all Laplacians can be constructed without materializing any boundary operator by following \Cref{tab:l_equi}, our approach scales in the number of edge pairs connected by a shared node. Put differently, \model can be seen as node-level GNN on the line graph of the problem with a sparse convolution operator that has non-zero entries only when two of its nodes (i.e. edges) are adjacent. \model, therefore, has the same runtime complexity as all line-graph-like methods.}

\section{Experimental Details}\label{appendix:experimental_details}

\subsection{Datasets}\label{appendix:dataset_details}

Here, we detail statistics for all datasets and the exact generation process for synthetic data. An overview of the statistics of each dataset is given in \Cref{tab:dataset_statistics}.

\begin{table}[ht]
    \centering
    \caption{Average performance of models on synthetic tasks (\textcolor{phaseshiftgreen}{\textbf{best}} and \textcolor{phaseshiftorange}{\textbf{runner-up}}) with $95\%$ confidence intervals of the mean.}
    \label{tab:dataset_statistics}
  \vspace{0.2cm}
    \resizebox{0.6\linewidth}{!}{%
  \begin{tabular}{lcccccc}
\toprule
\multirow{2}{*}{\textbf{Dataset}} & \multirow{2}{*}{\textbf{\#Graphs}} & \multirow{2}{*}{\textbf{\#Nodes}} & \multicolumn{2}{c}{\textbf{\#Edges}} & \multicolumn{2}{c}{\textbf{\#Features}}\\
\cmidrule(lr){4-5}
\cmidrule(lr){6-7}
 &  &  & {Dir.} & {Total} & {\Equi} & {\Inv}\\
\midrule
\randomwalkdenoising & 1000 & 50 & 161--249 & 161--249 & 0 & 2 \\
\mixedlongestcycleidentification & 1000 & 12--16 & 16--31 & 25--33 & 0 & 1 \\
\typedtrianglefloworientation & 100 & 300 & 245--282 & 400 & 1 & 3 \\
\midrule
Anaheim & 1 & 416 & 354 & 634 & 0 & 8 \\
Barcelona & 1 & 930 & 1074 & 1798 & 0 & 9 \\
Chicago & 1 & 933 & 0 & 1475 & 0 & 10 \\
Winnepeg & 1 & 1040 & 354 & 1595 & 0 & 8 \\
\electricalcircuits & 591 & 8--12 & 0--7 & 12--20 & 1 & 4 \\
\bottomrule
\end{tabular}

  }
\end{table}

\textbf{\randomwalkdenoising.} In this inductive setting, we generate $1,000$ Erdős–Rényi graphs \cite{erdos1960evolution} with $n=50$ nodes and choose the edge probability such that the expected number of edges is $200$. All edges are directed. Each graph is assigned a transition probability sampled uniformly from $[0, 1]$. For each graph, we sample a random walk starting from a randomly selected node (uniform probabilities) up to length $100$. We then randomly select $20\%$ of the sampled transitions and provide them as orientation-invariant features as one-hot encoding and the transition probabilities. The task is to predict if a node is part of the $80\%$ nodes in the random walk that are not provided as an input, i.e. a binary classification problem. We use a $70/10/20$ split to assign graphs to train, validation, and test sets respectively.

\textbf{\mixedlongestcycleidentification.} In this inductive setting, we generate $1,000$ graphs and sample a cycle size $c$ uniformly from $[6, 8]$. We then generate two cycles, each consisting of $c$ nodes: The first one contains only directed and consecutive edges. The second one is similar but contains one undirected edge. To each cycle, we then add additional $c$ random edges between its nodes, each of which is directed with a probability of $25\%$. We ensure that no second consecutive, purely directed cycle of length $c$ is manually formed in this process. Lastly, we connect both cycles with an edge that, too, is directed with a probability of $25\%$. Therefore, each graph can be seen as consisting of two components: One contains one consecutive, purely directed cycle of length $c$, while the other contains a non-purely directed, consecutive cycle of the same length. The task is to classify edges as being part of the longest purely-directed cycle or not. We provide the constant $1$ vector as a surrogate orientation-invariant input, and, therefore, this task is purely topological. Models need to understand the concept of edge direction to perform well in this setting. Again, we use a $70/10/20$ split for the graphs.

\textbf{\typedtrianglefloworientation.} In this inductive setting, we construct $100$ graphs with $n=300$ nodes and $m=400$ edges. We want to induce a relationship between orientation-invariant and orientation-equivariant features. We assign each edge one of $c=3$ colors as an orientation-invariant feature. We then plant $t=100$ disjoint triangles in the graph. Each triangle is constructed according to one of three patterns:
\begin{inparaenum}[(i)]
    \item Triangles with at least one directed edge, all edges of the same color.
    \item Triangles with edges of the same color, but at least two edges are directed and ensure that the triangle can not be consecutive.
    \item Triangles with edges of different colors.
\end{inparaenum}
We generate $50\%$ of triangles with mechanism (i), and $25\%$ with mechanisms (ii) and (iii) respectively. We then add $100$ random edges to connect the triangles and ensure that no new triangles are formed. For the task of this problem, we introduce an orientation-equivariant flow at each edge and the task is to re-orient these flow labels such that flows within triangles are closed. To that end, in the solution to this task, flow along directed edges must follow the edge direction while undirected edges permit flow in both directions. We introduce a dependency to orientation-invariant features by requiring that flow can only pass through triangles of the same color. That is, the target for triangles of type (i) is to re-orient the flow correctly such that it is closed within the monochromatic triangle. Triangles of type (ii) contain two directed edges that prevent consecutiveness and therefore, the target is $0$ as no flow is permitted through such triangles. The same holds for triangles of type (iii) that prohibit flow as their edges are not monochromatic. We then add $100$ edges, while ensuring no new triangles of type (i) are formed, that, consequentially, also have a target of $0$. This task, therefore, involves understanding and relating the concepts of direction, orientation, color (which is orientation-invariant), and flow (which is orientation-equivariant). It probes all possible modalities an edge-level GNN can be exposed to. We again split each of the graphs into train, validation, and test sets using a $70/10/20$ split.

\textbf{Traffic Datasets.}
We collect the Anaheim, Barcelona, Chicago, and Winnipeg datasets from the TNTP transportation network repository \citep{stabler2024transportation}. This project studies the Traffic Assignment Problem \citep{patriksson2015traffic} and provides flows that correspond to the best-known solutions in terms of lowest Average Excess Cost \citep{boyce2004convergence}. We select these graphs based on that they are decently sized, provide both orientation-invariant features, and contain directed edges. Nodes correspond to intersections while edges represent links, e.g. streets. We use the following as orientation-invariant inputs (if available):
\begin{inparaenum}[(i)]
    \item capacity,
    \item length,
    \item free flow time (i.e. travel time with no congestion),
    \item B factor and power, which are calibration parameters for the Traffic Assignment Problem,
    \item if there is a toll on the link, and
    \item the link type, e.g. highway.
    \item if the edge corresponds to excess flow on a source / sink node
\end{inparaenum}
The optimal flow solution is computed with respect to pre-defined demand for given source-target node pairs. To represent this as edge features, we identify all edges incident to edges that correspond to sources or sinks in terms of the transportation problem, add them to the training set and identify these edges with an invariant feature. By keeping these edges in the training set, the model can infer the problem constraints and predict flow on all edges that are not incident to sources or targets. The problem topology is preprocessed as follows: The input data is entirely directed, and, therefore, we aggregate edges $(u,v)$ and $(v,u)$ into one undirected edge if both are present. The flow labels are combined using subtraction (i.e. we subtract the from the edge direction that is used as a reference orientation to represent the flow). We then normalize the orientation-invariant features to follow a standard normal and normalize the target flows to $[0, 1]$.

\textbf{Electrical Circuits.}
We generate random topologies with the following procedure: First, a cycle of length $c_\text{init} = 3$ is generated. Then, we iteratively attach new nodes $v$ to the graph by selecting random source and sink nodes $s, t$ and adding the edges $(s, v)$ and $(v, t)$. We then randomly assign different component types to each edge:
\begin{inparaenum}[(i)]
    \item exactly one power outlet,
    \item resistors with a resistance uniformly sampled from $[100\Omega, 10,000\Omega] $,
    \item diodes with saturation current $18.8nA$, parasitic resistance $0\Omega$, reverse breakdown voltage $0.5\mu V$, zero-bias junction capacitance $30F$, a linearly graded junction, emission coefficient of $2.0$ and transit time of $0s$.
\end{inparaenum}
Diodes only permit current to flow in one direction.
The ratio of resistors and diodes is $80/20$. We then use LTSpice \citep{asadi2022essential} to simulate the currents at each edge and voltages at each node until a steady state is reached. We use the component type and resistance at resistors as orientation-invariant features and normalize the latter to follow a standard normal distribution. The voltage along the power outlet is the only orientation-equivariant input signal, and we assign $0$ to all the other edges. For each graph we individually normalize the currents: First, we divide by the voltage at the power outlet. We then also normalize the currents again by dividing by the standard deviation of all flows on all graphs, i.e. a constant. 

\subsection{Tasks}\label{appendix:tasks}

\textbf{Denoising.}
The first task is denoising the flow signal of traffic data and circuits. To that end, for each graph, we compute the standard deviation $\sigma_y$ of all flows and then add noise sampled uniformly from $[-\sigma_y, \sigma_y]$ to all edges. The task is to recover the original signal, which we measure with the Root Mean Squared Error (RMSE) ($\downarrow$). Since the additional orientation-invariant input, the noisy flow provides a strong signal for the direction of the target, it is, arguably, considered to be the easiest of the three tasks.

\textbf{Interpolation.}
The second task is predicting the traffic flow / current from only a few ground-truth values. To that end, for each graph, we provide $10\%$ of the true flow labels as an additional orientation-equivariant input and set the rest to $0$. The task is to predict the orientation-equivariant flow at edges for which flow was not provided. Again, we use RMSE as the main evaluation metric. Similar to the denoising task, the auxiliary orientation-invariant input signal can provide information about the true direction of the orientation-equivariant target. Since this information is now only available at some edges, we consider the interpolation task to be harder.

\textbf{Simulation.}
The third task is predicting the orientation-equivariant target (traffic flow / electrical current) from just the topology and available orientation-equivariant and orientation-invariant features. This task can be seen as learning to simulate the traffic flow / electrical current. An effective model has to learn the underlying physical processes the data is derived from. Since we do not provide any input regarding the direction of the orientation-equivariant target, we consider this task to be the hardest problem among the three. Again, we can measure the performance of a model using RMSE but also report the Mean Absolute Error (MAE) ($\downarrow$) and $R^2$ value ($\uparrow$).

\subsection{Model Architecture}\label{appendix:model_details}

\textbf{MLP.}
We use a standard MLP with ReLU activations. We concatenate both orientation-equivariant and orientation-invariant inputs as a model input. This, effectively, mistreats orientation-equivariant input signals as orientation-invariant. The same is true for the orientation-equivariant model output/target.

\textbf{\linegraphgnn.} Similar to the MLP, we concatenate orientation-equivariant and orientation-invariant inputs, thereby mistreating the equivariant modality, and successively apply convolutions with the Line Graph Laplacian $\mL_{\modelname{LineGraph}} = \mD_{\modelname{LineGraph}} - \mA_{\modelname{LineGraph}}$:

\begin{equation}
    [\mA_{\modelname{LineGraph}}]_{e,e^\prime} = \begin{cases}
        1 & \text{if } e \neq e^\prime \text{ and } e, e^\prime \text{ adjacent} \\ 
        0 & \text{otherwise}
    \end{cases}
  \rebuttal{\,.}
\end{equation}

Here, $\mD_{\modelname{LineGraph}}$ is a diagonal matrix of edge degrees, i.e. $[\mD_{\modelname{LineGraph}}]_{e,e} = \sum_{e^\prime} [\mA_{\modelname{LineGraph}}]_{e, e^\prime}$. A \modelname{LineGraph} convolution is then realized as:

\begin{equation}
    \mH^{(l)} = \sigma(\mL_{\modelname{LineGraph}} \mH^{(l-1)} \mW^{(l)} + \vb^{(l)})
  \rebuttal{\,.}
\end{equation}

Here, $\sigma$ is the ReLU non-linearity, and $\mW^{(l)}$ and $\vb^{(l)}$ are the trainable model parameters.

\textbf{\hodgegnn.} We use the \hodgegnn architecture of \citet{roddenberry2019hodgenet}, which is a (jointly) orientation-equivariant architecture. It, however, can not model orientation-invariant inputs, as it treats all input signals as equivariant. We therefore do not input this modality at all. Each layer uses the Equivariant Edge Laplacian $\elequi$ as a convolution operator.

\begin{equation}
    \mH^{(l)} = \sigma_\equi(\elequi \mH^{(l-1)} \mW^{(l)})
  \rebuttal{\,.}
\end{equation}

Here, $\sigma_\equi$ is a sign-equivariant activation function, which we initialize as $\tanh$. To not break (joint) orientation equivariance, we can not use a bias term.

\textbf{\hodgegnninvariant.} We use the same architecture as for \hodgegnn, but also concatenate the orientation-invariant inputs to the orientation-equivariant inputs. This incorrectly models orientation-invariant signals as orientation-equivariant.

\textbf{\hodgegnnundirected.} We use the same architecture as for \hodgegnn, but use the ReLU as an activation function instead of the sign-equivariant $\tanh$. This breaks (joint) orientation equivariance: The model is now sensitive to the chosen orientation and, therefore, can be seen as direction-aware. However, this holds for all edges, and hence \hodgegnnundirected\xspace can also not model undirected edges anymore.

\rebuttal{\textbf{\transformer}. We adopt the node-level graph transformer of \citet{geisler2023transformers} for edge-level problems on directed graphs as follows: We first compute the spectral decomposition of the orientation equivariant and invariant Laplacians $\melequi$ and $\melinv$ as they are both normal matrices:
\begin{equation}
    \melequi = {(\Gamma^{(q)})}^{H}_\equi \Lambda^{(q)}_\equi \Gamma^{(q)}_\equi
    \qquad
    \qquad
    \qquad
    \melinv = {(\Gamma^{(q)})}^{H}_\inv \Lambda_\inv^{(q)}\Gamma^{(q)}_\inv
  \rebuttal{\,.}
\end{equation}
We then concatenate orientation equivariant and invariant edge features together with the $k=32$ (complex-valued) eigenvectors $\Gamma^{(q)}_\equi$ and $\Gamma^{(q)}_\inv$ associated with the corresponding eigenvalues of smallest magnitude. We flatten real and complex parts. Similar to \citet{geisler2023transformers}, these can be understood as direction-aware positional encodings of the edges in the graph. We then feed this as an input to a 4-layer transformer with hidden dimension $d=32$, mirroring the hyperparameter choices for $\model$. Note that while this model can distinguish directed and undirected edges it is not a jointly orientation equivariant or invariant model according to \Cref{def:orientation_equivariance,def:orientation_invariance} despite utilizing $\melequi$ and $\melinv$.
}

\rebuttal{\textbf{\rossignn.} We also ablate an alternative mechanism to represent directed edges while correctly modeling orientation equivariant and invariant signals. The core idea closely follows the node-level GNN proposed by \citet{rossi2024edge} and distinguishes directed from undirected edges through two different message-passing schemes. To that end, we separate the boundary operators $\boundaryequi$ and $\boundaryinv$ into three boundary maps each:
\begin{equation}  [\overleftrightarrow{B}_\equi]_{v,e} =
  \begin{cases}
    -1 & \text{if } \orientation(e) = (v, \cdot) \text{ and } e \in \Eundirected\\
    1  & \text{if } \orientation(e) = (\cdot, v) \text{ and } e \in \Eundirected \\
    0  & \text{otherwise}
  \end{cases}
  \rebuttal{\,.}
\end{equation}
\begin{equation} 
  [\overleftarrow{B}_\equi]_{v,e} =
  \begin{cases}
    -1 & \text{if } e = (v, \cdot) \text{ and } e \in \Edirected\\
    0  & \text{otherwise}
  \end{cases}
  \rebuttal{\,.}
\end{equation}
\begin{equation} 
[\overrightarrow{B}_\equi]_{v,e} =
  \begin{cases}
    1 & \text{if } e = (\cdot, v) \text{ and } e \in \Edirected\\
    0  & \text{otherwise}
  \end{cases}
  \rebuttal{\,.}
\end{equation}
\begin{equation}  [\overleftrightarrow{B}_\inv]_{v,e} =
  \begin{cases}
    1 & \text{if } v \in \orientation(e) \text{ and } e \in \Eundirected\\
    0  & \text{otherwise}
  \end{cases}
  \rebuttal{\,.}
\end{equation}
\begin{equation} 
  [\overleftarrow{B}_\inv]_{v,e} =
  \begin{cases}
    1 & \text{if } e = (v, \cdot) \text{ and } e \in \Edirected\\
    0  & \text{otherwise}
  \end{cases}
  \rebuttal{\,.}
\end{equation}
\begin{equation} 
  [\overrightarrow{B}_\inv]_{v,e} =
  \begin{cases}
    1 & \text{if } e = (\cdot, v) \text{ and } e \in \Edirected\\
    0  & \text{otherwise}
  \end{cases}
  \rebuttal{\,.}
\end{equation}
They imply three separate Edge Laplacians each set corresponding to one of the Magnetic operators proposed in our work through a similar construction. Instead of relying on complex numbers to represent directionality, this approach represents directed edges through three separate message-passing operations: One between undirected edges, one between directed edges that are both incoming edges concerning a shared node, and one between edges that are outgoing concerning a shared node. One natural drawback of this approach is that there is no direct message passing between all three types of edges (like in \model).
We construct the \rossignn architecture similar to \model: In \Cref{eq:convolutionequi,eq:convolutioninv}, we replace our Magnetic Laplacian convolutions with the aforementioned three Laplacians and associate a separate set of learnable weight matrices for each of the three. Summation serves to aggregate the result of the three distinct convolutions. All remaining design choices of \model remain. Therefore, this baseline closely ablates an alternative representation of edge directionality. Importantly, since the boundary operators are adapted from the ones proposed in our work, \rossignn, too, is a jointy orientation equivariant and invariant model according to \Cref{def:orientation_equivariance,def:orientation_invariance}.
}

\textbf{\model.} We realize \model according to \Cref{eq:convolutionequi,eq:convolutioninv,eq:fusion,eq:fusion2}. We realize the sign-equivariant activation $\sigma_\equi$ as $\tanh$ to ensure joint orientation equivariance (see \Cref{appendix:proofs}. In the main text, we omitted biases for brevity. However, to not violate joint orientation equivariance, biases can only be used when the output of the affine transformation is not orientation-equivariant, i.e. orientation-invariant. Consequently, the following linear transformations are supplied with biases: $\mW^{(l)}_{\inv \rightarrow \inv}, \mW^{(l)}_{\equi \rightarrow \inv}, \mW^{(l)}_{\inv}, \mW^{(l)}_{\fusion,\inv \rightarrow \equi}, \mW^{(l)}_{\fusion,\inv \rightarrow \inv}$. For each message passing operation, we realize the node transformation $h$ as a simple 1-layer MLP with hidden dimension $32$ for each of the inter-modality convolutions $f^{(l)}_{\inv \rightarrow \equi}$ and $f^{(l)}_{\equi \rightarrow \inv}$, and as identities for the intra-modality convolutions $f^{(l)}_{\equi \rightarrow \equi}$ and $f^{(l)}_{\inv \rightarrow \inv}$ which we empirically find to perform better.

A second caveat is that \model's convolution operators output complex-valued signals. While one could, in general, backpropagate through such signals as well, we instead flatten the output signal of each convolution operation by concatenating real and imaginary parts. Notice that this operation is sign-invariant:

\begin{align}
    \modelname{FLATTEN}(-\mX) 
    &= (-\realpart(\mX)) \mathbin\Vert (-\imaginarypart(\mX)) \\
    &= -(\realpart(\mX) \mathbin\Vert \imaginarypart(\mX)) \\
    &= -\modelname{FLATTEN}(\mX)
\end{align}

By \Cref{prop:signequivariance}, applying this flattening operation, therefore, preserves orientation-equivariance. Orientation invariance is trivially preserved as there are no restrictions on the mappings applied to this signal modality as long as they are orientation-independent. This way, all model weights act on real-valued signals. Due to the concatenation of real and imaginary values, we keep hidden dimensions consistent by letting the affine / linear transformations project into a space of half the desired dimension. We fix the complex phase shift to $q = m^{-1}$, as this is the longest cycle length possible in any graph. \rebuttal{Consequently, potential side-effects due to the cyclical nature of complex phase shifts are avoided.}

Third, we apply a final linear/affine transformation to the representation of the last layer (for the respective signal modality) $\mH^{(L)}_{(\cdot)}$. In the case of orientation-invariant outputs, we can use an affine transformation with biases as orientation-independent transformations do not affect joint orientation invariance. For orientation-equivariant outputs, we have to omit biases and use a linear transformation only. This linear transformation is a sign-equivariant function and, again, preserves joint orientation equivariance as per \Cref{prop:signequivariance}.

Lastly, we also omit the normalization of the Laplacians in the main text. Each Laplacian $\mL^{(\cdot)}_{(\cdot)} = (\mB^{(\cdot)}_{(\cdot)})^H \mB^{(\cdot)}_{(\cdot)}$ is normalized symmetrically by normalizing the corresponding boundary maps:

\begin{equation}
    \tilde{\mB}^{(\cdot)}_{(\cdot)} =  \mB^{(\cdot)}_{(\cdot)}(\mD_{(\cdot)}^{(\cdot)})^{-0.5} 
  \rebuttal{\,.}
\end{equation}

Here, $[(\mD_{(\cdot)}^{(\cdot)})^{-0.5}]_{e,e} = \sum_{e^\prime} [\abs(\mL^{(\cdot)}_{(\cdot)})]_{e, e^\prime}$ is a diagonal matrix of degrees of the Laplacian.

Since the normalization is independent of the orientation (due to the absolute value), it does not affect joint orientation equivariance or invariance.

\subsection{Training Setup}\label{appendix:training_setup}

For all models, including the baselines, we perform an extensive hyperparameter search to select the most suitable configuration. To that end, we perform $20$ different data splits for each hyperparameter configuration and compare the average performance to select the best hyperparameter configuration for each model. We search a cartesian grid of the following hyperparameter options:

\begin{itemize}
    \item Learning Rate: $\{0.03, 0.01, 0.003, 0.001\}$
    \item Hidden Dimension: $\{8, 16, 32\}$
    \item Number of Layers: $\{2,3,4\}$
\end{itemize}

For the \mixedlongestcycleidentification, we always fix the number of layers to the cycle size $c$ since our convolution operators are local and cycles can not be detected if the receptive field of an edge can not access the entire cycle.

In all reported results here, we compute average metrics over $50$ different dataset splits and also provide the $95\%$ confidence interval for the mean.

All models are trained with the ADAM optimizer with no weight decay, a mean-squared error objective for regression problems, and cross-entropy for classification tasks. We clip gradients to norm $1$ and apply dropout \citep{srivastava2014dropout} with probability $0.1$ after every layer. We use the following configurations for each dataset:
\begin{itemize}
    \item Electrical Circuits: Batch size of $10$ graphs, $200$ epochs.
    \item Traffic Datasets: Batch size of $1$, $500$ epochs.
    \item \randomwalkdenoising, \mixedlongestcycleidentification: Batch size of $10$ graphs, $50$ epochs.
    \item \typedtrianglefloworientation: Batch size of $1$ graph, $50$ epochs.
\end{itemize}

For the denoising, interpolation and simulation tasks we use a dataset split of $80/10/10$ for traffic and $50/25/25$ for the circuits dataset. For synthetic tasks, we use $70/10/20$. We track the validation RMSE over all epochs and select the model with the best validation metric. We implement our models in PyTorch \citep{pytorch} and PyTorch Geometric \citep{pytorchgeo} and train on these types of GPUs: 
\begin{inparaenum}[(i)]
    \item NVIDIA GTX 1080TI GPU
    \item NVIDIA A100 GPU
    \item NVIDIA H100 GPU
\end{inparaenum}. 

\section{Additional Results}\label{appendix:additional_results}

Here, we report average results over $50$ runs with the best-found hyperparameter configuration for every model. We also report the $95\%$ confidence interval for the mean.

\textbf{Synthetic Tasks and Simulation.} We supply the confidence intervals for tables \Cref{tab:results_synthetic_main,tab:results_simulation_main} in \Cref{tab:results_synthetic,tab:results_simulation}. Additionally, we report Mean Absolute Error (MAE) ($\downarrow$) and $R^2$ score ($\uparrow$) for the simulation task in \Cref{tab:results_simulation_mae,tab:results_simulation_r2}. The findings are consistent with the RMSE metric.

\textbf{Denoising and Interpolation.} We show the RMSE for the denoising task in \Cref{tab:results_denoising}. \model achieves the best results on four out of five datasets. Because the noisy flow input makes this problem significantly easier, some baselines achieve satisfactory performance and the gap to \model is not as pronounced as in the challenging simulation task. We make similar observations for the interpolation problem in \Cref{tab:results_interpolation}.

\textbf{Ablation.} We supply the confidence intervals for the mean RMSE of \Cref{tab:results_ablation_main} in \Cref{tab:results_ablation}. Additionally, we visualize the distribution of simulated electrical currents (orientation-equivariant targets) in \Cref{fig:electrical_circuits_directed_edges_flow}. Similar to \Cref{fig:anaheim_directed_edges_flow}, we find that the direction-aware \model provides physically more plausible predictions. The non-negativity constraint imposed by diodes is learned to a lesser extent than on the Anaheim dataset (\Cref{fig:anaheim_directed_edges_flow}).

\begin{table}[ht]
    \centering
    \caption{Average performance of models on synthetic tasks (\textcolor{phaseshiftgreen}{\textbf{best}} and \textcolor{phaseshiftorange}{\textbf{runner-up}}) with $95\%$ confidence intervals of the mean.}
    \label{tab:results_synthetic}
  \vspace{0.2cm}
    \resizebox{0.5\linewidth}{!}{%
  \begin{tabular}{lccc}
\toprule
\small{\textbf{Model}} & \makecell{\small{\randomwalkdenoising} \\ \tiny{AUC-ROC($\uparrow$)}} & \makecell{\small{\mixedlongestcycleidentification} \\ \tiny{AUC-ROC($\uparrow$)}} & \makecell{\small{\typedtrianglefloworientation} \\ \tiny{RMSE($\downarrow$)}} \\
\midrule
\footnotesize{\mlp} & {$0.720$$\scriptscriptstyle{\pm 0.001}$} & {$0.500$$\scriptscriptstyle{\pm 0.000}$} & {$0.547$$\scriptscriptstyle{\pm 0.001}$} \\
\makecell[l]{\footnotesize{\linegraphgnn}} & {$0.758$$\scriptscriptstyle{\pm 0.001}$} & {$0.683$$\scriptscriptstyle{\pm 0.001}$} & {$0.497$$\scriptscriptstyle{\pm 0.002}$} \\
\makecell[l]{\footnotesize{\hodgegnn}} & {$0.500$$\scriptscriptstyle{\pm 0.000}$} & {$0.500$$\scriptscriptstyle{\pm 0.000}$} & {$0.458$$\scriptscriptstyle{\pm 0.001}$} \\
\makecell[l]{\footnotesize{\hodgegnninvariant}} & {$0.811$$\scriptscriptstyle{\pm 0.001}$} & {$0.754$$\scriptscriptstyle{\pm 0.005}$} & {$\textcolor{phaseshiftorange}{\textbf{0.293}}$$\scriptscriptstyle{\pm 0.002}$} \\
\makecell[l]{\footnotesize{\hodgegnnundirected}} & {$\textcolor{phaseshiftorange}{\textbf{0.819}}$$\scriptscriptstyle{\pm 0.001}$} & {$\textcolor{phaseshiftorange}{\textbf{0.799}}$$\scriptscriptstyle{\pm 0.008}$} & {$\textcolor{phaseshiftorange}{\textbf{0.293}}$$\scriptscriptstyle{\pm 0.003}$} \\
\makecell[l]{\footnotesize{\rebuttal{\transformer}}} & {$0.729$$\scriptscriptstyle{\pm 0.001}$} & {$0.502$$\scriptscriptstyle{\pm 0.002}$} & {$0.542$$\scriptscriptstyle{\pm 0.001}$} \\
\makecell[l]{\footnotesize{\rebuttal{\rossignn}}} & {$0.757$$\scriptscriptstyle{\pm 0.001}$} & {$0.768$$\scriptscriptstyle{\pm 0.004}$} & {$0.453$$\scriptscriptstyle{\pm 0.002}$} \\
\midrule
\makecell[l]{\footnotesize{\model}} & {$\textcolor{phaseshiftgreen}{\textbf{0.864}}$$\scriptscriptstyle{\pm 0.001}$} & {$\textcolor{phaseshiftgreen}{\textbf{0.996}}$$\scriptscriptstyle{\pm 0.001}$} & {$\textcolor{phaseshiftgreen}{\textbf{0.022}}$$\scriptscriptstyle{\pm 0.002}$} \\
\bottomrule
\end{tabular}

  }
\end{table}

\begin{table}[ht]
    \centering
    \caption{Average RMSE ($\downarrow$) of different for the denoising task on real-world datasets (\textcolor{phaseshiftgreen}{\textbf{best}} and \textcolor{phaseshiftorange}{\textbf{runner-up}}) with $95\%$ confidence intervals of the mean.}
    \label{tab:results_denoising}
  \vspace{0.2cm}
    \resizebox{0.7\linewidth}{!}{%
  \begin{tabular}{lccccc}
\toprule
\small{\textbf{Model}} & \small{Anaheim} & \small{Barcelona} & \small{Chicago} & \small{Winnipeg} & \makecell[c]{\small{\electricalcircuits}} \\
\midrule
\footnotesize{\mlp} & {$0.076$$\scriptscriptstyle{\pm 0.001}$} & {$0.063$$\scriptscriptstyle{\pm 0.001}$} & {$\textcolor{phaseshiftorange}{\textbf{0.028}}$$\scriptscriptstyle{\pm 0.000}$} & {$0.055$$\scriptscriptstyle{\pm 0.000}$} & {$0.462$$\scriptscriptstyle{\pm 0.009}$} \\
\makecell[l]{\footnotesize{\linegraphgnn}} & {$0.070$$\scriptscriptstyle{\pm 0.001}$} & {$0.059$$\scriptscriptstyle{\pm 0.000}$} & {$0.030$$\scriptscriptstyle{\pm 0.000}$} & {$0.054$$\scriptscriptstyle{\pm 0.000}$} & {$0.443$$\scriptscriptstyle{\pm 0.013}$} \\
\makecell[l]{\footnotesize{\hodgegnn}} & {$0.084$$\scriptscriptstyle{\pm 0.001}$} & {$0.063$$\scriptscriptstyle{\pm 0.000}$} & {$0.048$$\scriptscriptstyle{\pm 0.001}$} & {$0.069$$\scriptscriptstyle{\pm 0.000}$} & {$0.360$$\scriptscriptstyle{\pm 0.007}$} \\
\makecell[l]{\footnotesize{\hodgegnninvariant}} & {$0.062$$\scriptscriptstyle{\pm 0.001}$} & {$0.056$$\scriptscriptstyle{\pm 0.000}$} & {$0.030$$\scriptscriptstyle{\pm 0.000}$} & {$0.051$$\scriptscriptstyle{\pm 0.000}$} & {$0.346$$\scriptscriptstyle{\pm 0.011}$} \\
\makecell[l]{\footnotesize{\hodgegnnundirected}} & {$\textcolor{phaseshiftorange}{\textbf{0.053}}$$\scriptscriptstyle{\pm 0.001}$} & {$\textcolor{phaseshiftgreen}{\textbf{0.048}}$$\scriptscriptstyle{\pm 0.000}$} & {$\textcolor{phaseshiftgreen}{\textbf{0.025}}$$\scriptscriptstyle{\pm 0.000}$} & {$\textcolor{phaseshiftgreen}{\textbf{0.041}}$$\scriptscriptstyle{\pm 0.000}$} & {$\textcolor{phaseshiftgreen}{\textbf{0.262}}$$\scriptscriptstyle{\pm 0.009}$} \\
\makecell[l]{\footnotesize{\rebuttal{\transformer}}} & {$0.096$$\scriptscriptstyle{\pm 0.001}$} & {$0.066$$\scriptscriptstyle{\pm 0.000}$} & {$0.034$$\scriptscriptstyle{\pm 0.000}$} & {$0.062$$\scriptscriptstyle{\pm 0.000}$} & {$0.508$$\scriptscriptstyle{\pm 0.011}$} \\
\makecell[l]{\footnotesize{\rebuttal{\rossignn}}} & {$0.069$$\scriptscriptstyle{\pm 0.001}$} & {$0.066$$\scriptscriptstyle{\pm 0.001}$} & {$0.036$$\scriptscriptstyle{\pm 0.000}$} & {$0.061$$\scriptscriptstyle{\pm 0.001}$} & {$0.422$$\scriptscriptstyle{\pm 0.020}$} \\
\midrule
\makecell[l]{\footnotesize{\model}} & {$\textcolor{phaseshiftgreen}{\textbf{0.050}}$$\scriptscriptstyle{\pm 0.001}$} & {$\textcolor{phaseshiftorange}{\textbf{0.053}}$$\scriptscriptstyle{\pm 0.000}$} & {$0.039$$\scriptscriptstyle{\pm 0.001}$} & {$\textcolor{phaseshiftorange}{\textbf{0.050}}$$\scriptscriptstyle{\pm 0.001}$} & {$\textcolor{phaseshiftorange}{\textbf{0.266}}$$\scriptscriptstyle{\pm 0.017}$} \\
\bottomrule
\end{tabular}

  }
\end{table}

\begin{table}[ht]
    \centering
    \caption{Average RMSE ($\downarrow$) of different for the interpolation task on real-world datasets (\textcolor{phaseshiftgreen}{\textbf{best}} and \textcolor{phaseshiftorange}{\textbf{runner-up}}) with $95\%$ confidence intervals of the mean.}
    \label{tab:results_interpolation}
  \vspace{0.2cm}
    \resizebox{0.7\linewidth}{!}{%
  \begin{tabular}{lccccc}
\toprule
\small{\textbf{Model}} & \small{Anaheim} & \small{Barcelona} & \small{Chicago} & \small{Winnipeg} & \makecell[c]{\small{\electricalcircuits}} \\
\midrule
\footnotesize{\mlp} & {$0.103$$\scriptscriptstyle{\pm 0.001}$} & {$0.150$$\scriptscriptstyle{\pm 0.001}$} & {$0.110$$\scriptscriptstyle{\pm 0.002}$} & {$0.166$$\scriptscriptstyle{\pm 0.001}$} & {$1.030$$\scriptscriptstyle{\pm 0.034}$} \\
\makecell[l]{\footnotesize{\linegraphgnn}} & {$0.103$$\scriptscriptstyle{\pm 0.001}$} & {$0.148$$\scriptscriptstyle{\pm 0.002}$} & {$0.109$$\scriptscriptstyle{\pm 0.002}$} & {$0.165$$\scriptscriptstyle{\pm 0.001}$} & {$1.025$$\scriptscriptstyle{\pm 0.032}$} \\
\makecell[l]{\footnotesize{\hodgegnn}} & {$0.250$$\scriptscriptstyle{\pm 0.003}$} & {$0.166$$\scriptscriptstyle{\pm 0.001}$} & {$\textcolor{phaseshiftorange}{\textbf{0.105}}$$\scriptscriptstyle{\pm 0.001}$} & {$0.166$$\scriptscriptstyle{\pm 0.001}$} & {$0.997$$\scriptscriptstyle{\pm 0.034}$} \\
\makecell[l]{\footnotesize{\hodgegnninvariant}} & {$0.095$$\scriptscriptstyle{\pm 0.001}$} & {$0.144$$\scriptscriptstyle{\pm 0.001}$} & {$0.108$$\scriptscriptstyle{\pm 0.001}$} & {$0.145$$\scriptscriptstyle{\pm 0.001}$} & {$0.778$$\scriptscriptstyle{\pm 0.026}$} \\
\makecell[l]{\footnotesize{\hodgegnnundirected}} & {$\textcolor{phaseshiftorange}{\textbf{0.088}}$$\scriptscriptstyle{\pm 0.001}$} & {$\textcolor{phaseshiftorange}{\textbf{0.141}}$$\scriptscriptstyle{\pm 0.002}$} & {$0.112$$\scriptscriptstyle{\pm 0.002}$} & {$\textcolor{phaseshiftorange}{\textbf{0.133}}$$\scriptscriptstyle{\pm 0.001}$} & {$\textcolor{phaseshiftorange}{\textbf{0.753}}$$\scriptscriptstyle{\pm 0.029}$} \\
\makecell[l]{\footnotesize{\rebuttal{\transformer}}} & {$0.116$$\scriptscriptstyle{\pm 0.001}$} & {$0.151$$\scriptscriptstyle{\pm 0.002}$} & {$0.106$$\scriptscriptstyle{\pm 0.001}$} & {$0.171$$\scriptscriptstyle{\pm 0.001}$} & {$1.031$$\scriptscriptstyle{\pm 0.033}$} \\
\makecell[l]{\footnotesize{\rebuttal{\rossignn}}} & {$0.277$$\scriptscriptstyle{\pm 0.003}$} & {$0.169$$\scriptscriptstyle{\pm 0.002}$} & {$0.109$$\scriptscriptstyle{\pm 0.002}$} & {$0.160$$\scriptscriptstyle{\pm 0.002}$} & {$0.945$$\scriptscriptstyle{\pm 0.026}$} \\
\midrule
\makecell[l]{\footnotesize{\model}} & {$\textcolor{phaseshiftgreen}{\textbf{0.081}}$$\scriptscriptstyle{\pm 0.002}$} & {$\textcolor{phaseshiftgreen}{\textbf{0.125}}$$\scriptscriptstyle{\pm 0.002}$} & {$\textcolor{phaseshiftgreen}{\textbf{0.081}}$$\scriptscriptstyle{\pm 0.001}$} & {$\textcolor{phaseshiftgreen}{\textbf{0.100}}$$\scriptscriptstyle{\pm 0.001}$} & {$\textcolor{phaseshiftgreen}{\textbf{0.600}}$$\scriptscriptstyle{\pm 0.032}$} \\
\bottomrule
\end{tabular}

  }
\end{table}

\begin{table}[ht]
    \centering
    \caption{Average RMSE ($\downarrow$) of different for the simulation task on real-world datasets (\textcolor{phaseshiftgreen}{\textbf{best}} and \textcolor{phaseshiftorange}{\textbf{runner-up}}) with $95\%$ confidence intervals of the mean.}
    \label{tab:results_simulation}
  \vspace{0.2cm}
    \resizebox{0.7\linewidth}{!}{%
  \begin{tabular}{lccccc}
\toprule
\small{\textbf{Model}} & \small{Anaheim} & \small{Barcelona} & \small{Chicago} & \small{Winnipeg} & \makecell[c]{\small{\electricalcircuits}} \\
\midrule
\footnotesize{\mlp} & {$0.105$$\scriptscriptstyle{\pm 0.001}$} & {$0.149$$\scriptscriptstyle{\pm 0.001}$} & {$0.109$$\scriptscriptstyle{\pm 0.002}$} & {$0.167$$\scriptscriptstyle{\pm 0.001}$} & {$1.030$$\scriptscriptstyle{\pm 0.035}$} \\
\makecell[l]{\footnotesize{\linegraphgnn}} & {$0.101$$\scriptscriptstyle{\pm 0.001}$} & {$0.149$$\scriptscriptstyle{\pm 0.002}$} & {$0.109$$\scriptscriptstyle{\pm 0.002}$} & {$0.164$$\scriptscriptstyle{\pm 0.002}$} & {$1.037$$\scriptscriptstyle{\pm 0.038}$} \\
\makecell[l]{\footnotesize{\hodgegnn}} & {$0.280$$\scriptscriptstyle{\pm 0.003}$} & {$0.170$$\scriptscriptstyle{\pm 0.001}$} & {$0.107$$\scriptscriptstyle{\pm 0.001}$} & {$0.173$$\scriptscriptstyle{\pm 0.001}$} & {$1.016$$\scriptscriptstyle{\pm 0.034}$} \\
\makecell[l]{\footnotesize{\hodgegnninvariant}} & {$0.098$$\scriptscriptstyle{\pm 0.001}$} & {$0.146$$\scriptscriptstyle{\pm 0.001}$} & {$0.108$$\scriptscriptstyle{\pm 0.001}$} & {$0.151$$\scriptscriptstyle{\pm 0.001}$} & {$0.828$$\scriptscriptstyle{\pm 0.026}$} \\
\makecell[l]{\footnotesize{\hodgegnnundirected}} & {$\textcolor{phaseshiftorange}{\textbf{0.091}}$$\scriptscriptstyle{\pm 0.001}$} & {$\textcolor{phaseshiftorange}{\textbf{0.144}}$$\scriptscriptstyle{\pm 0.002}$} & {$0.109$$\scriptscriptstyle{\pm 0.001}$} & {$\textcolor{phaseshiftorange}{\textbf{0.132}}$$\scriptscriptstyle{\pm 0.001}$} & {$\textcolor{phaseshiftorange}{\textbf{0.760}}$$\scriptscriptstyle{\pm 0.030}$} \\
\makecell[l]{\footnotesize{\rebuttal{\transformer}}} & {$0.119$$\scriptscriptstyle{\pm 0.002}$} & {$0.151$$\scriptscriptstyle{\pm 0.002}$} & {$\textcolor{phaseshiftorange}{\textbf{0.105}}$$\scriptscriptstyle{\pm 0.001}$} & {$0.170$$\scriptscriptstyle{\pm 0.001}$} & {$1.027$$\scriptscriptstyle{\pm 0.032}$} \\
\makecell[l]{\footnotesize{\rebuttal{\rossignn}}} & {$0.278$$\scriptscriptstyle{\pm 0.002}$} & {$0.170$$\scriptscriptstyle{\pm 0.001}$} & {$0.106$$\scriptscriptstyle{\pm 0.001}$} & {$0.173$$\scriptscriptstyle{\pm 0.001}$} & {$1.029$$\scriptscriptstyle{\pm 0.033}$} \\
\midrule
\makecell[l]{\footnotesize{\model}} & {$\textcolor{phaseshiftgreen}{\textbf{0.090}}$$\scriptscriptstyle{\pm 0.002}$} & {$\textcolor{phaseshiftgreen}{\textbf{0.133}}$$\scriptscriptstyle{\pm 0.002}$} & {$\textcolor{phaseshiftgreen}{\textbf{0.078}}$$\scriptscriptstyle{\pm 0.001}$} & {$\textcolor{phaseshiftgreen}{\textbf{0.101}}$$\scriptscriptstyle{\pm 0.001}$} & {$\textcolor{phaseshiftgreen}{\textbf{0.696}}$$\scriptscriptstyle{\pm 0.038}$} \\
\bottomrule
\end{tabular}

  }
\end{table}

\begin{table}[ht]
    \centering
    \caption{Average MAE ($\downarrow$) of different for the simulation task on real-world datasets (\textcolor{phaseshiftgreen}{\textbf{best}} and \textcolor{phaseshiftorange}{\textbf{runner-up}}) with $95\%$ confidence intervals of the mean.}
    \label{tab:results_simulation_mae}
  \vspace{0.2cm}
    \resizebox{0.7\linewidth}{!}{%
  \begin{tabular}{lccccc}
\toprule
\small{\textbf{Model}} & \small{Anaheim} & \small{Barcelona} & \small{Chicago} & \small{Winnipeg} & \makecell[c]{\small{\electricalcircuits}} \\
\midrule
\footnotesize{\mlp} & {$0.069$$\scriptscriptstyle{\pm 0.001}$} & {$0.098$$\scriptscriptstyle{\pm 0.001}$} & {$0.066$$\scriptscriptstyle{\pm 0.001}$} & {$0.106$$\scriptscriptstyle{\pm 0.001}$} & {$0.542$$\scriptscriptstyle{\pm 0.014}$} \\
\makecell[l]{\footnotesize{\linegraphgnn}} & {$0.068$$\scriptscriptstyle{\pm 0.001}$} & {$0.095$$\scriptscriptstyle{\pm 0.001}$} & {$0.066$$\scriptscriptstyle{\pm 0.001}$} & {$0.104$$\scriptscriptstyle{\pm 0.001}$} & {$0.549$$\scriptscriptstyle{\pm 0.014}$} \\
\makecell[l]{\footnotesize{\hodgegnn}} & {$0.183$$\scriptscriptstyle{\pm 0.002}$} & {$0.102$$\scriptscriptstyle{\pm 0.001}$} & {$\textcolor{phaseshiftorange}{\textbf{0.065}}$$\scriptscriptstyle{\pm 0.001}$} & {$0.109$$\scriptscriptstyle{\pm 0.001}$} & {$0.514$$\scriptscriptstyle{\pm 0.013}$} \\
\makecell[l]{\footnotesize{\hodgegnninvariant}} & {$0.067$$\scriptscriptstyle{\pm 0.001}$} & {$0.098$$\scriptscriptstyle{\pm 0.001}$} & {$0.067$$\scriptscriptstyle{\pm 0.001}$} & {$0.100$$\scriptscriptstyle{\pm 0.001}$} & {$0.459$$\scriptscriptstyle{\pm 0.010}$} \\
\makecell[l]{\footnotesize{\hodgegnnundirected}} & {$\textcolor{phaseshiftorange}{\textbf{0.062}}$$\scriptscriptstyle{\pm 0.001}$} & {$\textcolor{phaseshiftorange}{\textbf{0.094}}$$\scriptscriptstyle{\pm 0.001}$} & {$0.066$$\scriptscriptstyle{\pm 0.001}$} & {$\textcolor{phaseshiftorange}{\textbf{0.089}}$$\scriptscriptstyle{\pm 0.001}$} & {$\textcolor{phaseshiftorange}{\textbf{0.420}}$$\scriptscriptstyle{\pm 0.011}$} \\
\makecell[l]{\footnotesize{\rebuttal{\transformer}}} & {$0.079$$\scriptscriptstyle{\pm 0.001}$} & {$0.097$$\scriptscriptstyle{\pm 0.001}$} & {$\textcolor{phaseshiftorange}{\textbf{0.065}}$$\scriptscriptstyle{\pm 0.001}$} & {$0.108$$\scriptscriptstyle{\pm 0.001}$} & {$0.536$$\scriptscriptstyle{\pm 0.012}$} \\
\makecell[l]{\footnotesize{\rebuttal{\rossignn}}} & {$0.181$$\scriptscriptstyle{\pm 0.002}$} & {$0.103$$\scriptscriptstyle{\pm 0.001}$} & {$\textcolor{phaseshiftorange}{\textbf{0.065}}$$\scriptscriptstyle{\pm 0.001}$} & {$0.109$$\scriptscriptstyle{\pm 0.001}$} & {$0.518$$\scriptscriptstyle{\pm 0.013}$} \\
\midrule
\makecell[l]{\footnotesize{\model}} & {$\textcolor{phaseshiftgreen}{\textbf{0.061}}$$\scriptscriptstyle{\pm 0.001}$} & {$\textcolor{phaseshiftgreen}{\textbf{0.088}}$$\scriptscriptstyle{\pm 0.001}$} & {$\textcolor{phaseshiftgreen}{\textbf{0.049}}$$\scriptscriptstyle{\pm 0.001}$} & {$\textcolor{phaseshiftgreen}{\textbf{0.069}}$$\scriptscriptstyle{\pm 0.001}$} & {$\textcolor{phaseshiftgreen}{\textbf{0.373}}$$\scriptscriptstyle{\pm 0.014}$} \\
\bottomrule
\end{tabular}

  }
\end{table}

\begin{table}[ht]
    \centering
    \caption{Average $R^2$ ($\uparrow$) of different for the simulation task on real-world datasets (\textcolor{phaseshiftgreen}{\textbf{best}} and \textcolor{phaseshiftorange}{\textbf{runner-up}}) with $95\%$ confidence intervals of the mean.}
    \label{tab:results_simulation_r2}
  \vspace{0.2cm}
    \resizebox{0.7\linewidth}{!}{%
  \begin{tabular}{lccccc}
\toprule
\small{\textbf{Model}} & \small{Anaheim} & \small{Barcelona} & \small{Chicago} & \small{Winnipeg} & \makecell[c]{\small{\electricalcircuits}} \\
\midrule
\footnotesize{\mlp} & {$0.890$$\scriptscriptstyle{\pm 0.003}$} & {$0.298$$\scriptscriptstyle{\pm 0.008}$} & {$0.003$$\scriptscriptstyle{\pm 0.015}$} & {$0.216$$\scriptscriptstyle{\pm 0.009}$} & {$0.116$$\scriptscriptstyle{\pm 0.011}$} \\
\makecell[l]{\footnotesize{\linegraphgnn}} & {$0.897$$\scriptscriptstyle{\pm 0.002}$} & {$0.319$$\scriptscriptstyle{\pm 0.012}$} & n.a. & n.a. & {$0.097$$\scriptscriptstyle{\pm 0.015}$} \\
\makecell[l]{\footnotesize{\hodgegnn}} & {$0.000$$\scriptscriptstyle{\pm 0.000}$} & {$0.000$$\scriptscriptstyle{\pm 0.000}$} & {$0.000$$\scriptscriptstyle{\pm 0.000}$} & {$0.000$$\scriptscriptstyle{\pm 0.000}$} & {$-0.039$$\scriptscriptstyle{\pm 0.014}$} \\
\makecell[l]{\footnotesize{\hodgegnninvariant}} & {$0.905$$\scriptscriptstyle{\pm 0.002}$} & {$0.339$$\scriptscriptstyle{\pm 0.006}$} & {$-0.025$$\scriptscriptstyle{\pm 0.009}$} & {$0.444$$\scriptscriptstyle{\pm 0.008}$} & {$0.599$$\scriptscriptstyle{\pm 0.015}$} \\
\makecell[l]{\footnotesize{\hodgegnnundirected}} & {$\textcolor{phaseshiftgreen}{\textbf{0.919}}$$\scriptscriptstyle{\pm 0.002}$} & {$\textcolor{phaseshiftorange}{\textbf{0.402}}$$\scriptscriptstyle{\pm 0.010}$} & {$-0.033$$\scriptscriptstyle{\pm 0.009}$} & {$\textcolor{phaseshiftorange}{\textbf{0.640}}$$\scriptscriptstyle{\pm 0.008}$} & {$\textcolor{phaseshiftorange}{\textbf{0.656}}$$\scriptscriptstyle{\pm 0.011}$} \\
\makecell[l]{\footnotesize{\rebuttal{\transformer}}} & {$0.855$$\scriptscriptstyle{\pm 0.004}$} & {$0.347$$\scriptscriptstyle{\pm 0.011}$} & {$\textcolor{phaseshiftorange}{\textbf{0.006}}$$\scriptscriptstyle{\pm 0.008}$} & {$0.213$$\scriptscriptstyle{\pm 0.009}$} & {$0.090$$\scriptscriptstyle{\pm 0.011}$} \\
\makecell[l]{\footnotesize{\rebuttal{\rossignn}}} & {$0.000$$\scriptscriptstyle{\pm 0.000}$} & {$0.000$$\scriptscriptstyle{\pm 0.000}$} & {$0.000$$\scriptscriptstyle{\pm 0.000}$} & {$0.000$$\scriptscriptstyle{\pm 0.000}$} & {$-0.014$$\scriptscriptstyle{\pm 0.023}$} \\
\midrule
\makecell[l]{\footnotesize{\model}} & {$\textcolor{phaseshiftorange}{\textbf{0.917}}$$\scriptscriptstyle{\pm 0.003}$} & {$\textcolor{phaseshiftgreen}{\textbf{0.535}}$$\scriptscriptstyle{\pm 0.013}$} & {$\textcolor{phaseshiftgreen}{\textbf{0.679}}$$\scriptscriptstyle{\pm 0.010}$} & {$\textcolor{phaseshiftgreen}{\textbf{0.806}}$$\scriptscriptstyle{\pm 0.005}$} & {$\textcolor{phaseshiftgreen}{\textbf{0.744}}$$\scriptscriptstyle{\pm 0.019}$} \\
\bottomrule
\end{tabular}

  }
\end{table}

\begin{table}[ht]
    \centering
    \caption{Ablation of different components of \model on synthetic tasks and simulation on real data (\textcolor{phaseshiftgreen}{\textbf{best}} and \textcolor{phaseshiftorange}{\textbf{runner-up}}) with $95\%$ confidence intervals of the mean. We omit (i) direction-awareness by setting $q=0$, (ii) the fusion operation of \Cref{eq:fusion,eq:fusion2}, (iii) the fusion operators using convolutions $\melequitoinv$ and $\melinvtoequi$}
    \label{tab:results_ablation}
  \vspace{0.2cm}
    \resizebox{0.7\linewidth}{!}{%
  \begin{tabular}{llcccc|c}
\toprule
 &\small{\textbf{Dataset}} & \makecell{\model \\ \tiny{w/o Direction}} & \makecell{\model \\ \tiny{No Fusion}} & \makecell{\model \\ \tiny{No Fusion-Conv.}} & \makecell{\model \\ \tiny{No $h$}} & \makecell[l]{\footnotesize{\model}} \\
\midrule
\multirow{2}{*}{\makecell{\rotatebox[origin=c]{90}{\tiny{AUC}}} $\uparrow$} & \makecell{\small{\randomwalkdenoising} } & {$0.762$$\scriptscriptstyle{\pm 0.001}$} & {$0.853$$\scriptscriptstyle{\pm 0.001}$} & {$0.845$$\scriptscriptstyle{\pm 0.001}$} & {$\textcolor{phaseshiftorange}{\textbf{0.862}}$$\scriptscriptstyle{\pm 0.001}$} & {$\textcolor{phaseshiftgreen}{\textbf{0.864}}$$\scriptscriptstyle{\pm 0.001}$} \\
 & \makecell{\small{\mixedlongestcycleidentification} } & {$0.689$$\scriptscriptstyle{\pm 0.002}$} & {$\textcolor{phaseshiftorange}{\textbf{0.987}}$$\scriptscriptstyle{\pm 0.001}$} & {$0.926$$\scriptscriptstyle{\pm 0.003}$} & {$\textcolor{phaseshiftgreen}{\textbf{0.996}}$$\scriptscriptstyle{\pm 0.000}$} & {$\textcolor{phaseshiftgreen}{\textbf{0.996}}$$\scriptscriptstyle{\pm 0.001}$} \\
\midrule
\multirow{6}{*}{\makecell{\rotatebox[origin=c]{90}{\tiny{RMSE}}} $\downarrow$} & \makecell{\small{\typedtrianglefloworientation} } & {$0.362$$\scriptscriptstyle{\pm 0.002}$} & {$0.088$$\scriptscriptstyle{\pm 0.002}$} & {$0.074$$\scriptscriptstyle{\pm 0.007}$} & {$\textcolor{phaseshiftorange}{\textbf{0.034}}$$\scriptscriptstyle{\pm 0.003}$} & {$\textcolor{phaseshiftgreen}{\textbf{0.022}}$$\scriptscriptstyle{\pm 0.002}$} \\
 & \small{Anaheim} & {$0.289$$\scriptscriptstyle{\pm 0.016}$} & {$\textcolor{phaseshiftorange}{\textbf{0.097}}$$\scriptscriptstyle{\pm 0.004}$} & {$0.283$$\scriptscriptstyle{\pm 0.009}$} & {$0.099$$\scriptscriptstyle{\pm 0.005}$} & {$\textcolor{phaseshiftgreen}{\textbf{0.090}}$$\scriptscriptstyle{\pm 0.002}$} \\
 & \small{Barcelona} & {$0.172$$\scriptscriptstyle{\pm 0.004}$} & {$\textcolor{phaseshiftorange}{\textbf{0.139}}$$\scriptscriptstyle{\pm 0.005}$} & {$0.177$$\scriptscriptstyle{\pm 0.004}$} & {$0.163$$\scriptscriptstyle{\pm 0.007}$} & {$\textcolor{phaseshiftgreen}{\textbf{0.133}}$$\scriptscriptstyle{\pm 0.002}$} \\
 & \small{Chicago} & {$\textcolor{phaseshiftorange}{\textbf{0.079}}$$\scriptscriptstyle{\pm 0.004}$} & {$0.093$$\scriptscriptstyle{\pm 0.003}$} & {$0.110$$\scriptscriptstyle{\pm 0.006}$} & {$0.082$$\scriptscriptstyle{\pm 0.005}$} & {$\textcolor{phaseshiftgreen}{\textbf{0.078}}$$\scriptscriptstyle{\pm 0.001}$} \\
 & \small{Winnipeg} & {$\textcolor{phaseshiftorange}{\textbf{0.132}}$$\scriptscriptstyle{\pm 0.005}$} & {$0.170$$\scriptscriptstyle{\pm 0.005}$} & {$0.175$$\scriptscriptstyle{\pm 0.004}$} & {$0.138$$\scriptscriptstyle{\pm 0.004}$} & {$\textcolor{phaseshiftgreen}{\textbf{0.101}}$$\scriptscriptstyle{\pm 0.001}$} \\
 & \makecell[l]{\small{\electricalcircuits}} & {$0.957$$\scriptscriptstyle{\pm 0.042}$} & {$0.974$$\scriptscriptstyle{\pm 0.043}$} & {$0.727$$\scriptscriptstyle{\pm 0.028}$} & {$\textcolor{phaseshiftorange}{\textbf{0.707}}$$\scriptscriptstyle{\pm 0.035}$} & {$\textcolor{phaseshiftgreen}{\textbf{0.696}}$$\scriptscriptstyle{\pm 0.038}$} \\
\bottomrule
\end{tabular}

  }
\end{table}

\begin{figure}[ht]
    \centering
    \input{figures/circuits_directed_edges_flow.pgf}
    \caption{Distribution of simulated traffic on the electrical circuits dataset for EIGN and a direction-agnostic variant
that sets q = 0. Direction-awareness leads to more
more plausible predictions, i.e. current that follows the constraints of diode components that only permit one flow direction.}
    \label{fig:electrical_circuits_directed_edges_flow}
\end{figure}

{\rebuttal{\textbf{Phase shift $q$.}}}
\rebuttal{\Cref{fig:rebuttal_q} showcases the performance of \model and two variants without inter-modality convolutions and fusion respectively for different relative phase shifts $q / m$. \model suffers from phase shifts that are picked too small as such shifts become too small to notice. In contrast, if the phase shift is picked too large, accumulated phase shifts may overshoot $2\pi$. For example, choosing $q = 2\pi k$ for $k \in \mathbb{N}$ is equivalent to applying a phase shift of $0$, i.e. no phase shift. However, even smaller values of $q$ may cause problems as discussed in \citep{geisler2023transformers}: Consider any pair of directed edges connected through a sequence of $L$ other directed edges (e.g. in a cycle). After repeatedly applying the Magnetic Laplacian operators we propose the relative phase shift of these two edges will be $L*q$. Since these relative phases are, however, taken modulo $2 \pi$ this introduces ambiguities if $L*q$ exceeds $2 \pi$.}

\rebuttal{To mitigate this issue, we choose $q = 1 / m$ which ensures such issues can not arise. In practice, this may be a conservative value and we, in fact,  also observe in \Cref{fig:rebuttal_q} that slightly larger values improve performance in interpolation and simulation problems. Interestingly, the pattern slightly deviates for the easier denoising task: We conjecture that there identifying directed edges may not play as large of a role as much of the information is already encoded in the noisy input signal.
}

\begin{figure}[ht]
    \centering
    \input{figures/rebuttal_q.pgf}
    \caption{\rebuttal{Ablation of the relative phase shift strength $q * m$ on the performance for all three tasks on the Circuits dataset. Phase shifts that are too small become hard to notice while too large phase shifts may accumulate to total phase shifts larger than $2\pi$.}}
    \label{fig:rebuttal_q}
\end{figure}

\rebuttal{
\textbf{Hidden Size and Number of Layers.}
\Cref{tab:hp_ablation} ablates different hyperparamter configurations of \model on the electrical circuits simulation task. In particular, we vary the learning rate, number of hidden dimensions and number of layers as described in \Cref{appendix:training_setup}. In general, we observe that both under-parametrized models (i.e. small number of layers and / or hidden dimension) and too larger learning rate lead to worse performance. However, models that are sufficiently deep and wide enough are consistently able to outperform the baselines in terms of RMSE.
}

\begin{table}[ht]
    \centering
    \caption{\rebuttal{
    Average RMSE ($\downarrow$) of the simluation task on the circuits dataset for \model at different hyperparameter configurations with $95\%$ confidence intervals of the mean. \textbf{Bold} numbers indicate improvements over all baselines. We vary the learning rate (LR), hidden dimension $d$ and number of layers.}}
    \label{tab:hp_ablation}
  \vspace{0.2cm}
    {%
  \begin{tabular}{l|l|ccc}
\toprule
\small{\rebuttal{\textbf{LR}}} & \small{\rebuttal{\textbf{Hidden Dim.}}} & \small{\rebuttal{\textbf{2-layers}}} & \small{\rebuttal{\textbf{3-layers}}} & \small{\rebuttal{\textbf{4-layers}}} \\
\midrule
\rebuttal{$0.001$} & \rebuttal{$8$} & \rebuttal{$1.003$$\scriptscriptstyle{\pm 0.033}$} & \rebuttal{$0.948$$\scriptscriptstyle{\pm 0.040}$} & \rebuttal{$0.799$$\scriptscriptstyle{\pm 0.036}$} \\
\rebuttal{$0.001$} & \rebuttal{$16$} & \rebuttal{$0.974$$\scriptscriptstyle{\pm 0.039}$} & \rebuttal{$0.917$$\scriptscriptstyle{\pm 0.044}$} & \rebuttal{$0.810$$\scriptscriptstyle{\pm 0.044}$} \\
\rebuttal{$0.001$} & \rebuttal{$32$} & \rebuttal{$0.983$$\scriptscriptstyle{\pm 0.037}$} & \rebuttal{$0.931$$\scriptscriptstyle{\pm 0.034}$} & \rebuttal{\bm{$0.746$}$\scriptscriptstyle{\pm \bm{0.042}}$} \\
\midrule
\rebuttal{$0.003$} & \rebuttal{$8$} & \rebuttal{$0.977$$\scriptscriptstyle{\pm 0.037}$} & \rebuttal{$0.827$$\scriptscriptstyle{\pm 0.034}$} & \rebuttal{\bm{$0.707$}$\scriptscriptstyle{\pm \bm{0.033}}$} \\
\rebuttal{$0.003$} & \rebuttal{$16$} & \rebuttal{$1.002$$\scriptscriptstyle{\pm 0.036}$} & \rebuttal{$0.834$$\scriptscriptstyle{\pm 0.038}$} & \rebuttal{\bm{$0.701$}$\scriptscriptstyle{\pm \bm{0.035}}$} \\
\rebuttal{$0.003$} & \rebuttal{$32$} & \rebuttal{$0.993$$\scriptscriptstyle{\pm 0.030}$} & \rebuttal{$0.837$$\scriptscriptstyle{\pm 0.034}$} & \rebuttal{\bm{$0.671$}$\scriptscriptstyle{\pm \bm{0.033}}$} \\
\midrule
\rebuttal{$0.01$} & \rebuttal{$8$} & \rebuttal{$0.944$$\scriptscriptstyle{\pm 0.040}$} & \rebuttal{$0.769$$\scriptscriptstyle{\pm 0.027}$} & \rebuttal{\bm{$0.675$}$\scriptscriptstyle{\pm \bm{0.031}}$} \\
\rebuttal{$0.01$} & \rebuttal{$16$} & \rebuttal{$0.965$$\scriptscriptstyle{\pm 0.035}$} & \rebuttal{$0.776$$\scriptscriptstyle{\pm 0.045}$} & \rebuttal{\bm{$0.716$}$\scriptscriptstyle{\pm \bm{0.030}}$} \\
\rebuttal{$0.01$} & \rebuttal{$32$} & \rebuttal{$0.986$$\scriptscriptstyle{\pm 0.031}$} & \rebuttal{$0.828$$\scriptscriptstyle{\pm 0.040}$} & \rebuttal{$0.896$$\scriptscriptstyle{\pm 0.039}$} \\
\midrule
\rebuttal{$0.03$} & \rebuttal{$8$} & \rebuttal{$0.909$$\scriptscriptstyle{\pm 0.033}$} & \rebuttal{$0.939$$\scriptscriptstyle{\pm 0.032}$} & \rebuttal{$0.966$$\scriptscriptstyle{\pm 0.033}$} \\
\rebuttal{$0.03$} & \rebuttal{$16$} & \rebuttal{$0.971$$\scriptscriptstyle{\pm 0.024}$} & \rebuttal{$0.994$$\scriptscriptstyle{\pm 0.032}$} & \rebuttal{$1.023$$\scriptscriptstyle{\pm 0.033}$} \\
\rebuttal{$0.03$} & \rebuttal{$32$} & \rebuttal{$0.968$$\scriptscriptstyle{\pm 0.035}$} & \rebuttal{$1.016$$\scriptscriptstyle{\pm 0.033}$} & \rebuttal{$0.994$$\scriptscriptstyle{\pm 0.031}$} \\
\bottomrule
\end{tabular}

  }
\end{table}

\rebuttal{\subsection{Variations on \model and Baselines}}

\rebuttal{\textbf{GCN-like Convolutions.} Instead of using our proposed (Magnetic) Laplacians as graph-shift operators according to \Cref{eq:convolutionequi,eq:convolutioninv}, we also ablate an operator $\mA^{(.)}_{(.)}$ that is defined as $\mA^{(.)}_{(.)} = \mI - \mL^{(.)}_{(.)}/2$ akin to GCNs \citet{kipf2017semisupervised} in \Cref{tab:results_simulation_ablation_large} In particular, we replace each Laplacian with its corresponding GCN-like counterpart  $\mA^{(.)}_{(.)}$ and leave all other components of \model unaffected.}

\rebuttal{\textbf{Chebyshev Convolutions.} Similarly, following a recent line of work on using polynomials of the Node Laplacian as graph-shift operator \citet{defferrard2016convolutional}. In particular, for each Magnetic Laplacian operator $\mL^{(q)}_{(.)}$ we instead convole an input signal $\mH_{(.)}$ of each respective modality with a Chebyshev polynomial of order $k=5$. The resulting filter can be defined recursively as follows:
\begin{align}
    \mC_{(.)}^{(1)} &= \mH_{(.)} \\
    \mC_{(.)}^{(2)} &= \mL^{(q)}_{(.)} \mH_{(.)} \\
    \mC_{(.)}^{(k)} &= 2\hat{\mL}^{(q)}_{(.)} \mC_{(.)}^{(k-1)} - \mC_{(.)}^{(k-2)}
\end{align}
For the convolution operators within a signal modality $\melequi$ and $\melinv$, we set $\hat{\mL}^{(q)}_\equi = \melequi$ and $\hat{\mL}^{(q)}_\inv = \melinv$. For the inter-modality Laplacians, however, using higher powers would break joint orientation equivariance and invariance respectively. That is, because the operators transform one signal modality into the other and repeated application would apply the wrong Laplacian to different modalities. We therefore only apply the inter-modality Laplacians for the first term $\mC^{(2)}_{(.)}$ and use the Laplacians of the target modality for higher order terms, i.e. we set $\hat{\mL}^{(q)}_{\equi \rightarrow \inv} = \melinv$ and $\hat{\mL}^{(q)}_{\inv \rightarrow \equi} = \melequi$ respectively.
}

\rebuttal{
\textbf{Large Baselines.} While the search space over which we optimize the hyperparameters for each baseline is shared different architectures can result in models with different parameter counts even for the same hyperparameter settings. To that end, we evaluate our baselines for a hidden dimension of $d=80$ for four layers. These models roughly have the same number of parameters as \model. In \Cref{tab:results_simulation_ablation_large}, we find that even for comparable parameter counts \model outperforms the competitors. This underlines that it is the inductive biases that carefully follow from our theoretical considerations which are responsible for the high efficacy of \model and not its larger number of parameters.
}

\begin{table}[ht]
    \centering
    \caption{Average RMSE ($\downarrow$) of different for the simulation task on real-world datasets (\textcolor{phaseshiftgreen}{\textbf{best}} and \textcolor{phaseshiftorange}{\textbf{runner-up}}) with $95\%$ confidence intervals of the mean for baselines with paramater counts comparable to \model as well as variations on \model.}
    \label{tab:results_simulation_ablation_large}
  \vspace{0.2cm}
    {%
  \begin{tabular}{llccccc}
\toprule
\small{\textbf{Model}} & \rebuttal{\small{\# params}} &\small{Anaheim} & \small{Barcelona} & \small{Chicago} & \small{Winnipeg} & \makecell[c]{\small{\electricalcircuits}} \\
\midrule
\footnotesize{\mlp} & \rebuttal{209} & {$0.105$$\scriptscriptstyle{\pm 0.001}$} & {$0.149$$\scriptscriptstyle{\pm 0.001}$} & {$0.109$$\scriptscriptstyle{\pm 0.002}$} & {$0.167$$\scriptscriptstyle{\pm 0.001}$} & {$1.030$$\scriptscriptstyle{\pm 0.035}$} \\
\makecell[l]{\footnotesize{\linegraphgnn}} & \rebuttal{753} & {$0.101$$\scriptscriptstyle{\pm 0.001}$} & {$0.149$$\scriptscriptstyle{\pm 0.002}$} & {$0.109$$\scriptscriptstyle{\pm 0.002}$} & {$0.164$$\scriptscriptstyle{\pm 0.002}$} & {$1.037$$\scriptscriptstyle{\pm 0.038}$} \\
\makecell[l]{\footnotesize{\hodgegnn}} & \rebuttal{2k} & {$0.280$$\scriptscriptstyle{\pm 0.003}$} & {$0.170$$\scriptscriptstyle{\pm 0.001}$} & {$0.107$$\scriptscriptstyle{\pm 0.001}$} & {$0.173$$\scriptscriptstyle{\pm 0.001}$} & {$1.016$$\scriptscriptstyle{\pm 0.034}$} \\
\makecell[l]{\footnotesize{\hodgegnninvariant}} & \rebuttal{2k} & {$0.098$$\scriptscriptstyle{\pm 0.001}$} & {$0.146$$\scriptscriptstyle{\pm 0.001}$} & {$0.108$$\scriptscriptstyle{\pm 0.001}$} & {$0.151$$\scriptscriptstyle{\pm 0.001}$} & {$0.828$$\scriptscriptstyle{\pm 0.026}$} \\
\makecell[l]{\footnotesize{\hodgegnnundirected}} & \rebuttal{2k} & {$0.091$$\scriptscriptstyle{\pm 0.001}$} & {$\textcolor{phaseshiftorange}{\textbf{0.144}}$$\scriptscriptstyle{\pm 0.002}$} & {$0.109$$\scriptscriptstyle{\pm 0.001}$} & {$\textcolor{phaseshiftorange}{\textbf{0.132}}$$\scriptscriptstyle{\pm 0.001}$} & {$0.760$$\scriptscriptstyle{\pm 0.030}$} \\
\makecell[l]{\footnotesize{\rebuttal{\transformer}}} & \rebuttal{554k} & {$0.119$$\scriptscriptstyle{\pm 0.002}$} & {$0.151$$\scriptscriptstyle{\pm 0.002}$} & {$0.105$$\scriptscriptstyle{\pm 0.001}$} & {$0.170$$\scriptscriptstyle{\pm 0.001}$} & {$1.027$$\scriptscriptstyle{\pm 0.032}$} \\
\makecell[l]{\footnotesize{\rebuttal{\rossignn}}} & \rebuttal{42k} & {$0.278$$\scriptscriptstyle{\pm 0.002}$} & {$0.170$$\scriptscriptstyle{\pm 0.001}$} & {$0.106$$\scriptscriptstyle{\pm 0.001}$} & {$0.173$$\scriptscriptstyle{\pm 0.001}$} & {$1.029$$\scriptscriptstyle{\pm 0.033}$} \\
\midrule
\rebuttal{\footnotesize{\mlp}-L} & \rebuttal{20k} & {$0.109$$\scriptscriptstyle{\pm 0.005}$} & {$0.146$$\scriptscriptstyle{\pm 0.004}$} & {$0.115$$\scriptscriptstyle{\pm 0.006}$} & {$0.166$$\scriptscriptstyle{\pm 0.005}$} & {$1.008$$\scriptscriptstyle{\pm 0.029}$} \\
\rebuttal{\makecell[l]{\footnotesize{\linegraphgnn}}-L} & \rebuttal{40k} & {$0.103$$\scriptscriptstyle{\pm 0.005}$} & {$0.152$$\scriptscriptstyle{\pm 0.006}$} & {$0.112$$\scriptscriptstyle{\pm 0.007}$} & {$0.163$$\scriptscriptstyle{\pm 0.005}$} & {$0.994$$\scriptscriptstyle{\pm 0.036}$} \\
\rebuttal{\makecell[l]{\footnotesize{\hodgegnn}}-L} & \rebuttal{39k} & {$0.279$$\scriptscriptstyle{\pm 0.007}$} & {$0.172$$\scriptscriptstyle{\pm 0.004}$} & {$0.109$$\scriptscriptstyle{\pm 0.005}$} & {$0.173$$\scriptscriptstyle{\pm 0.004}$} & {$1.035$$\scriptscriptstyle{\pm 0.036}$} \\
\rebuttal{\makecell[l]{\footnotesize{\hodgegnninvariant}}-L} & \rebuttal{39k} & {$0.099$$\scriptscriptstyle{\pm 0.004}$} & {$0.146$$\scriptscriptstyle{\pm 0.004}$} & {$0.109$$\scriptscriptstyle{\pm 0.006}$} & {$0.157$$\scriptscriptstyle{\pm 0.005}$} & {$0.847$$\scriptscriptstyle{\pm 0.030}$} \\
\rebuttal{\makecell[l]{\footnotesize{\hodgegnnundirected}}-L} & \rebuttal{40k} & {$0.098$$\scriptscriptstyle{\pm 0.004}$} & {$0.152$$\scriptscriptstyle{\pm 0.006}$} & {$0.115$$\scriptscriptstyle{\pm 0.006}$} & {$0.135$$\scriptscriptstyle{\pm 0.004}$} & {$0.792$$\scriptscriptstyle{\pm 0.028}$} \\
\midrule
\rebuttal{\makecell[l]{\footnotesize{\model-GCN}}} & \rebuttal{36k} & {$0.104$$\scriptscriptstyle{\pm 0.004}$} & {$0.146$$\scriptscriptstyle{\pm 0.007}$} & {$0.111$$\scriptscriptstyle{\pm 0.006}$} & {$0.139$$\scriptscriptstyle{\pm 0.004}$} & {$0.869$$\scriptscriptstyle{\pm 0.034}$} \\
\rebuttal{\makecell[l]{\footnotesize{\model-Cheb}}} & \rebuttal{134k} & {$\textcolor{phaseshiftgreen}{\textbf{0.078}}$$\scriptscriptstyle{\pm 0.006}$} & {$0.159$$\scriptscriptstyle{\pm 0.007}$} & {$\textcolor{phaseshiftgreen}{\textbf{0.068}}$$\scriptscriptstyle{\pm 0.004}$} & {$\textcolor{phaseshiftgreen}{\textbf{0.101}}$$\scriptscriptstyle{\pm 0.005}$} & {$\textcolor{phaseshiftorange}{\textbf{0.705}}$$\scriptscriptstyle{\pm 0.037}$} \\
\makecell[l]{\footnotesize{\model}} & \rebuttal{36k} & {$\textcolor{phaseshiftorange}{\textbf{0.090}}$$\scriptscriptstyle{\pm 0.002}$} & {$\textcolor{phaseshiftgreen}{\textbf{0.133}}$$\scriptscriptstyle{\pm 0.002}$} & {$\textcolor{phaseshiftorange}{\textbf{0.078}}$$\scriptscriptstyle{\pm 0.001}$} & {$\textcolor{phaseshiftgreen}{\textbf{0.101}}$$\scriptscriptstyle{\pm 0.001}$} & {$\textcolor{phaseshiftgreen}{\textbf{0.696}}$$\scriptscriptstyle{\pm 0.038}$} \\
\bottomrule
\end{tabular}

  }
\end{table}

\end{document}